\newcommand{\opt}{{\omega^*}}
\newcommand{\wt}[1]{\widetilde{#1}}
\newcommand{\wh}[1]{\widehat{#1}}
\newcommand{\wo}[1]{\overline{#1}}
\newcommand{\wb}[1]{\overline{#1}}
\DeclareMathOperator*{\argmax}{arg\,max}
\newcommand{\TS}{\text{TS}}
\newcommand{\OFU}{\text{OFU}}
\newcommand{\PS}{\text{PS}}
\newcommand{\DQN}{\textsc{\small{DQN}}\xspace}
\newcommand{\DDQN}{\textsc{\small{DDQN}}\xspace}
\newcommand{\bdqn}{\textsc{\small{BDQN}}\xspace}
\newcommand{\PSRL}{\textsc{\small{PSRL}}\xspace}
\newcommand{\LinPSRL}{\textsc{\small{LinPSRL}}\xspace}
\newcommand{\LinUCB}{\textsc{\small{LinUCB}}\xspace}
\renewcommand{\Re}{\mathbb{R}}
\renewcommand{\S}{\mathcal{S}}
\newcommand{\F}{\mathcal F}
\newcommand{\A}{\mathcal A}
\newcommand{\X}{\mathcal X}
\newcommand{\D}{\mathcal D}
\renewcommand{\L}{\mathcal L}
\newcommand{\y}{\textbf{y}}
\newcommand{\C}{\mathcal C}
\newcommand{\E}{\mathbb E}
\newcommand{\Prob}{\mathbb P}
\newcommand{\I}{\mathbb{I}}
\newcommand{\N}{\mathcal N}
\newcommand{\R}{\mathcal{R}}
\newtheorem{lemma}{Lemma}
\newtheorem{assumption}{Assumption}
\newtheorem{theorem}{Theorem}
\title{Efficient Exploration through Bayesian  Deep Q-Networks}
\begin{document}
\author{
% \hspace{-1.5cm}
  Kamyar Azizzadenesheli, Animashree Anandkumar\\
%   \begin{flushleft}
%   \hspace{-1.7cm}
  Caltech
% \\
%   \href{mailto:kazizzad@caltech.edu}{\nolinkurl{kazizzad@caltech.edu}},
%   \href{mailto:bcyang@stanford.edu}{\nolinkurl{bcyang@stanford.edu}},
%     \href{mailto:wetliu@ucdavis.edu}{\nolinkurl{wetliu@ucdavis.edu}},
%     \href{mailto:zlipton@cmu.edu}{\nolinkurl{zlipton@cmu.edu}},
%     \href{mailto:anima@caltech.edu}{\nolinkurl{anima@caltech.edu}}
}

\date{}
\maketitle

\begin{abstract}
We study reinforcement learning (RL) in high dimensional episodic Markov decision processes (MDP). We consider value-based RL when the optimal Q-value  is a linear function of $d$-dimensional state-action feature representation. For instance, in deep-Q networks (\DQN), the Q-value is a linear function of the feature representation layer (output layer). We propose two algorithms, one based on optimism, \LinUCB,  and another based on posterior sampling, \LinPSRL. We guarantee frequentist and Bayesian regret upper bounds of $\wt{\mathcal{O}}(d\sqrt{T})$   for these two algorithms, where  $T$ is the number of episodes.
We extend these methods to deep RL and propose Bayesian deep Q-networks ($\bdqn$), which uses an efficient Thompson sampling algorithm for high dimensional RL. We deploy the double \DQN (\DDQN) approach, and instead of learning the last layer of Q-network using linear regression, we use Bayesian linear regression, resulting in an approximated posterior over Q-function. This allows us to directly incorporate the uncertainty over the Q-function and deploy Thompson sampling on the learned posterior distribution resulting in efficient exploration/exploitation trade-off. 
%
%
% We extend these methods to deep RL and propose Bayesian deep Q-networks ($\bdqn$), an efficient Thompson sampling method that directly incorporates the uncertainty over the Q-values. To implement this, we start with the double \DQN (\DDQN) approach, and we use Bayesian linear regression in the last layer  to obtain approximate posterior distribution over the Q-values. 
%
We empirically study the behavior of \bdqn on a wide range of Atari games. Since \bdqn carries out more efficient exploration and exploitation, it is able to reach higher return substantially faster compared to \DDQN.
\end{abstract}
%%%%%%%%%%%%%%%%%%%%%%

%We study reinforcement learning (RL) in high dimensional Markov decision process (MDP) \aacomment{i vote for removing this sentence. it has no information and u can just jump to next one}.

%This allows us to directly incorporate the uncertainty over the Q-values and deploy Thompson sampling on the learned posterior distribution ending with efficient exploration/exploitation trade-off.

 \section{Introduction}
One of the central challenges in reinforcement learning (RL) is to design algorithms with efficient exploration-exploitation trade-off that scale to high-dimensional state and action spaces. Recently, deep RL has shown significant promise in tackling high-dimensional (and continuous) environments. These successes are mainly demonstrated in simulated domains where exploration is inexpensive and simple exploration-exploitation approaches such as $\varepsilon$-greedy or Boltzmann strategies are deployed. $\varepsilon$-greedy  chooses the  greedy action with $1-\varepsilon$ probability and randomizes over all the actions, and does not consider  the estimated Q-values or its uncertainties. The Boltzmann strategy considers the estimated Q-values to guide the decision making but still does not exploit their uncertainties in estimation. For complex environments, more statistically efficient strategies are required. One such strategy is optimism in the face of uncertainty (\OFU), where we follow the decision suggested by the optimistic estimation of the environment and guarantee efficient exploration/exploitation strategies. Despite compelling theoretical results, these methods are mainly model based and limited to tabular settings~\citep{jaksch2010near,auer2003using}.
% \aacomment{why? spell it out. it is crucial}
% \kamyar{added model based}

An alternative to \OFU~is posterior sampling (\PS), or more general randomized approach, is Thompson Sampling ~\citep{thompson1933likelihood} which, under the Bayesian framework, maintains a posterior distribution over the environment model, see Table~\ref{table:reasons}. %\aacomment{can u please explicitly state difference between PS and TS here? say how one is approximation}
Thompson sampling has shown strong performance in many low dimensional settings such as multi-arm bandits~\citep{chapelle2011empirical} and small tabular MDPs~\citep{osband2013more}.  Thompson sampling requires  sequentially sampling of the models from the (approximate) posterior or uncertainty and to act according to the sampled models to trade-off exploration and exploitation. 
However, the computational costs in posterior computation and planning become intractable as the problem dimension grows. 

To mitigate the computational bottleneck,~\citet{osband2014generalization} consider episodic and tabular MDPs where the optimal Q-function is linear in the state-action representation. They deploy Bayesian linear regression (BLR)~\citep{rasmussen2006gaussian} to construct an approximated posterior distributing over the Q-function and employ Thompson sampling for exploration/exploitation. The authors guarantee an order optimal regret upper bound on the tabular MDPs in the presence of a Dirichlet prior on the model parameters. Our paper is a high dimensional and general extension of \citep{osband2014generalization}.
% \aacomment{and we give theoretical bounds which they don't}
% \kamyar{they also give theoretical bound}

\newcommand{\cmark}{\ding{51}}
\newcommand{\xmark}{\ding{55}}
\begin{table*}[ht]
  \centering
  \caption{Thompson Sampling, similar to \OFU~and \PS, incorporates the estimated Q-values, including the greedy actions, and uncertainties to guide exploration-exploitation trade-off. $\varepsilon$-greedy and Boltzmann exploration fall short in properly incorporating them. $\varepsilon$-greedy considers the most greedy action, and Boltzmann exploration just exploit the estimated returns. Full discussion in Appendix~\ref{apx:TSvsE}.}
  \footnotesize
%   \hspace*{-1.cm}
  \begin{tabular}{l|c|c|c}
    % \multicolumn{1}{c}{}&\multicolumn{2}{c}{} \\
    \toprule
 Strategy  & Greedy-Action  & Estimated Q-values  &  Estimated uncertainties \\
  \midrule
$\varepsilon$-greedy &\cmark &\xmark&\xmark \\
\midrule
Boltzmann exploration &\cmark&\cmark&\xmark \\
\midrule
Thompson Sampling &\cmark&\cmark&\cmark \\
    \bottomrule
  \end{tabular}
  \label{table:reasons}
\end{table*}

%%%%%%%%%%%%%%%%%%%

%as continuous RL problem with episode length of one,
While the study of RL in general MDPs is challenging, recent advances in the understanding of linear bandits, as an episodic MDPs with episode length of one, allows tackling high dimensional environment. This class of RL problems is known as LinReL. In linear bandits, both \OFU~\citep{abbasi-yadkori2011improved, li2010contextual} and Thompson sampling ~\citep{russo2014learning,agrawal2013thompson,abeille2017linear}  guarantee promising results for high dimensional problems. In this paper, we extend LinReL to MDPs.

% While general MDPs are challenging, results can be obtained in special settings. Recent advances in the study of linear bandits,  known as \textit{LinRel}, allows tackling high dimensional environments. In linear bandit, the Q function is a linear function of actions and states(context). In linear bandits, both \OFU~\citep{abbasi-yadkori2011improved, li2010contextual} and Thompson sampling ~\citep{russo2014learning,agrawal2013thompson,abeille2017linear}  have promising results for high dimensional problems. Our paper can be seen as an extension of LinRel from linear bandits to MDPs.

% \subsection{Contributions:}
\noindent{\bf Contribution 1 -- Bayesian and frequentist regret analysis:}
We study RL in episodic MDPs where the optimal Q-function is a linear function of a $d$-dimensional feature representation of state-action pairs. We propose two algorithms, \LinPSRL, a Bayesian method using \PS, and \LinUCB, a frequentist method using \OFU. \LinPSRL constructs a posterior distribution over the linear parameters of the Q-function. At the beginning of each episode, \LinPSRL draws a sample from the posterior then acts optimally according to that model. \LinUCB constructs the upper confidence bound on the linear parameters and in each episode acts optimally with respect to the  optimistic model. We provide theoretical performance guarantees and show that after $T$ episodes, the Bayesian regret of \LinPSRL and the frequentist regret of \LinUCB are both upper bounded by $\widetilde{\mathcal{O}}(d\sqrt{T})$\footnote{The dependency in the episode length is more involved and details are in Section~\ref{sec:Theory}.}.

\noindent{\bf Contribution 2 -- From theory to practice: } 
While both \LinUCB and \LinPSRL are statistically designed for high dimensional RL, their computational complexity can make them practically infeasible, e.g., maintaining the posterior can  become intractable. To mitigate this shortcoming, we propose a unified method based on the BLR approximation of these two methods. This unification is inspired by the analyses in ~\citet{abeille2017linear,abbasi-yadkori2011improved} for linear bandits. 
1) For \LinPSRL: we deploy BLR to approximate the posterior distribution over the Q-function using conjugate Gaussian prior and likelihood. In tabular MDP, this approach turns out to be   similar to \citet{osband2014generalization}\footnote{We refer the readers to this work for an empirical study of BLR on tabular environment.}.
2) For \LinUCB: we deploy BLR to fit a Gaussian distribution to the frequentist upper confidence bound constructed in \OFU~(Fig 2 in \citet{abeille2017linear}). These two approximation procedures result in the same Gaussian distribution, and therefore, the same algorithm. Finally, we deploy Thompson sampling on this approximated distribution over the Q-functions. While it is clear that this approach is an approximation to \PS, \citet{abeille2017linear} show that this approach is also an approximation to \OFU~\footnote{An extra $\sqrt{d}$ expansion of the Gaussian approximation is required for the theoretical analysis.
% \aacomment{too much of a detail and disconnected. reader can't understand what this covariance is. suggest dropping it}\kamyar{I suggest to make it clearer}
}. For practical use, we extend this unified algorithm to deep RL, as described below. 
% It is worth restating that this approach is similar to \citet{osband2014generalization} in tabular MDP. We refer the readers to this work for an empirical study on tabular environment.

\noindent{\bf Contribution 3 -- Design of \bdqn: } 
We introduce Bayesian Deep Q-Network (\bdqn), a Thompson sampling based deep RL algorithm, as an extension of our theoretical development to deep neural networks. We follow the \DDQN~\citep{van2016deep} architecture and train the Q-network in the same way except for the last layer (the linear model) where we use BLR instead of linear regression. We deploy Thompson sampling on the approximated posterior of the Q-function to balance between exploration and exploitation. Thus, \bdqn requires a simple and a minimal modification to the standard \DDQN implementation.

We empirically study the behavior of \bdqn on a wide range of Atari games~\citep{bellemare2013arcade,machado2017revisiting}. Since \bdqn follows an efficient exploration-exploitation strategy, it reaches much higher cumulative rewards in fewer interactions, compared to its $\varepsilon$-greedy predecessor \DDQN. We empirically observed that \bdqn achieves \DDQN performance in less than 5M$\pm$1M interactions for almost half of the games while the cumulative reward improves by a  median of $300\%$ with a maximum of  $80K\%$ on all games. Also, \bdqn has $300\%\pm40\%$ (mean and standard deviation) improvement over these games on the area under the performance measure. Thus, \bdqn achieves better sample complexity due to a better exploration/exploitation trade-off.

%We present these results in Section~\ref{sec:Exp}. 

%In addition to the theoretical guarantee, \bdqn has several desirable features such as faster learning and better sample complexity due to a better exploration-exploitation trade-off.  It is worth noting that Atari games have a finite action set while \LinUCB and \LinPSRL provide a performance guarantee for the general class of problems including continuous actions MDPs. 

%
\noindent{\bf Comparison: } Recently, many works have studied efficient exploration/exploitation in high dimensional environments.~\citep{lipton2016efficient} proposes a variational inference-based approach to help the exploration.~\citet{bellemare2016unifying} proposes a   surrogate for optimism.~\citet{osband2016deep} proposes an ensemble of many \DQN models. These approaches are significantly more expensive than \DDQN~\citep{osband2016deep} while also require a massive hyperparameter tuning effort~\citep{bellemare2016unifying}. 

In contrast, our approach has the following desirable properties: 
1) \textbf{Computation}: \bdqn nearly has a same computational complexity as \DDQN since there is no backpropagation in the last layer of \bdqn (therefore faster), but instead, there is a BLR update which requires inverting a small $512\times512$ matrix (order of less than a second), once in a while.
2) \textbf{Hyperparameters}: no exhaustive hyper-parameter tuning. We spent less than two days of academic level GPU time on hyperparameter tuning of BLR in \bdqn which is another evidence on its significance. 
3) \textbf{Reproducibility}: All the codes, with detailed comments and explanations, are publicly available.

\section{Linear Q-function}\label{sec:Theory}

\subsection{Preliminaries}
Consider an episodic MDP $M:=\langle \X, \A, P,P_0, R,\gamma,H\rangle$, with horizon length $H$, state space $\X$, closed action set $\A$, transition kernel $P$, initial state distribution $P_0$, reward distribution $R$, discount factor $0\leq\gamma\leq 1$. For any natural number $H$,  $[H]=\lbrace 1,2, \ldots,H\rbrace$. The time step within the episode, $h$, is encoded in the state, i.e., $\X^h$ and $\A^h,~\forall h\in[H]$. We drop $h$ in state-action definition for brevity.
$\|\cdot\|_2$ denotes the spectral norm and for any positive definite matrix $\chi$, $\|\cdot\|_{\chi}$ denotes the  $\chi$ matrix-weighted spectral norm. At state $x^h$ of time step $h$,  we define Q-function as agent's expected return after taking action $a^h$ and following policy $\pi$, a mapping from state to action.

\begin{align*}
    Q(x^h,a^h) = \E\left[R(x^h,a^h) + \gamma  Q(x^{h+1},\pi(x^{h+1}))\Big|x^h,a^h\right]
\end{align*}
Following the Bellman optimality in MDPs, we have that for the optimal Q-function
\begin{align*}
    Q^*(x^h,a^h) = \E\left[R(x^h,a^h) + \gamma  \max_{a\in\A} Q^*(x^{h+1},a)\Big|x^h,a^h\right]
\end{align*}

We consider MDPs where the optimal $Q$-function, similar to linear bandits, is linear in state-action representations $\phi(\cdot,\cdot):=\X\times\A\rightarrow \R^d$, i.e., $Q^{\opt}_{{\pi^*}}(x^h,a^h):=\phi(x^h,a^h)^\top\opt^h,~\forall x^h,a^h\in\X\times \A$. $\opt$ denotes the set of $\opt^h\in\Re^d~\forall h\in[H]$, representing the environment and $\pi^*$ the set of ${\pi^*}^h:\X\rightarrow \A$ with ${\pi^*}^h(x):=\arg\max_{a\in\A}Q^{{\opt}^h}_{{\pi^*}}(x^h,a^h)$. $V^{\opt}_{{\pi^*}}$ denotes the corresponding value function. 

\subsection{LinReL}
At each time step $h$ and state action pair $x^{h},a^{h}$, define $\nu^h$ a mean zero random variable that captures stochastic reward and transition at time step $h$:
\begin{equation*}
\phi(x^{h},a^{h})^\top\opt^{h} + \nu^h =R^h+\gamma	\phi(x^{h+1},{\pi^*}^{h+1}(x^{h+1}))^\top\opt^{h+1}
\end{equation*}
where $R^h$ is the reward at time step $h$. Definition of $\nu^h$ plays an important role since knowing $\opt^{h+1}$ and ${\pi^*}^{h+1}$ reduces the learning of $\opt^{h}$ to the standard Martingale based linear regression problem. Of course we neither have $\opt^{h+1}$ nor have ${\pi^*}^{h+1}$.

\textbf{\LinPSRL}(Algorithm~\ref{alg:psrl}): In this Bayesian approach, the agent maintains the prior over the vectors $\opt^h, \forall h$ and given the collected experiences, updates their posterior at the beginning of each episode. At the beginning of each episode $t$, the agent draws $\omega_t^h, \forall h,$ from the posterior, and follows their induced policy $\pi_t^h$, i.e.,  $a_t^h:=\arg\max_{a\in\A} \phi^\top(x^h,a)\omega_t^h, \forall x^h\in \X$. 

\textbf{\LinUCB}(Algorithm~\ref{alg:optimism}): In this frequentist approach, at the beginning of $t$'th episode, the agent exploits the so-far collected experiences and estimates $\opt^h$ up to a high probability confidence intervals $\C_{t-1}^h$ i.e., $\opt^h\in C_{t-1}^h,~\forall h$. At each time step $h$, given a state $x_t^h$, the agent follows the optimistic policy:  $\wt\pi_t^h(x_t^h)=\arg\max_{a\in\A}\max_{\omega\in\C^h_{t-1}}\phi^\top(X_t^h,a)\omega$. 

\begin{figure*}[t]
\vspace*{-0.1cm}
  \begin{minipage}{0.45\textwidth}
\begin{algorithm}[H]
\caption{\LinPSRL}
\begin{algorithmic}[1]
\STATE Input: the prior and likelihood
\FOR{episode:~t= 1,2,\ldots}
% 	\State Initialize $x_1$ to the initial State of the environment
    \STATE $\omega_t^h\sim$ posterior distribution, $\forall h\in[H]$
    \FOR{$h=0$ to the end of episode}
	\STATE Follow $\pi_t$ induced by $\omega_t^h$
\ENDFOR
\STATE Update the posterior
\ENDFOR
\end{algorithmic}
\label{alg:psrl}
\end{algorithm}
  \end{minipage}\hfill
  \begin {minipage}{0.50\textwidth}
     \centering
\begin{algorithm}[H]
\caption{\LinUCB}
\begin{algorithmic}[1]
\STATE Input: $\sigma$, $\lambda$ and $\delta$
\FOR{episode:~t = 1,2,\ldots}
% 	\State Initialize $x_1$ to the initial State of the environment
    \FOR{$h=1$ to the end of episode}
    \STATE choose optimistic $\wt\omega_t^h$ in $\C_{t-1}^h(\delta)$
	\STATE Follow $\wt\pi_t^h$ induced by $\wt\omega_t^h$
\ENDFOR
\STATE Compute the confidence $\C_{t}^h(\delta),~\forall h\in[H]$
\ENDFOR
\end{algorithmic}
\label{alg:optimism}
\end{algorithm}
  \end{minipage}\hfill
% \caption{The efficient and targeted exploration of \bdqn }
%   \label{fig:exp}
%   \vspace*{-0.4cm}
\end{figure*}

\textbf{Regret analysis:} For both of these approaches, we show that, as we get more samples, the confidence sets $\C_t^h,~\forall h,$ shrink with the rate of $\wt{\mathcal{O}}\left(1/\sqrt{t}\right)$, resulting in more confidence parameter estimation and therefore smaller per step regret (Lemma~\ref{lem:conf} in Appendix~\ref{apx:theory}).
For linear models, define the gram matrix $\chi_t^h$ and also ridge regularized matrix with $\wt\chi^h\in\Re^{d\times d}$ (we set it to $\lambda I$)
\begin{align*}
\chi_t^h := \sum_{i=1}^t \phi_i^h{\phi_i^h}^\top, ~~~~~ \wo\chi_t^h = \chi_t^h+\wt\chi^h
\end{align*}
Following the standard assumption in the self normalized analysis of linear regression and linear bandit~\citep{pena2009self,abbasi-yadkori2011improved}, we have: 
\begin{itemize}
    \item $\forall h\in[H]$: the noise vector $\nu^h~$ is a $\sigma$-sub-Gaussian vector. (refer to Assumption ~\ref{asm:subgaussian}) %in Appendix~\ref{apx:theory})
    % \item The random vectors $\phi^h$ and variable $\nu_h~h\in[1,\ldots,H]$ satisfies the Markov property.
    \item $\forall h\in[H]$ we have $\|\opt^h\|_2\leq L_\omega$, $\|{\phi}(x^h,a^h){\phi}(x^h,a^h)^\top\|_2^2\leq L,~\forall x\in\X,a\in\A$, a.s. 
    \item Expected rewards and returns are in $[0,1]$.
\end{itemize}
 Then, $\forall h$ define $\rho_\lambda^h$ such that:
\begin{align*}
\left(\sum_i^t\|\phi(x_i^{h},\pi^*(x_i^{h}))\|_{{\wo\chi_t^{h}}^{-1}}^2\right)^{1/2}\leq \rho_\lambda^h,~\forall h,t, \textit{with}~ \rho_\lambda^{H+1}=0
\end{align*}
similar to the ridge linear regression analysis in~\citet{hsu2012random}, we require $\rho_\lambda^h<\infty$. This requirement is automatically satisfied if the optimal Q-function is bounded away from zero (all features have large component at least in one direction). Let $\wb\rho^H_\lambda(\gamma)$ denote the following combination of $\rho_\lambda^h$:
\begin{align*}
\wb\rho^H_\lambda(\gamma) := 
% \sum_{i=1}^H(\gamma)^{H-i} \left(1+\sum_{j=1}^i \prod_{k=1}^j \gamma^j\rho_\lambda^{H-(i-k)+1}\right)
% \sum_{i=1}^H\gamma^{H-i} \left(1+\sum_{j=1}^i \gamma^j\prod_{k=1}^j \rho_\lambda^{H-(i-k)+1}\right)
\sum_{i=1}^{H}\gamma^{2(H-i)} \left(1+\sum_{j=1}^i\I(j>1) \gamma^{j-1}\prod_{k=1}^{j-1} \rho_\lambda^{H-(i-k)+1}\right)^2
\end{align*}
For any prior and likelihood satisfying these assumptions, we have:

% \begin{figure}[ht]
%     % \vspace*{-1cm}
%     \centering
%     \includegraphics[scale=0.4]{Fig/BDQN-posterior.pdf}
%     \caption{PSRL, $\omega^*$ is close to $\wh\omega_t$, as well as the outcome of \TS, $\omega_t$}
%     % \vspace*{-4cm}
%     \label{fig:elipsoid}
%     % \vspace*{-2cm}
% \end{figure}
% \begin{lemma}[Concentration of $\omega^*$ estimator]\label{lem:conf1}
% For a matrix problem in Eq.~\ref{eq:Bayesian_model}, given a sequence of $\lbrace\mathbf{\Phi}_i,\mathbf{R}_i\rbrace_{i=1}^{t}$, regularization $\lambda$, and the following estimator for $\wh\omega_t$
% \begin{align*}
% \wh\omega_t:= \left(\wo\chi_t\right)^{-1}\sum_i^t\left(\mathbf{\Phi}_i^\top\mathbf{\wb R}_i\right)~~~~~,~~~~~ \wo\chi_t := \sum_{i=0}^t  \mathbf{\Phi}_i^\top\mathbf{\Phi}_i+\lambda I
% \end{align*}
% with $I\in\Re^d$ is an identity matrix, with probability at least $1-\delta$, $\wh\omega_t$ concentrates to the true $\omega^*$;
% \begin{align}\label{eq:beta-beta*}
% \|\wh\omega_t-\omega^*\|_{\wo\chi_t}\leq \sigma\sqrt{2\log\left(\frac{1+tL^2/\lambda}{\delta}\right)}+\lambda^{1/2}L_\omega
% \end{align}
% \end{lemma}
% \textit{Proof: Lemma.~\ref{lem:conf}}. This Lemma proves that, the sampled $\omega_t$, $\omega^*$ and $\wh\omega_t$ are close to each other, therefore, both the $\omega_t$ and $\wh\omega_t$ converge to $\omega^*$ % Fig.~\ref{fig:elipsoid}.

\begin{theorem}[Bayesian Regret]\label{Thm:BR}
For an episodic MDP with episode length $H$, discount factor $\gamma$, and feature map $\phi(x,a)\in\Re^d$,  %with the optimal $Q^{\omega^*}_{\pi^*}(x,a):=\omega^*\phi(x,a),~\forall x,a\in\X\times\A$,
after $T$ episodes the Bayesian regret of \LinPSRL is upper bounded as:
\begin{align*}
\textbf{BayesReg}_T&=\E\left[\sum_t^T \left[V^{\omega^*}_{\pi^*}- V^{\omega^*}_{\pi_t}\right]  \right]   =  \mathcal{O} \left(  d\sqrt{\wb\rho^H_\lambda(\gamma)HT}\log(T) \right)
\end{align*}
% with probability at least $1-\delta$.
\end{theorem}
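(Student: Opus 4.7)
The approach is the Russo--Van Roy analysis of posterior sampling, adapted to the recursive structure of the horizon-$H$ linear-$Q^*$ model. The central identity is that, conditioned on the history $\mathcal{H}_{t-1}$ at the start of episode $t$, the sampled vector $\omega_t$ and the true $\omega^*$ share the same distribution. Hence, adding and subtracting $V^{\omega_t}_{\pi_t}$ inside the expectation and using $V^{\omega_t}_{\pi_t} = \max_\pi V^{\omega_t}_\pi \ge V^{\omega_t}_{\pi^*}$, the per-episode Bayesian regret can be upper bounded by $\mathbb{E}\bigl[V^{\omega_t}_{\pi_t}(x_0) - V^{\omega^*}_{\pi_t}(x_0)\bigr]$, which is in turn controlled by the width of any $\mathcal{H}_{t-1}$-measurable confidence region containing both $\omega^*$ and $\omega_t$ with high probability.

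The next step is to build those confidence regions levelwise. Rearranging the linear-$Q^*$ assumption yields the regression identity $\phi(x^h,a^h)^\top \opt^{h} = R^h + \gamma\,\phi(x^{h+1},\pi^*(x^{h+1}))^\top \opt^{h+1} - \nu^h$. Although the target depends on the unknown $\opt^{h+1}$ and $\pi^{*,h+1}$, the noise $\nu^h$ is a $\sigma$-sub-Gaussian martingale difference by assumption, which is exactly the hypothesis of the Abbasi-Yadkori self-normalized bound. That produces, for each $h$, an ellipsoidal set $\mathcal{C}^h_{t-1}$ of $\wb\chi^h_{t-1}$-radius $\beta_t = \mathcal{O}\bigl(\sigma\sqrt{d\log(t/\delta)} + \sqrt{\lambda}\,L_\omega\bigr)$ that contains $\opt^{h}$, and, by the posterior-sampling identity, also the sampled $\omega^h_t$, with probability at least $1-2\delta$; a union bound over $h \in [H]$ costs only $\log H$.

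Now unroll the per-episode regret through a Bellman backup. Under the good event, the per-step Q-value gap is controlled by $2\beta_t\,\|\phi(x^h_t,a^h_t)\|_{(\wb\chi^h_{t-1})^{-1}}$ plus $\gamma$ times the analogous gap at $\phi(x^{h+1}_t,\pi^*(x^{h+1}_t))$, which has the same recursive form. Iterating this backup $H-h$ times produces products of feature norms at $\phi(\cdot,\pi^*(\cdot))$; after Cauchy--Schwarz over $t$, the uniform bound $\sum_i \|\phi(x_i^h,\pi^*(x_i^h))\|^2_{(\wb\chi^h_t)^{-1}} \le (\rho^h_\lambda)^2$ collapses these products, and the combinatorial rearrangement assembles them into exactly the quantity $\wb\rho^H_\lambda(\gamma)$ appearing in the statement. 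Finally, the elliptical potential lemma gives $\sum_{t=1}^T \|\phi^h_t\|^2_{(\wb\chi^h_{t-1})^{-1}} = \mathcal{O}(d\log T)$ for every level $h$, so one Cauchy--Schwarz over $t$ together with summation over levels yields the claimed $\widetilde{\mathcal{O}}\bigl(d\sqrt{\wb\rho^H_\lambda(\gamma)\,H\,T}\bigr)$ rate after choosing $\delta = 1/T$ and absorbing the low-probability failure into the boundedness of returns.

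The main obstacle is the horizon unrolling. Unlike the linear-bandit case where the per-step regret is simply $\beta_t\,\|\phi_t\|_{\wb\chi^{-1}}$, here the Bellman recursion mixes confidence widths across levels, and the regression target at step $h$ depends on the still-unknown $\opt^{h+1}$ and $\pi^{*,h+1}$. To invoke a self-normalized bound at each level legitimately, one must condition on those future parameters and verify that $\nu^h$ remains a zero-mean sub-Gaussian martingale difference with respect to the appropriate filtration; this is precisely what forces the layered definition of $\wb\rho^H_\lambda(\gamma)$, and the assumption $\rho^h_\lambda<\infty$ stated just before the theorem is exactly what is needed to keep the unrolled sum finite.
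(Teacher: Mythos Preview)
Your outline has the right skeleton (posterior-sampling identity, levelwise confidence sets, elliptical potential), but there is a real gap in how the confidence sets are built and where $\wb\rho^H_\lambda(\gamma)$ actually enters. You claim a non-recursive radius $\beta_t=\mathcal{O}\bigl(\sigma\sqrt{d\log(t/\delta)}+\sqrt{\lambda}\,L_\omega\bigr)$ by applying the self-normalized bound directly to the regression with noise $\nu^h$. But the target of that regression, $R^h+\gamma\,\phi(x^{h+1},\pi^*(x^{h+1}))^\top\opt^{h+1}$, is unobservable: it depends on $\opt^{h+1}$ and $\pi^*$, so any least-squares center built from it is not $\mathcal{H}_{t-1}$-measurable---and you yourself correctly insist on $\mathcal{H}_{t-1}$-measurability for the posterior identity to transfer coverage from $\opt$ to the sampled $\omega_t$. ``Conditioning on the future parameters,'' as your last paragraph suggests, does not rescue this; it destroys measurability outright. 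The paper's fix (Lemma~\ref{lem:conf}) is to regress on \emph{observable} targets $\wb\nu^h_t=r^h_t+\gamma\,\phi(x^{h+1}_t,\check\pi_t(x^{h+1}_t))^\top\check\omega^{h+1}_{t-1}$ built from the \emph{pessimistic} estimate $\check\omega^{h+1}_{t-1}$, which is history-measurable. The resulting bias is bounded by $\theta^{h+1}_t(\delta)\,\rho_\lambda^{h+1}$, so the confidence radii become \emph{recursive}, $\theta^h_t\le\beta_t+\theta^{h+1}_t\rho_\lambda^{h+1}$, and unrolling that recursion is precisely what manufactures the nested products defining $\wb\rho^H_\lambda(\gamma)$.

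Correspondingly, the paper's regret decomposition is a clean per-level peeling: it adds the expectation-zero term $V^{\wt\omega_t^1(\pi_t)}_{\pi_t}-V^{\wt\omega_t^1(\pi^*)}_{\pi^*}$ with the \emph{optimistic} $\wt\omega_t^h(\pi)\in\C^h_{t-1}$, drops a nonpositive term by optimism, and telescopes to $\sum_h 2\theta^h_{t-1}\|\phi_t^h\|_{(\wo\chi^h_{t-1})^{-1}}$. There is no ``$\gamma$ times the analogous gap at the $\pi^*$-feature'' at this stage and no products of feature norms in the regret sum; the $\rho_\lambda^h$ live entirely inside the recursive $\theta^h$. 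Your third paragraph relocates the recursion from the confidence radii into a Bellman unrolling of the regret, but that is not how the argument runs here, and a direct performance-difference unrolling paired with your non-recursive $\beta_t$ would yield sums of $\rho_\lambda^h$, not the nested products in $\wb\rho^H_\lambda(\gamma)$.
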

\textit{Proof is given in the Appendix~\ref{sec:proof}}.
% We also extended our results to frequentist setting, where instead of Thompson sampling, we choose the most optimistic $\pi$ for decision making, i.e.,  $\wt\pi_t(X_t^h)=\arg\max_{a\in\A}\max_{\omega\in\C_{t-1}(\delta)}\phi^\top(X_t^h,a)\omega$ and follow $\wt\pi$ for exploration and exploitation.

\begin{theorem}[Frequentist Regret]\label{Thm:frequentist}
For an episodic MDP with episode length $H$, discount factor $\gamma$, feature map $\phi(x,a)\in\Re^d$, after $T$ episodes the frequentist regret of \LinUCB is upper bounded as:
\begin{align*}
  \textbf{Reg}_T  :  &= \E \left[ \sum_t^T  \left[V^{\omega^*}_{\pi^*}  -  V^{\omega^*}_{\wt\pi_t}\right]  \Big|\omega^* \right]  =  \mathcal{O} \left(  d\sqrt{\wb\rho^H_\lambda (\gamma)HT}\log(T) \right)
\end{align*}
\end{theorem}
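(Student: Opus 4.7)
The plan is to follow the standard OFU regret analysis for linear problems and propagate the per-step prediction error through the $H$-step Bellman recursion, which is what accounts for the $\wb\rho^H_\lambda(\gamma)$ factor in the bound. The key identity I will exploit is that, on the good event that $\opt^h\in\C_{t-1}^h$ for every $h$ (namely Lemma~\ref{lem:conf}), the greedy action under the chosen optimistic $\wt\omega_t^h$ attains value at least $V^{\opt}_{\pi^*}(x_t^1)$, so one episode of regret is upper bounded by the on-policy prediction gap $\wt V_t(x_t^1)-V^{\opt}_{\wt\pi_t}(x_t^1)$.

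First, I would invoke a self-normalized martingale concentration bound applied to the ridge estimator for each horizon index $h$. Thanks to the sub-Gaussian hypothesis on $\nu^h$ together with the boundedness of $\phi$ and $\opt$, this yields confidence ellipsoids centred at the ridge estimate with radius $\beta_t=\mathcal{O}\!\left(\sigma\sqrt{d\log(t/\delta)}+\sqrt{\lambda}L_\omega\right)$ in the $\wo\chi_t^h$-weighted norm, valid uniformly in $t$ and $h$ with probability at least $1-\delta$. Conditioning on this event gives both $\opt^h\in\C_{t-1}^h$ (so optimism is legitimate) and $\|\wt\omega_t^h-\opt^h\|_{\wo\chi_{t-1}^h}\leq 2\beta_t$.

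Next, I would decompose the per-episode regret by unrolling the Bellman recursion along the trajectory generated by $\wt\pi_t$. Telescoping $\wt V_t(x_t^1)-V^{\opt}_{\wt\pi_t}(x_t^1)$ produces, at each step $h$, an on-policy inner product $\phi(x_t^h,\wt a_t^h)^\top(\wt\omega_t^h-\opt^h)$, a zero-mean noise term absorbed via Azuma-Hoeffding, and a correction coming from the fact that the Bellman target of $\opt^h$ evaluates the next state under ${\pi^*}^{h+1}$ rather than $\wt\pi_t^{h+1}$. The inner product is bounded via Cauchy-Schwarz by $\|\phi(x_t^h,\wt a_t^h)\|_{{\wo\chi_{t-1}^h}^{-1}}\cdot 2\beta_t$.

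The main obstacle is controlling this off-policy correction across the horizon, since ${\pi^*}^{h+1}$-features appear in the target of $\opt^h$ even though rollouts follow $\wt\pi_t$. This is exactly what the hypothesis $\big(\sum_i\|\phi(x_i^h,\pi^*(x_i^h))\|_{{\wo\chi_t^h}^{-1}}^2\big)^{1/2}\leq\rho_\lambda^h$ is designed for: it lets the $\pi^*$-evaluated features be re-expressed in terms of observed on-policy data at a multiplicative cost of $\rho_\lambda^{h+1}$ per step of propagation. Unrolling this substitution over the remaining $H-h$ steps produces precisely the nested product $\prod_k\rho_\lambda^{H-(i-k)+1}$ and the $\gamma^{j-1}$ weighting that appear inside the definition of $\wb\rho^H_\lambda(\gamma)$. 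Once every step has been reduced to a term proportional to $\|\phi(x_t^h,\wt a_t^h)\|_{{\wo\chi_{t-1}^h}^{-1}}\beta_t$, summing over $t$ and applying the elliptical-potential inequality $\sum_{t=1}^T\|\phi_t^h\|_{{\wo\chi_{t-1}^h}^{-1}}^2=\mathcal{O}(d\log T)$ together with Cauchy-Schwarz across episodes yields a $\sqrt{dT\log T}$ factor. Combining this with $\beta_T=\wt{\mathcal{O}}(\sqrt{d})$, the $H$-fold horizon sum, and the $\wb\rho^H_\lambda(\gamma)$ from the recursion delivers the stated $\mathcal{O}\!\left(d\sqrt{\wb\rho^H_\lambda(\gamma)HT}\log T\right)$ bound; the Azuma remainders are $\mathcal{O}(\sqrt{HT\log T})$ and are absorbed in lower order.
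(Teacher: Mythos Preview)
Your high-level skeleton (optimism, telescoping through the horizon, Cauchy--Schwarz, then the elliptical-potential lemma) is the paper's, but you have the $\rho_\lambda^h$ recursion entering at the wrong place, and this is a genuine gap rather than a cosmetic reshuffle.

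In the paper the per-episode regret telescopes \emph{cleanly} to $\sum_{h}\gamma^{h-1}\phi(X_t^h,\wt\pi_t^h(X_t^h))^\top(\wt\omega_t^h-\opt^h)$ (see the $\Delta_t^h$ recursion culminating in Eq.~\eqref{eq:onesnorm}); once optimism gives $V^{\opt}_{\pi^*}(X_t^h)\le \phi(X_t^h,\wt\pi_t^h(X_t^h))^\top\wt\omega_t^h$ at every $h$, there is no residual ``off-policy correction'' left in the decomposition. So the correction you describe as emerging from the telescoping does not appear there.

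The $\rho_\lambda^h$ terms live in the confidence radius. The radius $\beta_t=\mathcal{O}(\sigma\sqrt{d\log(t/\delta)}+\sqrt{\lambda}L_\omega)$ you quote is what self-normalized concentration gives for the ridge estimator built with the \emph{unobservable} targets $r_t^h+\gamma\,\phi(x_t^{h+1},\pi^*(x_t^{h+1}))^\top\opt^{h+1}$. \LinUCB cannot regress on those; the paper regresses on the pessimistic plug-in $\wb\nu_t^h=r_t^h+\gamma\,\phi(x_t^{h+1},\check\pi_t(x_t^{h+1}))^\top\check\omega_{t-1}^{h+1}$, whose bias at level $h$ is exactly what the $\rho_\lambda^{h+1}$ hypothesis controls. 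This is why Lemma~\ref{lem:conf} delivers the recursive radius $\theta_t^h(\delta)=\beta_t+\theta_t^{h+1}(\delta)\,\rho_\lambda^{h+1}$, and the nested products defining $\wb\rho^H_\lambda(\gamma)$ appear only when $\sum_h(\theta_T^h)^2$ is unrolled at the very end. If you keep radius $\beta_t$ around the algorithm's actual (biased) estimator, $\opt^h$ need not lie in $\C_{t-1}^h$, optimism is not certified, and your first inequality $V^{\opt}_{\pi^*}\le \wt V_t$ already fails. (A minor side point: the regret here is a pseudo-regret taken in expectation, so the Azuma--Hoeffding step you mention is unnecessary.)
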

\textit{Proof is given in the Appendix~\ref{sub:optimism}}. 
%\begin{remark}
%%For any discount factor $\gamma$, $0\leq\gamma\leq 1$, both theorems~\ref{Thm:BR} and~\ref{Thm:frequentist} hold if we replace $\sqrt{H}$ with a smaller constant
%%$\|[1,\gamma,\gamma^2,\ldots,\gamma^{H-1}]\|_2$ in the theorem statements (see Section~\ref{sub:discount}).
%\end{remark}
%
% As mentioned before, $\wh\omega_t$  concentrates around the true  $\omega^*$ with a fast rate of $\wt{\mathcal{O}}\left(1/t\right)$ therefore, $\omega_t$ and $\wt\omega_t$, resulting in less and less per step regret. 
These regret upper bounds are similar to those in linear bandits \citep{abbasi-yadkori2011improved,russo2014learning} and linear quadratic control ~\citep{abbasi-yadkori2011regret}, i.e. $\widetilde{\mathcal{O}}(d\sqrt{T})$.
Since linear bandits are special cases of episodic continuous MDPs, when horizon is equal to $H=1$, we observe that our Bayesian regret upper bound recovers~\citep{russo2014learning} %which is constructed for the special case of Gaussian noise. Moreover, 
and our frequentist regret upper bound recovers the bound in~\citep{abbasi-yadkori2011improved}. While our regret upper bounds are order optimal in $T$, and $d$, they have bad dependency in the horizon length $H$. In our future work, we plan to extensively study this problem and provide tight lower and upper bound in terms of $T,d$ and $H$.

%\section{Bayesian Deep Q-Networks}
%Consider an MDP $M$ as a tuple $\langle \X, \A, P,P_0, R,\gamma\rangle$, with state space $\X$, action space $\A$, transition kernel $P$, initial state distribution $P_0$, accompanied with reward function of $R$, and discount factor $0\leq\gamma< 1$. Since it is mainly clear from the context, to ease to notation, we mainly use the same notation for random variables and their realizations. In value based model free RL, the core of most prominent approaches is to learn the Q-function through minimizing the Bellman residual~\citep{schweitzer1985generalized,lagoudakis2003least,antos2008learning}
%\begin{align}
%\L(Q) = \E_{\pi}\left[\left(Q(x,a)-r-\gamma Q(x',\hat{a})\right)^2\right]
%\end{align}
%and temporal difference (TD) update \citep{tesauro1995temporal} where the tuple $(x,a,r,x')$ consists of a consecutive experiences under a behavior policy $\pi$. ~\citet{mnih2015human} carries the same idea, and propose \DQN where the Q-function is parameterized by a deep network and $\hat{a} = \arg\max_{a'}Q(x',a')$. % To improve the quality of Q estimate, \DQN uses back propagation on loss $\L(Q)$ using the TD update \citep{sutton1998reinforcement}.
%In order to reduce the bias of the estimator, \DQN utilizes a target network $Q^{target}$, target value $y=r+\gamma Q^{target}(x',\hat{a})$, and approaches the regression in the empirical estimates of the loss $\L(Q,Q^{target})$;
%\begin{align}\label{eq:reg}
%\L(Q,Q^{target}) = \E_{\pi}\left[\left(Q(x,a)-y\right)^2\right]
%\end{align}

\begin{wrapfigure}{r}{0.5\textwidth}
    \begin{center}
    \vspace*{-1.cm}
    \includegraphics[scale=0.3]{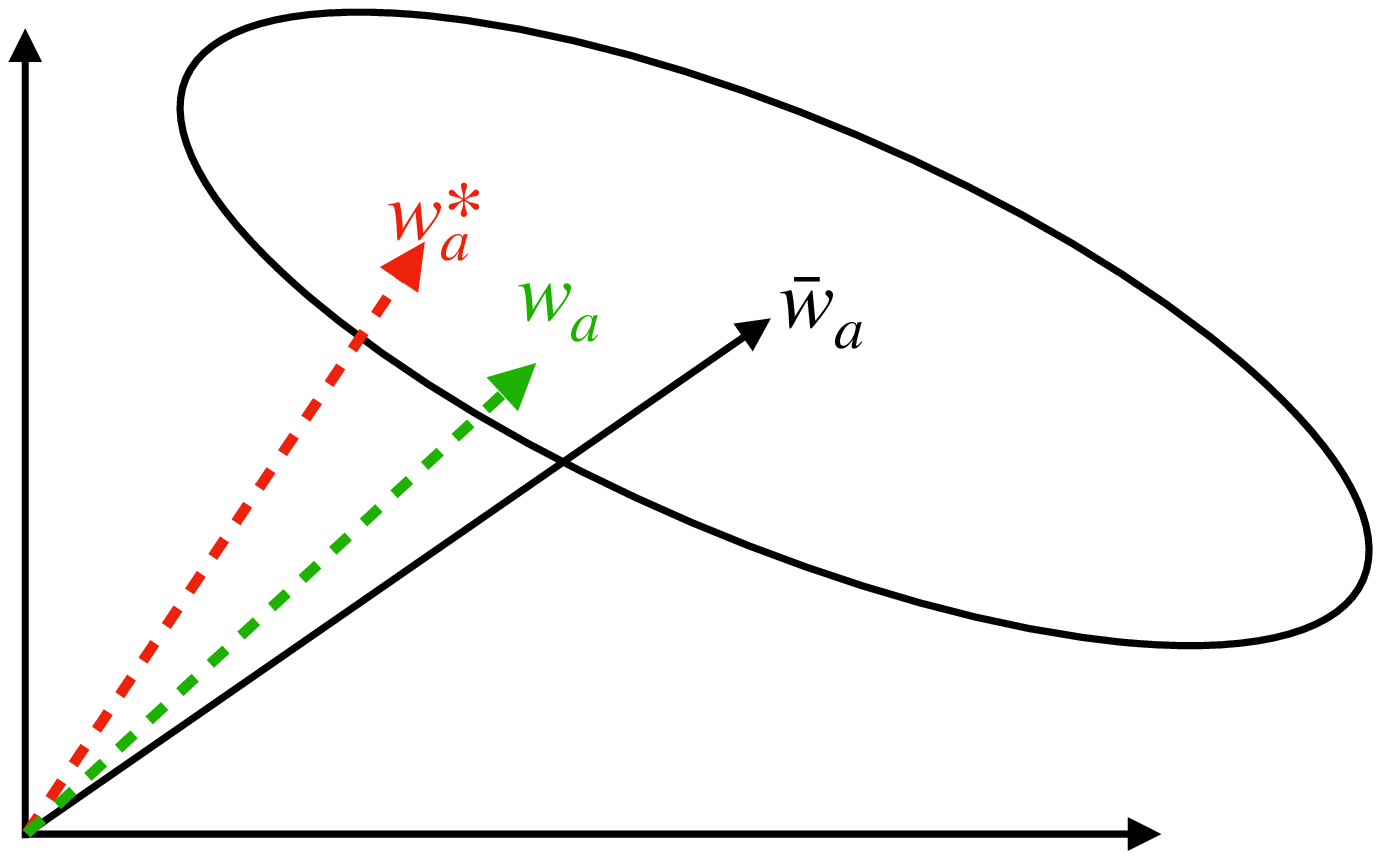}
    \caption{\bdqn deploys Thompson Sampling to $\forall a\in A$ sample $w_a$ (therefore a Q-function) around the empirical mean $\overline{w}_a$ and $w_a^*$ the underlying parameter of interest.}
    \label{fig:omega}
    \vspace*{-.5cm}
    \end{center}
\end{wrapfigure}

\section{Bayesian Deep Q-Networks}
We propose Bayesian deep Q-networks ($\bdqn$) an efficient Thompson sampling based method in high dimensional RL problems. In value based RL, the core of most prominent approaches is to learn the Q-function through minimizing a surrogate to Bellman residual~\citep{schweitzer1985generalized,lagoudakis2003least,antos2008learning} using temporal difference (TD) update \citep{tesauro1995temporal}. ~\citet{van2016deep} carries this idea, and propose \DDQN (similar to its predecessor \DQN~\citep{mnih2015human}) where the Q-function is parameterized by a deep network. \DDQN employ a target network $Q^{target}$, target value $y=r+\gamma Q^{target}(x',\hat{a})$, where the tuple $(x,a,r,x')$ are consecutive experiences, $\hat{a} = \argmax_{a'}Q(x',a')$. \DDQN learns the Q function by approaching the empirical estimates of the following regression problem:
\begin{align}\label{eq:reg}
\L(Q,Q^{target}) = \E\left[\left(Q(x,a)-y\right)^2\right]
\end{align}
The \DDQN agent,  once in a while, updates the $Q^{target}$ network by setting it to the Q network, and follows the regression in Eq.\ref{eq:reg} with the new target value. Since we aim to empirically study the effect of Thompson sampling, we directly mimic the \DDQN to design \bdqn.
%
% In this work, instead of regression, we propose to minimize $\wh\L(Q,Q^{target})$ through Bayesian Linear regression,  which is also inspired by our theoretical work. While deploying BLR, we also train the feature representation in order to approximately develop a linear features for Q.% a suitable feature representation which approximately satisfies the linearity of Q function on feature representation. 
% %
% In this work, we utilize the \DQN architecture where the regression in Eq.~\ref{eq:reg} induces linear regression in the learning of last layer. 

\textbf{Linear Representation}: \DDQN architecture consists of a deep neural network where the Q-value is a linear function of the feature representation layer (output layer) of the Q-network, i.e., $\phi_{\theta}(x)\in\Re^d$ parameterized by $\theta$. Therefore, for any $x,a$, $Q(x,a) = {\phi_{\theta}(x)}^\top w_a$ with $w_a\in\R^d$, the parameter of the last linear layer. 
Similarly, the target model has the same architecture, and consists of $\phi_{\theta^{target}}(\cdot)$, the
feature representation of the target network, and ${w^{target}}_{{a}},~\forall {a}\in\A$ the target weight. Given a tuple $(x,a,r,x')$ and $\hat{a}=\mathrm{arg\,max}_{a'} {\phi_{\theta}}^\top w_{a'}$, \DDQN learns $w_a$'s and $\theta$ to match $y$:
\begin{align*}
Q(x,a) = {\phi_{\theta}(x)}^\top w_a\rightarrow y := r + \gamma \phi_{\theta^{target}}(x')^\top{w^{target}}_{\hat{a}}
\end{align*}

In \DDQN, we match ${\phi_{\theta}(x)}^\top w_a$ to $y$ using the regression in Eq.~\ref{eq:reg}. This regression problem results in a linear regression in the last layer, $w_a$'s. \bdqn follows all \DDQN steps except for the learning of the last layer $w_a$'s. \bdqn deploys Gaussian BLR instead of the plain linear regression, resulting in an approximated posterior on the $w_a$'s and consequently on the Q-function. As discussed before, BLR with Gaussian prior and likelihood is an approximation to \LinPSRL and \LinUCB~\citep{abeille2017linear}.
Through BLR, we efficiently approximate the distribution over the Q-values, capture the uncertainty over the Q estimates, and design an efficient exploration-exploitation strategy using Thompson Sampling.

Given a experience replay buffer $\D=\lbrace x_\tau,a_\tau,y_\tau\rbrace_{\tau=1}^D$, for each action $a$ we construct a data set $\D_a$ with $a_\tau =a$, then construct a matrix $\Phi_a^\theta\in\Re^{d\times |\D_a|}$, the concatenation of feature vectors $\lbrace\phi_{\theta}(x_i)\rbrace_{i=1}^{|\D_a|}$, and $\y_a\in\Re^{|\D_a|}$, the concatenation of target values in set $\D_a$. We then approximate the posterior distribution of $w_a$ as follows:
\begin{align}\label{eq:w}
&\wb w_a:=\frac{1}{\sigma^2_\epsilon}Cov_a\Phi_a^\theta\y_a,~~ Cov_a := \left(\frac{1}{\sigma_\epsilon^2}\Phi_a^\theta{\Phi_a^\theta}^\top+\frac{1}{\sigma^2}I \right)^{-1} \rightarrow \textit{sampling}~w_a\sim \N\left(\wb w_a,Cov_a\right)
\end{align}
which is the derivation of well-known BLR, with mean zero prior as well as $\sigma$ and $\sigma_\epsilon$, variance of prior and likelihood respectively.
Fig.~\ref{fig:omega} demonstrate the mean and covariance of the over $w_a$ for each action $a$. A \bdqn agent deploys Thompson sampling on the approximated posteriors every $T^{S}$ to balance exploration and exploitation while updating the posterior every $T^{BT}$.

% resulting in sample of Q function~Fig.~\ref{fig:omega}. \bdqn agent samples a new Q function, (i.e., $w_a,~\forall{a}$), every $T^{S}:=\mathcal{O}(episodes~length)$ time steps and act accordingly $a_{\TS}:=\max_a w_a^\top\phi_{\theta}(x)$. As suggested by derivations in Section~\ref{sec:Theory}, $T^{S}$ is chosen to be order of episode~length for the Atari games, Appendix~\ref{sub:TS-frequency}. As mentioned before, the feature network is trained similar to \DDQN using $\theta \gets \theta - \alpha \cdot \nabla_{\theta} (y_\tau -  w_{a_\tau}^{\top}\phi_{\theta}(x_{\tau}) )^2$
% Similar to \DDQN, we update the target network every $T^{T}$ steps and set $\theta^{target}$ to $\theta$. We update the posterior distribution using a minibatch of $B$ randomly chosen experiences in the replay buffer every $T^{BT}$, and set the $w_a^{target}=\wb w_a,~\forall a\in\A$, the mean of the posterior distribution  Alg.~\ref{alg:bdqn} (refer to  Section~\ref{sub:TS-frequency} for implementation detail).

%\algnewcommand{\IIf}[1]{\State\algorithmicif\ #1\ \algorithmicthen}
%\algnewcommand{\EndIIf}{\unskip\ \algorithmicend\ \algorithmicif}

\begin{algorithm}[t]
\caption{\bdqn}
\begin{algorithmic}[1]
\STATE Initialize $\theta$, $\theta^{target}$, and $\forall a$, $w_a$, $w^{target}_a$, $Cov_a$ % using a normal distribution.
\STATE Set the replay buffer $RB=\lbrace\rbrace$
%\State Initialize replay buffer and set counter $=~0$
\FOR{$t$ = 1,2,3\ldots}
\IF{$t$ mod $T^{BT}=0$  }
\STATE $\forall a$, update $\wb w_a$ and $Cov_a$, $\forall a$
\ENDIF
\IF{$t$ mod $T^{S}=0$ }
\STATE Draw $w_a\sim\mathcal{N}\left(w_a^{target},Cov_a\right)~\forall a$
\ENDIF
	\STATE Set $\theta^{target}\leftarrow\theta$ every $T^{T}$
\STATE Execute $a_t = \mathrm{arg\,max}_{a'} w_{a'}^{\top}\phi_{\theta}(x_{t})$
%   	\State Execute action $a_t$ in environment, observing reward $r_t$ and successor State $x_{t+1}$
	\STATE Store $(x_t, a_t, r_t, x_{t+1})$ in  the $RB$
  \STATE Sample a minibatch ($x_{\tau}$ , $a_{\tau}$, $r_{\tau}$, $x_{\tau+1}$) from the $RB$
% \IF{$x_{\tau+1} $ is a terminal state}
% \STATE $y_{\tau}\leftarrow r_{\tau}$
% \ELSE
% \STATE $\hat{a}:=\mathrm{arg\,max}_{a'} w_{a'}^{\top}\phi_{\theta}(x_{\tau+1})$
% \STATE $y_{\tau}\leftarrow r_\tau  +\gamma{w_{\hat{a}}^{target}}^\top\phi_{\theta^{target}}(x_{\tau+1})$
% \ENDIF
\STATE $y_{\tau} \gets  \left\{\begin{array}{ll}
    r_{\tau}&\text{terminal  } x_{\tau+1}\\
  r_\tau  +  {w_{\hat{a}}^{target}}^\top\phi_{\theta^{target}}(x_{\tau+1}),~\hat{a}:=\mathrm{arg\,max}_{a'} w_{a'}^{\top}\phi_{\theta}(x_{\tau+1}) &\text{non-terminal } x_{\tau+1}  \\
\end{array}   \right\} $ 
\STATE Update $\theta \gets \theta - \alpha \cdot \nabla_{\theta} (y_\tau -  w_{a_\tau} ^{\top}\phi_{\theta}(x_{\tau}) )^2$ 
\ENDFOR
\end{algorithmic}
\label{alg:bdqn}
% \vspace*{-0.1cm}
\end{algorithm}

 \section{Experiments} \label{sec:Exp}
We empirically study \bdqn behaviour on a variety of Atari games in the Arcade Learning Environment \citep{bellemare2013arcade} using OpenAI Gym~\citep{1606.01540}, on the measures of sample complexity and score against \DDQN,~Fig~\ref{fig:exp}, Table~\ref{table:scores}.  All the codes\footnote{https://github.com/kazizzad/BDQN-MxNet-Gluon, Double-DQN-MxNet-Gluon, DQN-MxNet-Gluon}, with detailed comments and explanations are publicly available and programmed in MxNet \citep{chen2015mxnet}.

We implemented DDQN and \bdqn following \citet{van2016deep}. We also attempted to implement a few other deep RL methods that employ strategic exploration (with advice from their authors), e.g., \citep{osband2016deep,bellemare2016unifying}. Unfortunately  we encountered several implementation challenges that we could not address since neither the codes nor the implementation details are publicly available (we were not able to reproduce their results beyond the performance of random policy). Along with \bdqn and \DDQN codes, we also made our implementation of \citet{osband2016deep} publicly available. In order to illustrate the \bdqn performance we report its scores along with a number of state-of-the-art deep RL methods \ref{table:scores}. For some games, e.g., Pong, we ran the experiment for a longer period but just plotted the beginning of it in order to observe the difference. Due to huge cost of deep RL methods, for some games, we run the experiment until a plateau is reached. The \bdqn and \DDQN columns are scores after running them for number steps reported in \textit{Step} column without Since the regret is considered, no evaluation phase designed for them. $\DDQN^+$ is the reported scores of \DDQN in~\citet{van2016deep} at evaluation time where the $\varepsilon=0.001$. We also report scores of Bootstrap \DQN \citep{osband2016deep}, NoisyNet \citep{fortunato2017noisy}, CTS, Pixel, Reactor \citep{ostrovski2017count}. For NoisyNet, the scores of NoisyDQN are reported. To illustrate the sample complexity behavior of \bdqn we report \textit{SC}: the number of interactions \bdqn requires to beat the human score \citep{mnih2015human}($``-"$ means \bdqn could not beat human score), and $\textit{SC}^+$: the number of interactions the \bdqn requires to beat the score of $\DDQN^+$. Note that Table~\ref{table:scores} does not aim to compare different methods. Additionally, there are many additional details that are not included in the mentioned papers which can significantly change the algorithms behaviors~\citep{henderson2017deep}), e.g., the reported scores of \DDQN in \citet{osband2016deep} are significantly higher than the reported scores in the original \DDQN paper, indicating many existing non-addressed advancements (Appendix~\ref{sub:reproducability}). 

We also implemented \DDQN drop-out a Thomson Sampling based algorithm motivated by \citet{gal2016dropout}. We observed that it is not capable of capturing the statistical uncertainty in the $Q$ function and falls short in outperforming a random(uniform) policy. 
\citet{osband2016deep} investigates the sufficiency of the estimated uncertainty and hardness in driving suitable exploitation out of it. It has been observed that drop-out results in the ensemble of infinitely many models but all models almost the same~\citep{dhillon2018stochastic,osband2016deep} Appendix~\ref{sub:dropout}.

As mentioned before, due to an efficient exploration-exploitation strategy, not only \bdqn improves the regret and enhance the sample complexity, but also reaches significantly higher scores. In contrast to naive exploration, \bdqn assigns less priority to explore actions that are already observed to be not worthy, resulting in \textit{better sample complexity}. Moreover, since \bdqn does not commit to adverse actions, it does not waste the model capacity to estimate the value of unnecessary actions in unnecessary states as good as the important ones, resulting in \textit{saving the model capacity} and better policies.

For the game \textit{Atlantis}, $\DDQN^+$ reaches score of $64.67k$ during the evaluation phase, while \bdqn reaches score of $3.24M$ after $20M$ time steps. After multiple run of \bdqn, we constantly observed that its performance suddenly improves to around $3M$ in the vicinity of $20M$ time steps. We closely investigate this behaviour and realized that \bdqn saturates the \textit{Atlantis} game and reaches the internal \textit{OpenAIGym} limit of $max\_episode$. After removing this limit, \bdqn  reaches score $62M$ after $15M$. Please refer to Appendix~\ref{apx:empirical} for the extensive empirical study.

\setlength{\tabcolsep}{1pt}
% @{\hspace{2\tabcolsep}}
\begin{table*}[t]
% \vspace*{-1cm}
  \centering
  \caption{
    Comparison of scores and sample complexities (scores in the first two columns are average of 100 consecutive episodes). The scores of $\DDQN^+$ are the reported scores of \DDQN in~\citet{van2016deep} after running it for 200M interactions at evaluation time where the $\varepsilon=0.001$. Bootstrap \DQN \citep{osband2016deep}, CTS, Pixel, Reactor \citep{ostrovski2017count} are borrowed from the original papers. For NoisyNet \citep{fortunato2017noisy}, the scores of NoisyDQN are reported. Sample complexity, \textit{SC}: the number of samples the \bdqn requires to beat the human score \citep{mnih2015human}($``-"$ means \bdqn could not beat human score). $\textit{SC}^+$: the number of interactions the \bdqn requires to beat the score of $\DDQN^+$. }
  \footnotesize
  \hspace*{-.2cm}
  \begin{tabular}{l|ccccccccc|c|c|c}
%
% \centering
%     \multicolumn{1}{c}{}&\multicolumn{12}{c}\\
\toprule
 Game  & \bdqn & \DDQN & $\DDQN^+$ & Bootstrap  & NoisyNet & CTS & Pixel  & Reactor & Human & SC & $SC^+$&Step  \\
\midrule
Amidar & \textbf{5.52k}& 0.99k & 0.7k & 1.27k & 1.5k  & 1.03k & 0.62k  & 1.18k & 1.7k& 22.9M &4.4M & 100M\\ 
Alien & 3k & 2.9k& 2.9k & 2.44k & 2.9k  & 1.9k &   1.7k  & \textbf{3.5k}&6.9k& - &36.27M & 100M\\ 
Assault & \textbf{8.84k} &2.23k&  5.02k &8.05k & 3.1k & 2.88k &  1.25k & 3.5k&1.5k &1.6M & 24.3M & 100M\\ 
Asteroids  & \textbf{14.1k} &0.56k& 0.93k & 1.03k &  2.1k & 3.95k & 0.9k & 1.75k&13.1k &58.2M  & 9.7M & 100M\\ 
Asterix  & \textbf{58.4k} & 11k& 15.15k &19.7k &  11.0 & 9.55k & 1.4k &  6.2k &8.5k& 3.6M  & 5.7M & 100M\\ 
BeamRider  & 8.7k & 4.2k& 7.6k & \textbf{23.4k} & 14.7k & 7.0k & 3k & 3.8k& 5.8k& 4.0M  & 8.1M  & 70M\\ 
BattleZone  & \textbf{65.2k} & 23.2k& 24.7k & 36.7k & 11.9k & 7.97k & 10k &  45k& 38k &25.1M  & 14.9M  & 50M\\ 
Atlantis  & 3.24M &39.7k& 64.76k& 99.4k & 7.9k & 1.8M & 40k &   \textbf{9.5M}& 29k& 3.3M  & 5.1M  & 40M\\ 
DemonAttack  & 11.1k & 3.8k& 9.7k& 82.6k &26.7k & \textbf{39.3k} &  1.3k & 7k& 3.4k & 2.0M  & 19.9M &40M\\ 
Centipede  & \textbf{7.3k} & 6.4k& 4.1k & 4.55k & 3.35k & 5.4k &  1.8k  & 3.5k &12k& -  & 4.2M & 40M\\ 
BankHeist  & 0.72k &0.34k&  0.72k &\textbf{1.21k} & 0.64k & 1.3k &  0.42k & 1.1k & 0.72k &2.1M  & 10.1M & 40M\\ 
CrazyClimber  & 124k & 84k& 102k &\textbf{138k} & 121k & 112.9k & 75k & 119k & 35.4k&0.12M  & 2.1M &40M\\
ChopperCmd & \textbf{72.5k} & 0.5k&  4.6k & 4.1k &5.3k & 5.1k & 2.5k  & 4.8k&9.9k& 4.4M  &2.2M  &40M\\ 
Enduro  & 1.12k &0.38k& 0.32k &1.59k &  0.91k & 0.69k & 0.19k & \textbf{2.49k} & 0.31k& 0.82M  & 0.8M& 30M\\ 
Pong  & \textbf{21} &18.82&  \textbf{21} &20.9 & \textbf{21}  & 20.8  & 17 & 20& 9.3 &1.2M  &2.4M  & 5M\\ 
    \bottomrule
  \end{tabular}
  \label{table:scores}
\end{table*}

 \section{Related Work}

The complexity of the exploration-exploitation trade-off has been deeply investigated in RL literature for both continuous and discrete MDPs~\citep{kearns2002near,brafman2003r,asmuth2009bayesian,kakade2003exploration,ortner2012online,osband2014model,osband2014near}.~\citet{jaksch2010near} investigate the regret analysis of MDPs with finite state and action and deploy \OFU~\citep{auer2003using} to guarantee a regret upper bound, while \citet{ortner2012online} relaxes it to a continuous state space and propose a sub-linear regret bound. 
\citet{azizzadenesheli2016reinforcement} deploys \OFU~and propose a regret upper bound for Partially Observable MDPs (POMDPs) using spectral methods \citep{anandkumar2014tensor}. Furthermore, \citet{bartok2014partial} tackles a general case of partial monitoring games and provides minimax regret guarantee. For linear quadratic models \OFU~ is deployed to provide an optimal regret bound \citep{abbasi-yadkori2011regret}. In multi-arm bandit, Thompson sampling have been studied both from empirical and theoretical point of views~\citep{chapelle2011empirical,agrawal2012analysis,russo2014learninginfo}. A natural adaptation of this algorithm to RL, posterior sampling RL (PSRL)~\citet{strens2000bayesian} also shown to have good frequentist and Bayesian performance guarantees ~\citep{osband2013more,abbasi2015bayesian}. Inevitably  for PSRL, these methods also have hard time to become scalable to high dimensional problems, \citep{ghavamzadeh2015bayesian,engel2003bayes,dearden1998bayesian,tziortziotis2013linear}.

Exploration-exploitation trade-offs has been theoretically studied in
RL but a prominent problem in high dimensional environments~\citep{mnih2015human,abel2016exploratory,azizzadenesheli2016rich}. Recent success of Deep RL  on Atari games~\citep{mnih2015human}, the board game Go \citep{silver2017mastering}, robotics \citep{levine2016end}, self-driving cars \citep{shalev2016safe}, and safety in RL \citep{lipton2016combating} propose promises on deploying deep RL in high dimensional problem.

To extend the exploration-exploitation efficient methods to high dimensional RL problems,~\citet{osband2016deep} suggest bootstrapped-ensemble approach that trains several models in parallel to approximate the posterior distribution. \citet{bellemare2016unifying} propose a way to come up with a surrogate to optimism in high dimensional RL. Other works suggest using a variational approximation to the Q-networks~\citep{lipton2016efficient} or a concurrent work on noisy network \citep{fortunato2017noisy} suggest to randomize the Q-network. 
However, most of these approaches significantly increase the computational cost of DQN,  e.g., the bootstrapped-ensemble incurs a  computation overhead that is linear in the number of bootstrap models. %neither approach produced much beyond modest gains on Atari games. 

Concurrently, \citet{levine2017shallow} proposes least-squares temporal difference which learns a linear model on the feature representation in order to estimate the Q-function. They use $\varepsilon$-greedy approach and provide results on five Atari games. Out of these five games, one is common with our set of 15 games which \bdqn outperforms it by a factor of $360\%$ (w.r.t. the score reported in their paper). As also suggested by our theoretical derivation, our empirical study illustrates that performing Bayesian regression instead, and sampling from the result yields a substantial benefit. This indicates that it is not just the higher data efficiency at the last layer, but that leveraging an explicit uncertainty representation over the value function is of substantial benefit.

%\vspace*{-0.6in}
\section{Conclusion}
In this work, we proposed \LinPSRL and \LinUCB, two LinReL algorithms for continuous MDPs. We then proposed \bdqn, a deep RL extension of these methods to high dimensional environments. \bdqn deploys Thompson sampling and provides an efficient exploration/exploitation in a computationally efficient manner. It involved making simple modifications to the \DDQN architecture by replacing the linear regression learning of the last layer with Bayesian linear regression. We demonstrated significantly improvement training, convergence, and regret along with much better performance in many games.

While our current regret upper bounds seem to be sub-optimal in terms of $H$ (we are not aware of any tight lower bound), in the future, we plan to deploy the analysis in \citep{antos2008learning,lazaric2010finite} and develop a tighter regret upper bounds as well as an information theoretic lower bound. We also plan to extend the analysis in~\citet{abeille2017linear} and develop Thompson sampling methods with a performance guarantee and finally go beyond the linear models~\citep{jiang2016contextual}. While finding optimal continuous action given a Q function can be computationally intractable, we aim to study the relaxation of these approaches in continuous control tasks in the future.

%  \kamyar{future work}
%  In this work, for \bdqn we  randomize the last layer of the model and use a Bayesian linear regression framework to train it, and alternate it with training the other layers of the network. An alternative approach is to train it end-to-end using stochastic optimization approaches \citep{welling2011bayesian}. This could significantly speed up training while retaining the computational efficiency  of \DDQN.  In this work, we have considered value based approaches in deep RL and we plan to explore the advantages of Thompson sampling based exploration in policy gradient based approaches in future.

% \anima{where is future work???}

\newpage
 \section*{Acknowledgments}
 The authors would like to thank Emma Brunskill, Zachary C. Lipton, Marlos C. Machado,  Ian Osband, Gergely Neu, Kristy Choi, particularly Akshay Krishnamurthy, and Nan Jiang during  ICLR2019 openreview, for their feedbacks, suggestions, and helps. K. Azizzadenesheli is supported in part by NSF Career Award CCF-1254106 and AFOSR YIP FA9550-15-1-0221. This research has been conducted when the first author was a visiting researcher at Stanford University and Caltech. A. Anandkumar is supported in part by Bren endowed chair, Darpa PAI, and Microsoft, Google, Adobe faculty fellowships, NSF Career Award CCF-1254106, and AFOSR YIP FA9550-15-1-0221. All the experimental study have been done using Caltech AWS credits grant.

% \newpage
\bibliographystyle{apalike}
\bibliography{Master_arxiv}
\newpage
% \onecolumn
\appendix

\textit{Note}: In the main text, for the sake of brevity, we use "BLR" for both i.i.d. non i.i.d. samples.

\section{Empirical Study}\label{apx:empirical}
We empirically study the behavior of \bdqn along with \DDQN as reported in Fig.~\ref{fig:exp} and Table~\ref{table:scores}. We run each of these algorithms for the number steps mentioned in the \textit{Step} column of Table~\ref{table:scores}. We observe that \bdqn significantly improves the sample complexity over \DDQN and reaches the highest scores of \DDQN in a much fewer number of interactions required by \DDQN. Due to \bdqn{}'s better exploration-exploitation strategy, we expected \bdqn to improve the convergence and enhance the sample complexity, but we further observed a significant improvement in the scores as well. It is worth noting that since the purpose of this study is sample complexity and regret analysis, design of evaluation phase (as it is common for $\varepsilon$-greedy methods) is not relevant. In our experiments, e.g., in the game Pong, \DDQN reaches the score of 18.82 during the learning phase. When we set $\varepsilon$ to a quantity close to zero, \DDQN reaches the score of $21$ while \bdqn just converges to 21 as uncertainty decays.

In addition to the Table~\ref{table:scores}, we also provided the score ratio as well as the area under the performance plot ratio comparisons in Table~\ref{table:percentage}.
\newcommand{\widplot}{1.1}
\newcommand{\widmini}{0.19}
\newcommand{\widmi}{0.1cm}
\begin{figure*}[ht]
\vspace*{-0.0cm}
  \begin{minipage}{\widmini\textwidth}
	 \centering
\includegraphics[width=\widplot\linewidth]{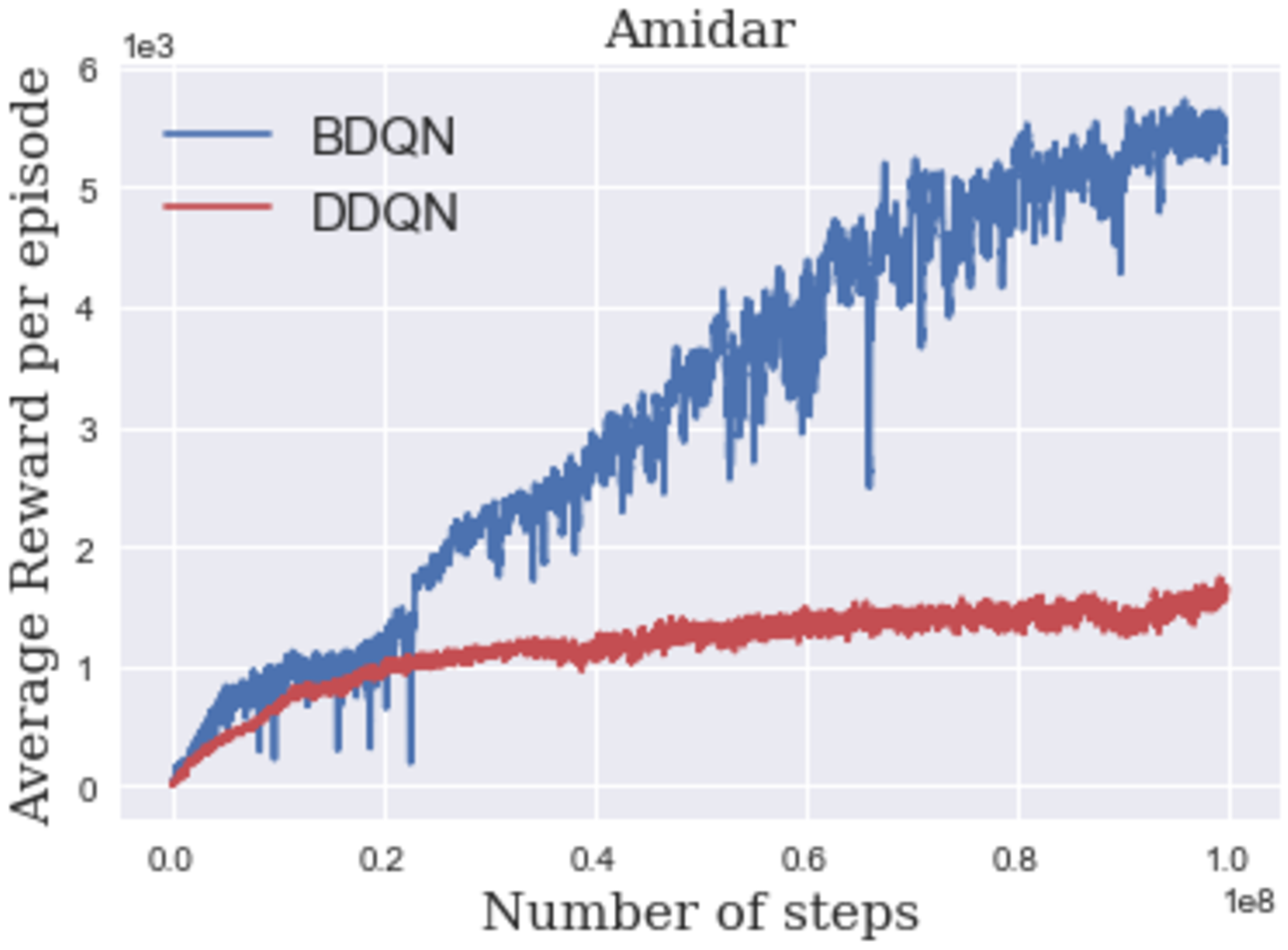}
%      \vspace*{-0.8cm}
%      \caption*{(a)}
  \end{minipage}\hfill
  \begin {minipage}{\widmini\textwidth}
     \centering
\includegraphics[width=\widplot\linewidth]{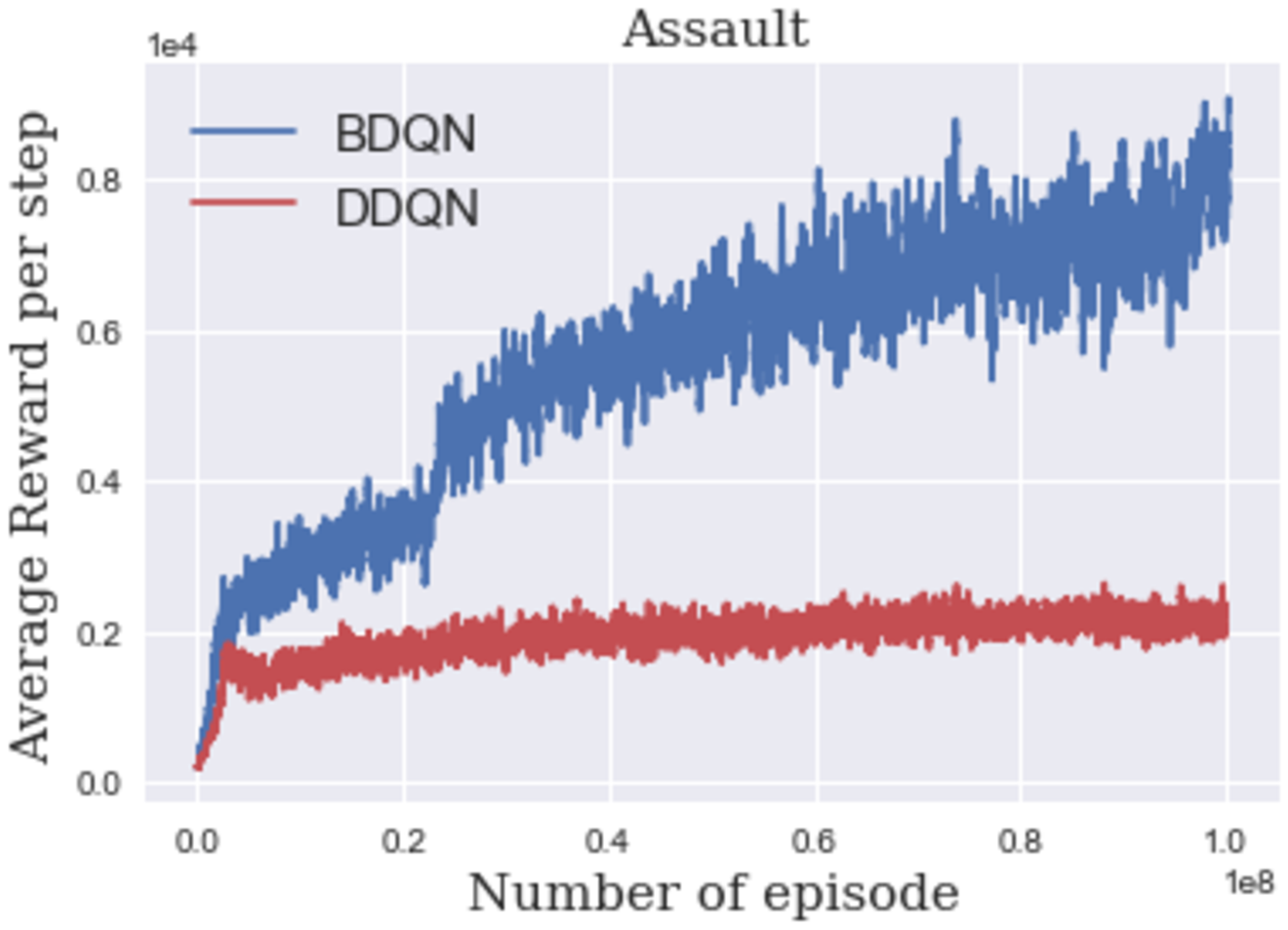}
%      \vspace*{-0.8cm}
%      \caption*{(b)}
  \end{minipage}\hfill
  \begin {minipage}{\widmini\textwidth}
     \centering
	 \includegraphics[width=\widplot\linewidth]{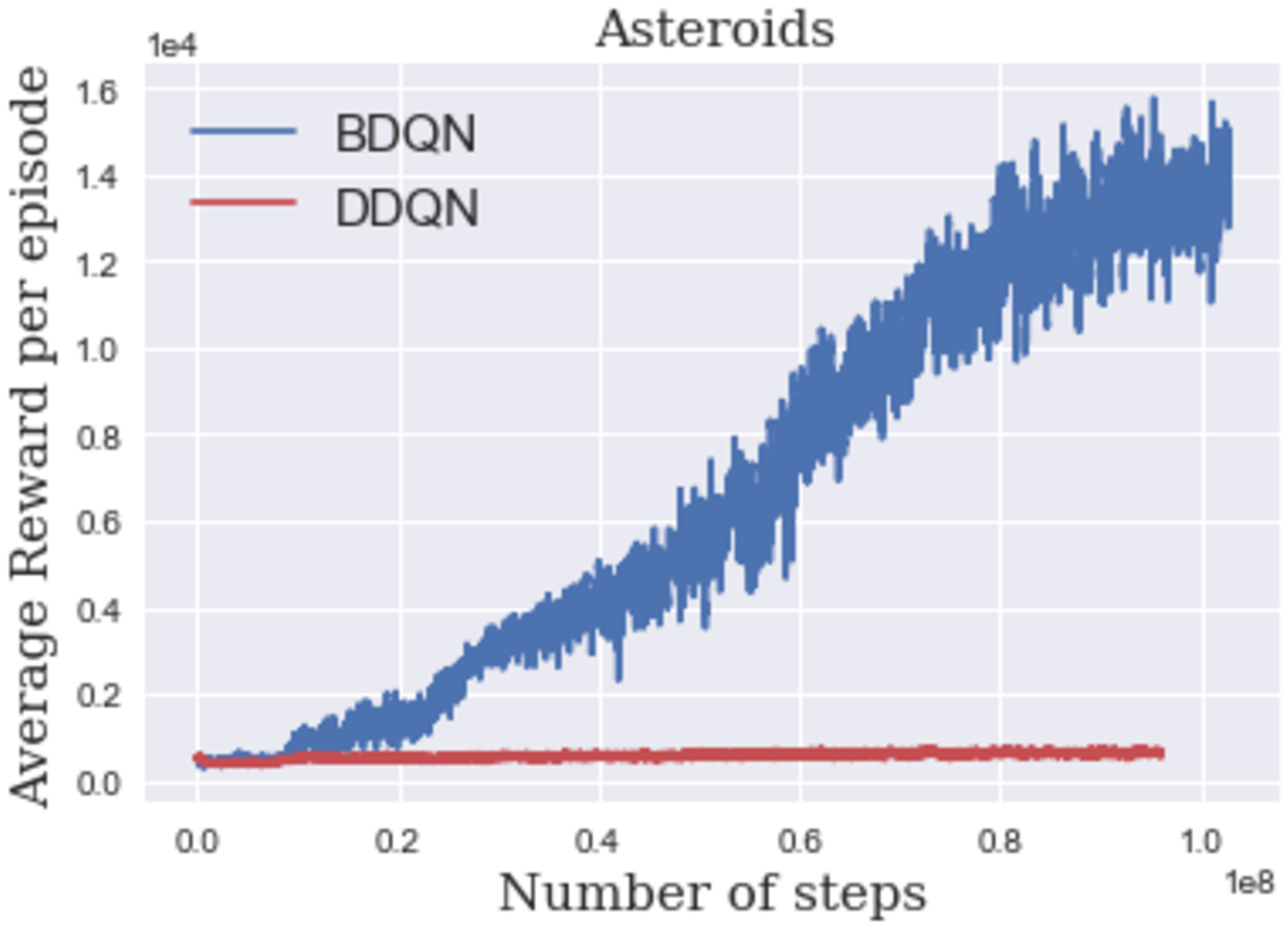}
%      \vspace*{-0.8cm}
%      \caption*{(c)}
  \end{minipage}\hfill
  \begin {minipage}{\widmini\textwidth}
     \centering
% \vspace*{-0.3cm}	
\includegraphics[width=\widplot\linewidth]{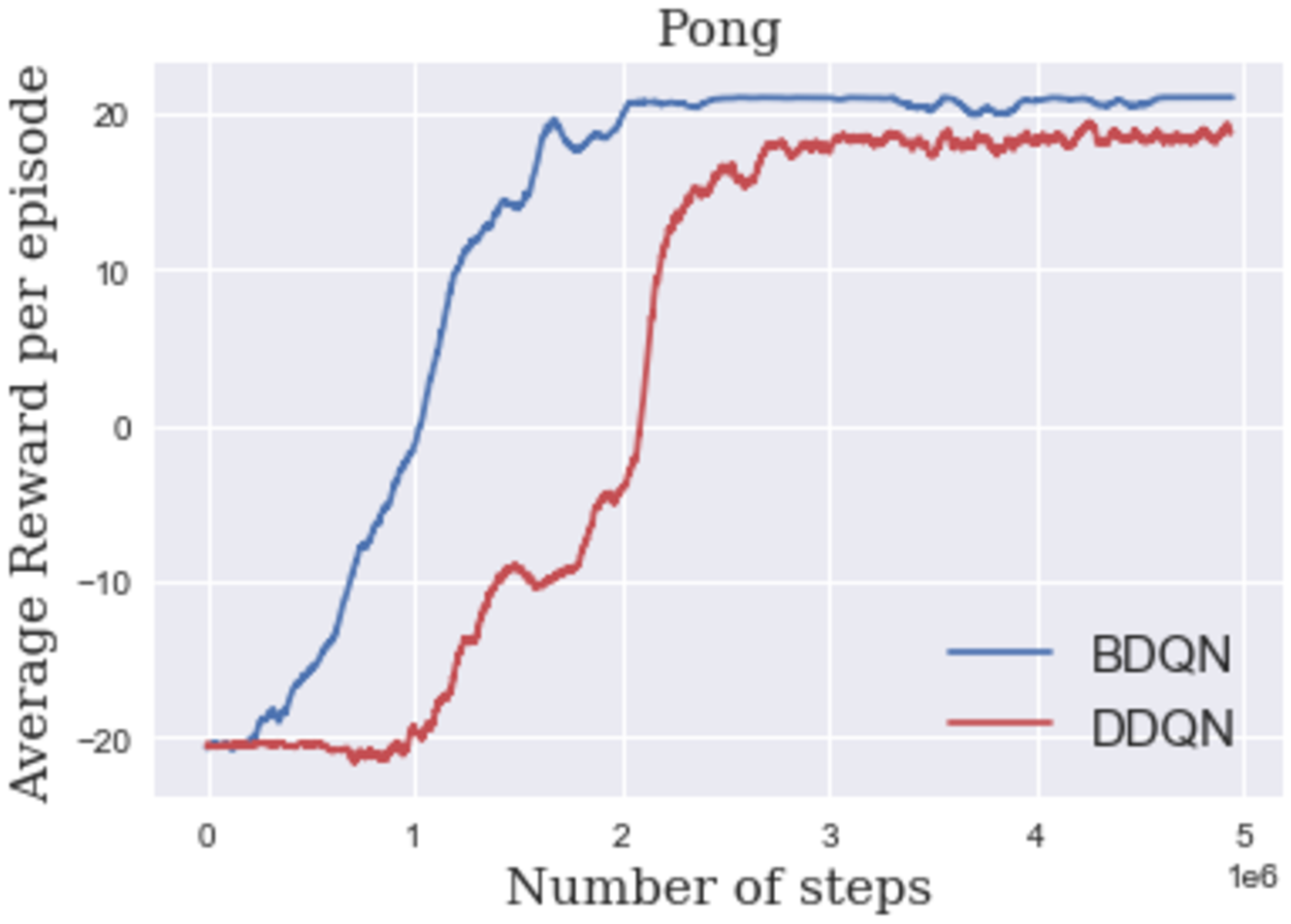}
%      \vspace*{-0.7cm}
%      \caption*{(d)}
  \end{minipage}\hfill
  \begin {minipage}{\widmini\textwidth}
     \centering
% \vspace*{-0.3cm}
\includegraphics[width=\widplot\linewidth]{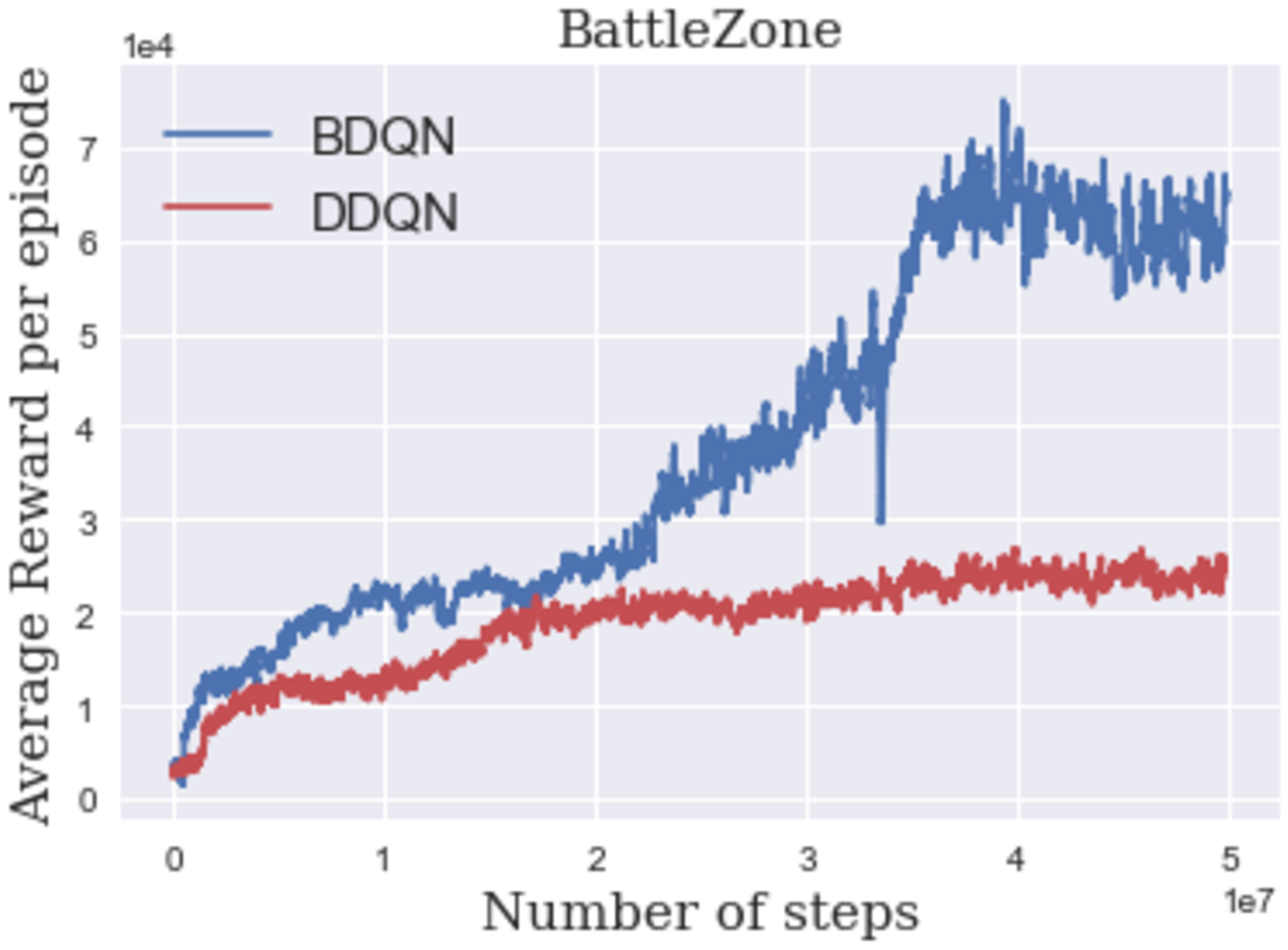}
%      \vspace*{-0.7cm}
%      \caption*{(e)}
  \end{minipage}\hfill
  \begin {minipage}{\widmini\textwidth}
     \centering
% \vspace*{-0.3cm}
\includegraphics[width=\widplot\linewidth]{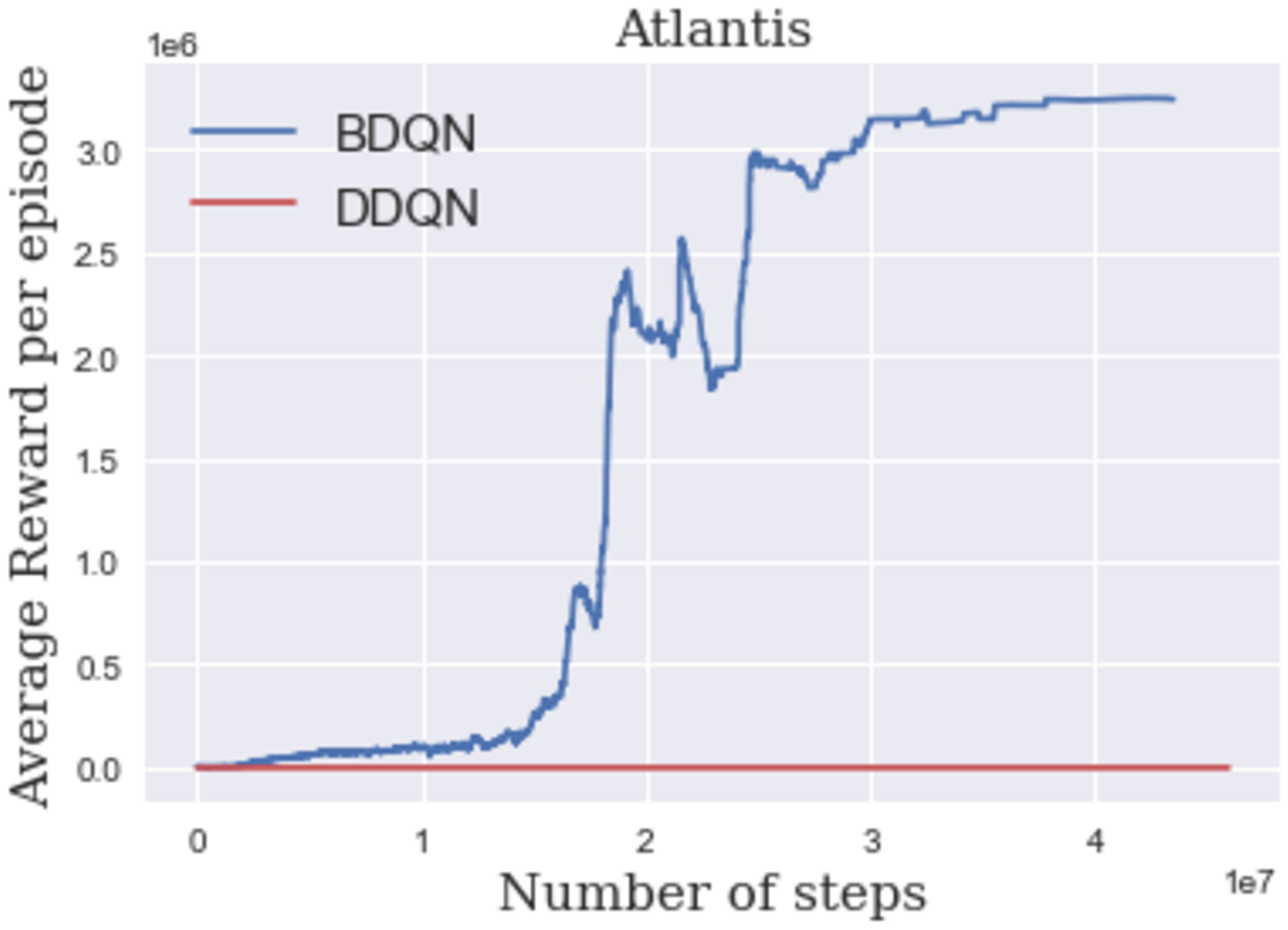}
%      \vspace*{-0.7cm}
%      \caption*{(f)}
  \end{minipage}\hfill
  \begin {minipage}{\widmini\textwidth}
     \centering
% \vspace*{-0.3cm}
\includegraphics[width=\widplot\linewidth]{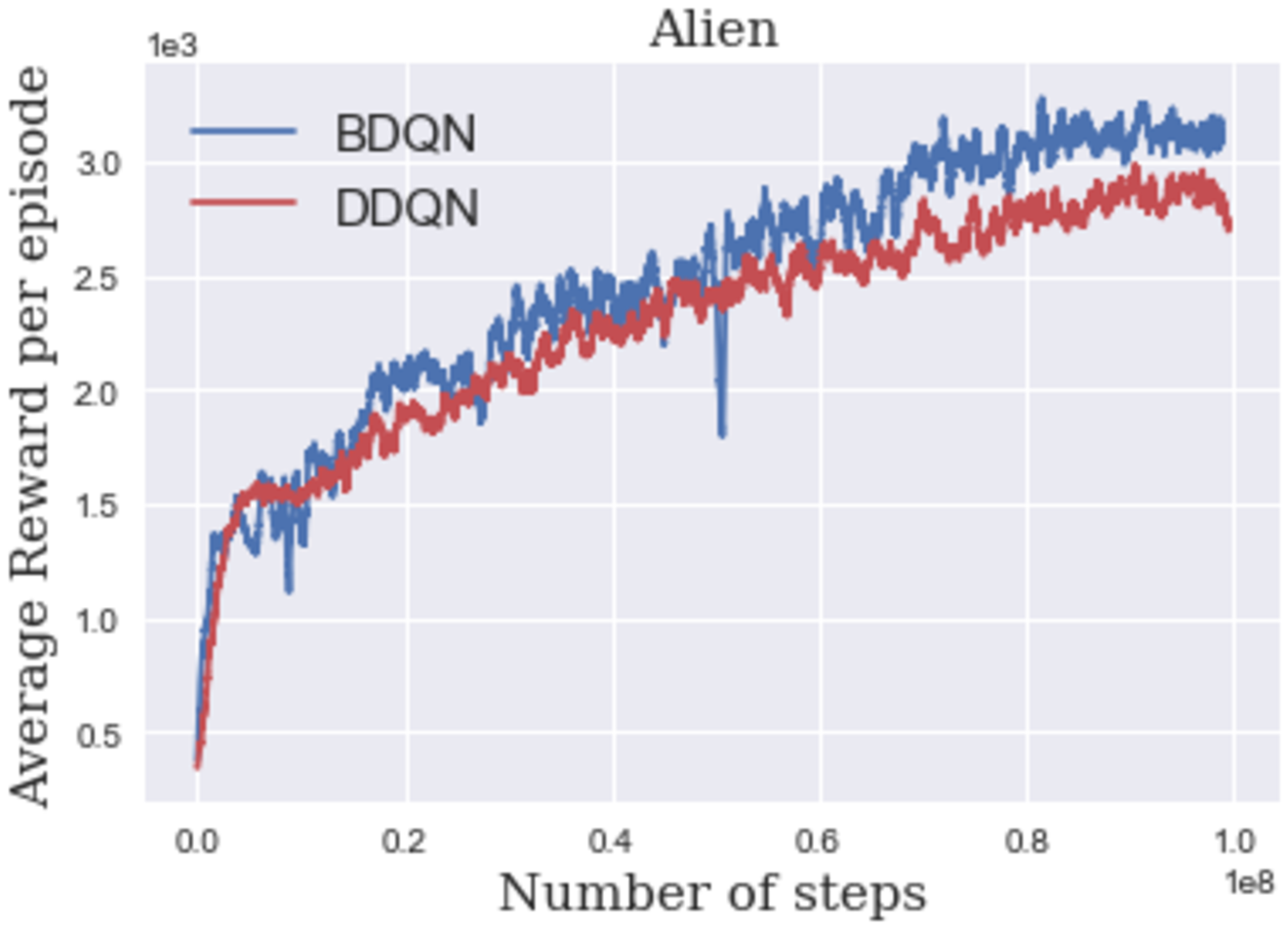}
%      \vspace*{-0.7cm}
%      \caption*{(f)}
  \end{minipage}\hfill
  \begin {minipage}{\widmini\textwidth}
     \centering
% \vspace*{-0.3cm}
\includegraphics[width=\widplot\linewidth]{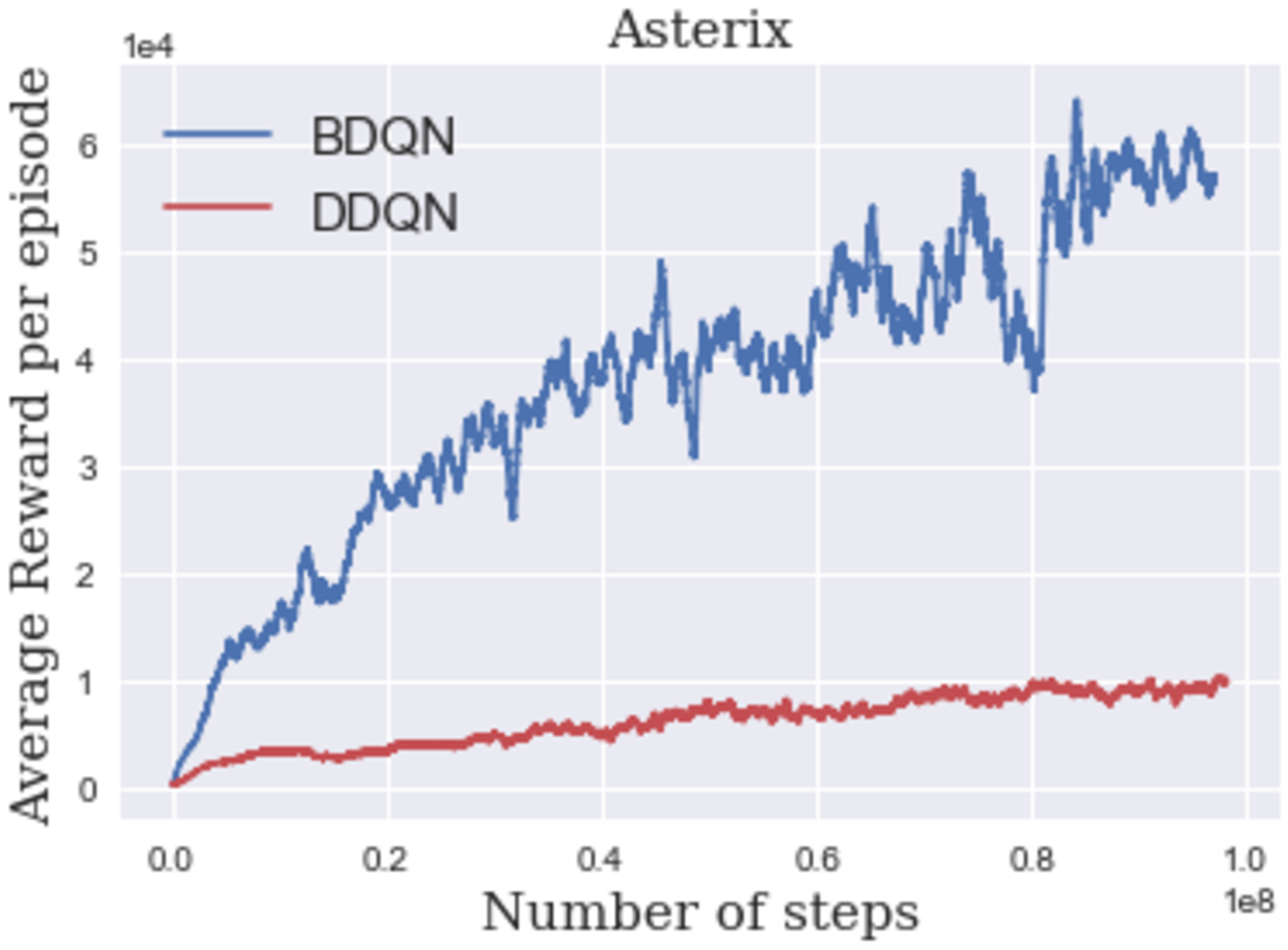}
%      \vspace*{-0.7cm}
%      \caption*{(f)}
  \end{minipage}\hfill   
  \begin {minipage}{\widmini\textwidth}
     \centering
% \vspace*{-0.3cm}
\includegraphics[width=\widplot\linewidth]{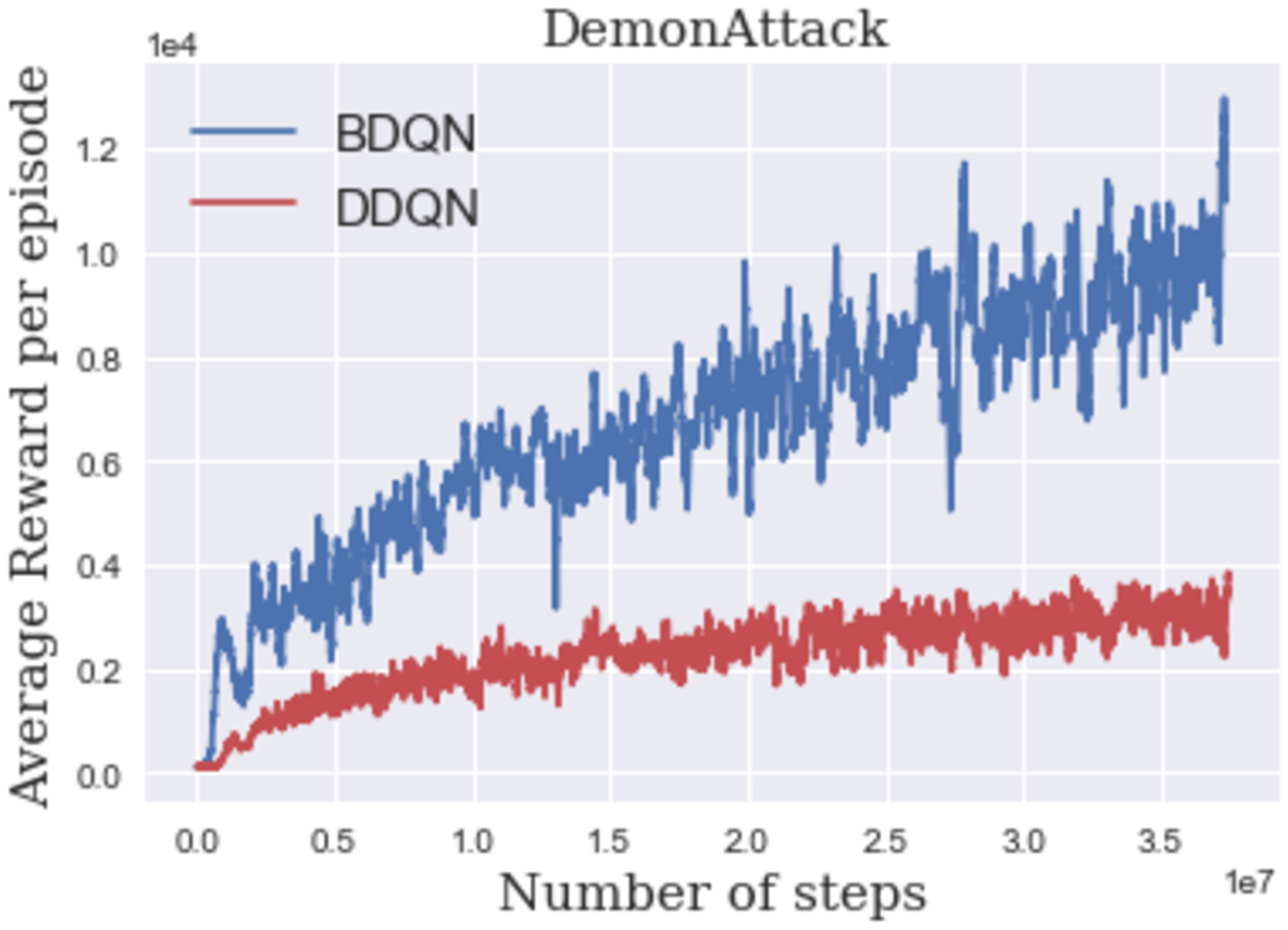}
%      \vspace*{-0.7cm}
%      \caption*{(f)}
  \end{minipage}\hfill 
  \begin {minipage}{\widmini\textwidth}
     \centering
% \vspace*{-0.3cm}
\includegraphics[width=\widplot\linewidth]{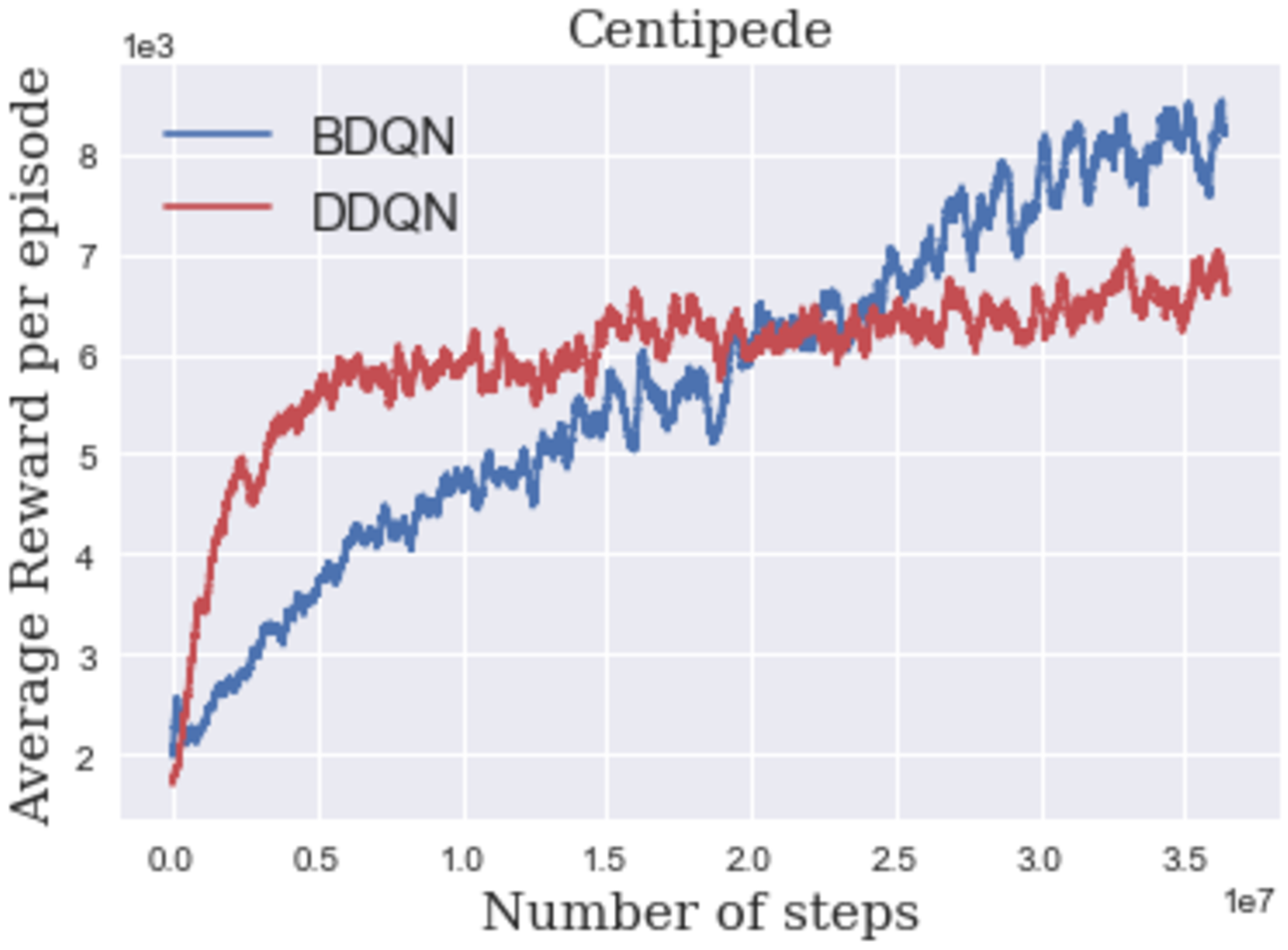}
%      \vspace*{-0.7cm}
%      \caption*{(f)}
  \end{minipage}\hfill
  \begin {minipage}{\widmini\textwidth}
     \centering
% \vspace*{-0.3cm}
\includegraphics[width=\widplot\linewidth]{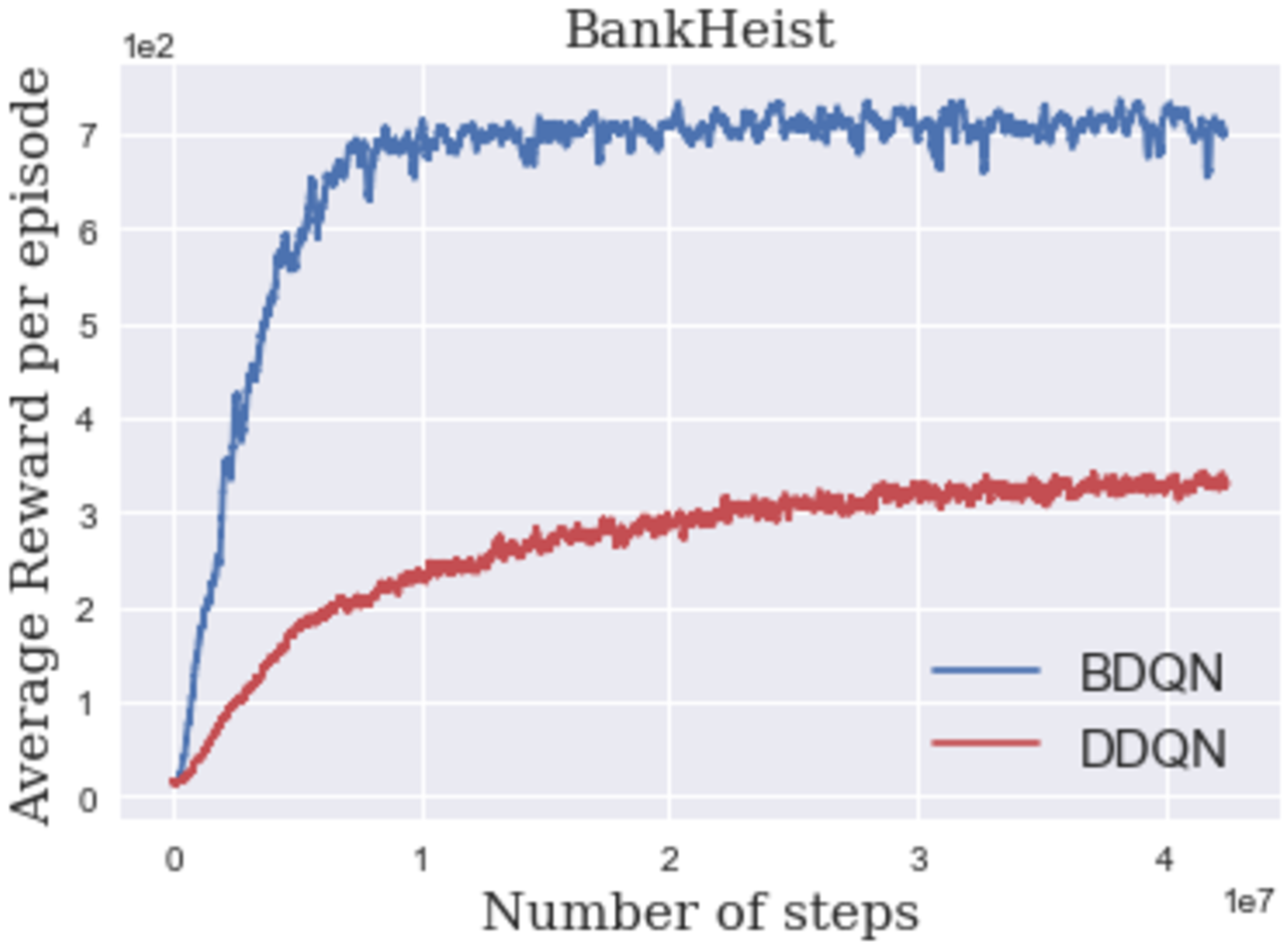}
%      \vspace*{-0.7cm}
%      \caption*{(f)}
  \end{minipage}%\hfill
  \hspace*{\widmi}
  \begin {minipage}{\widmini\textwidth}
     \centering
% \vspace*{-0.3cm}
\includegraphics[width=\widplot\linewidth]{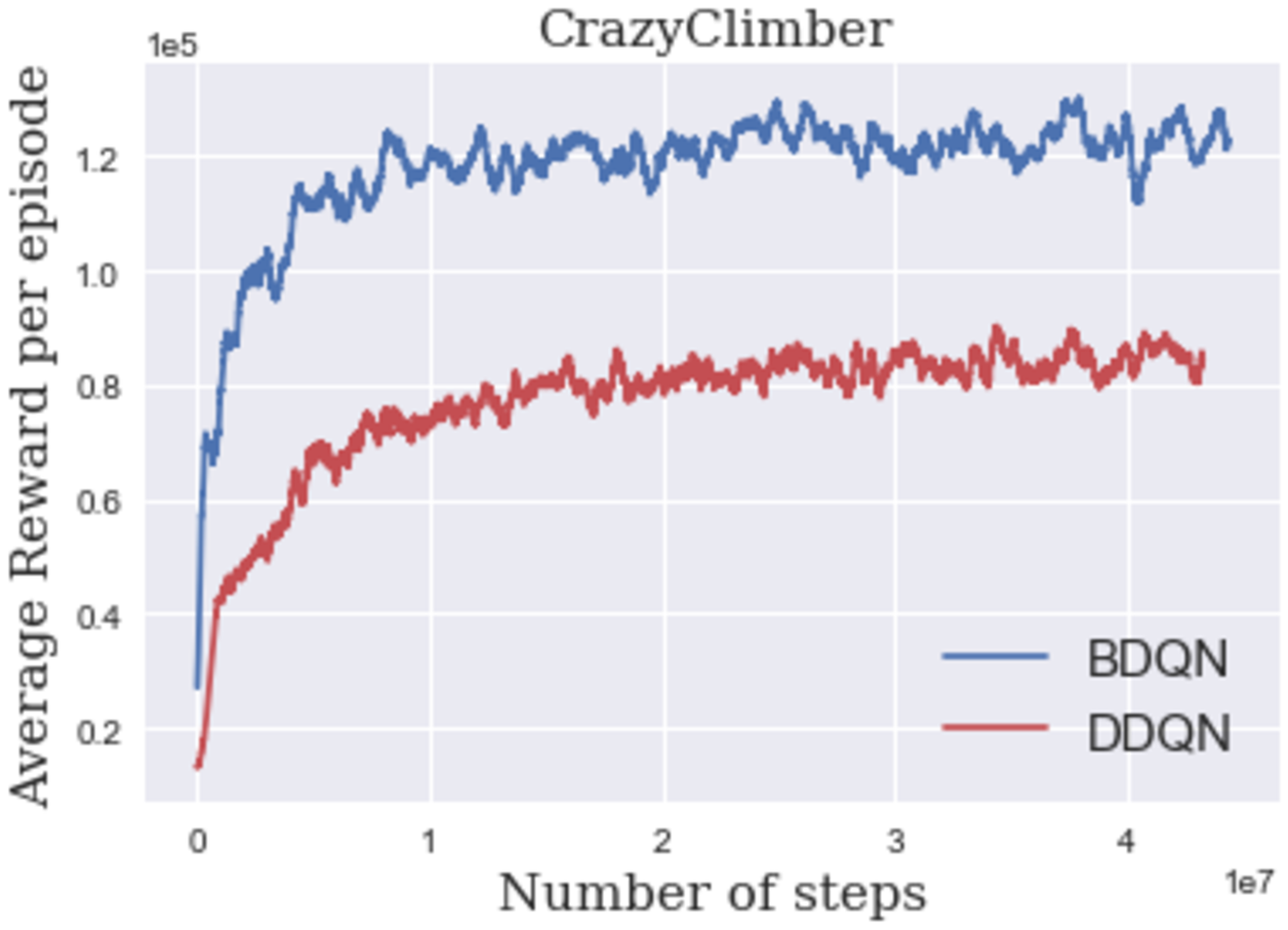}
%      \vspace*{-0.7cm}
%      \caption*{(f)}
  \end{minipage}%\hfill
  \hspace*{\widmi}
  \begin {minipage}{\widmini\textwidth}
     \centering
% \vspace*{-0.3cm}
\includegraphics[width=\widplot\linewidth]{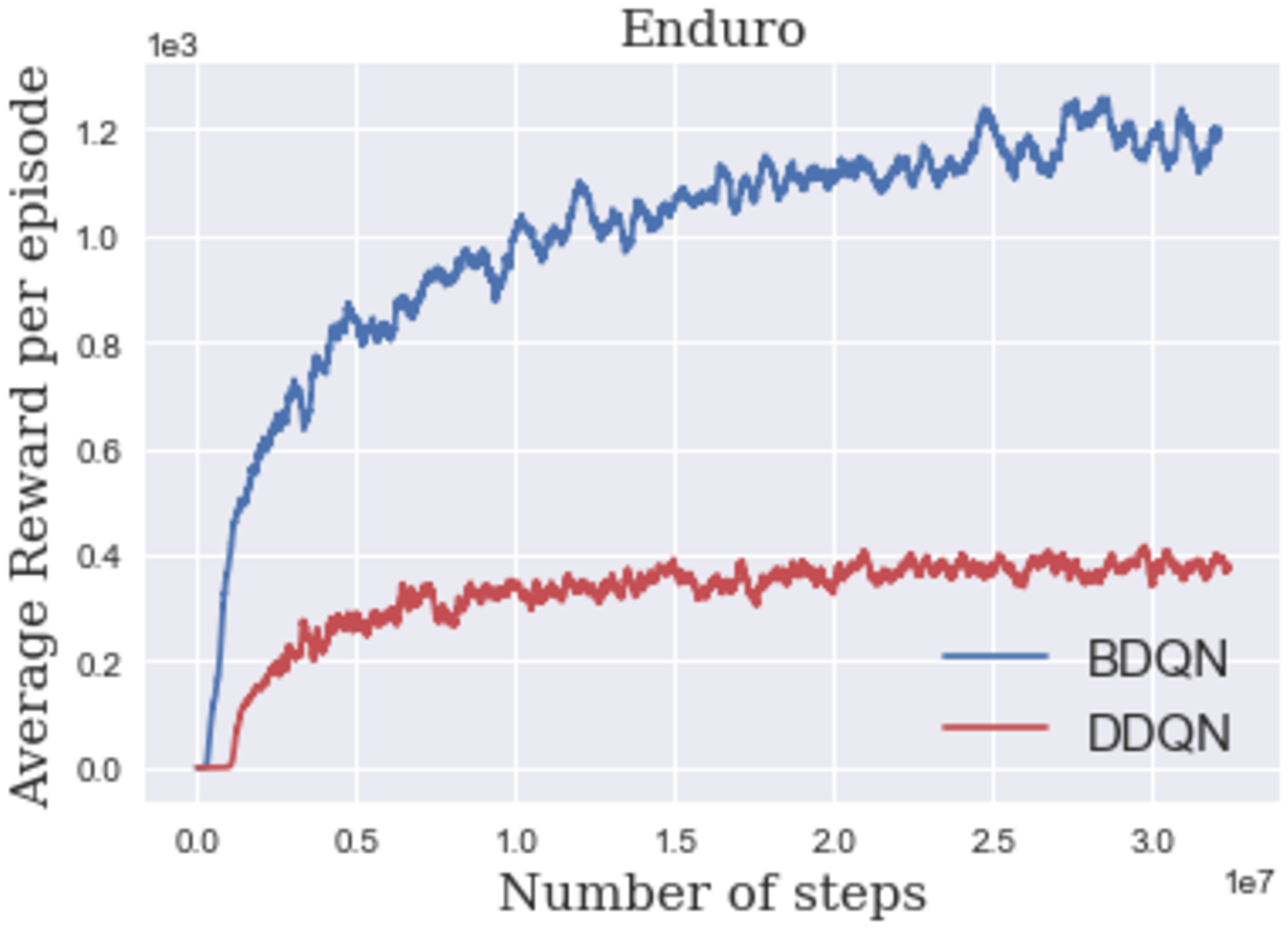}
%      \vspace*{-0.7cm}
%      \caption*{(f)}
  \end{minipage}%\hfill
  \hspace*{\widmi}
  \begin {minipage}{\widmini\textwidth}
     \centering
% \vspace*{-0.3cm}
\includegraphics[width=\widplot\linewidth]{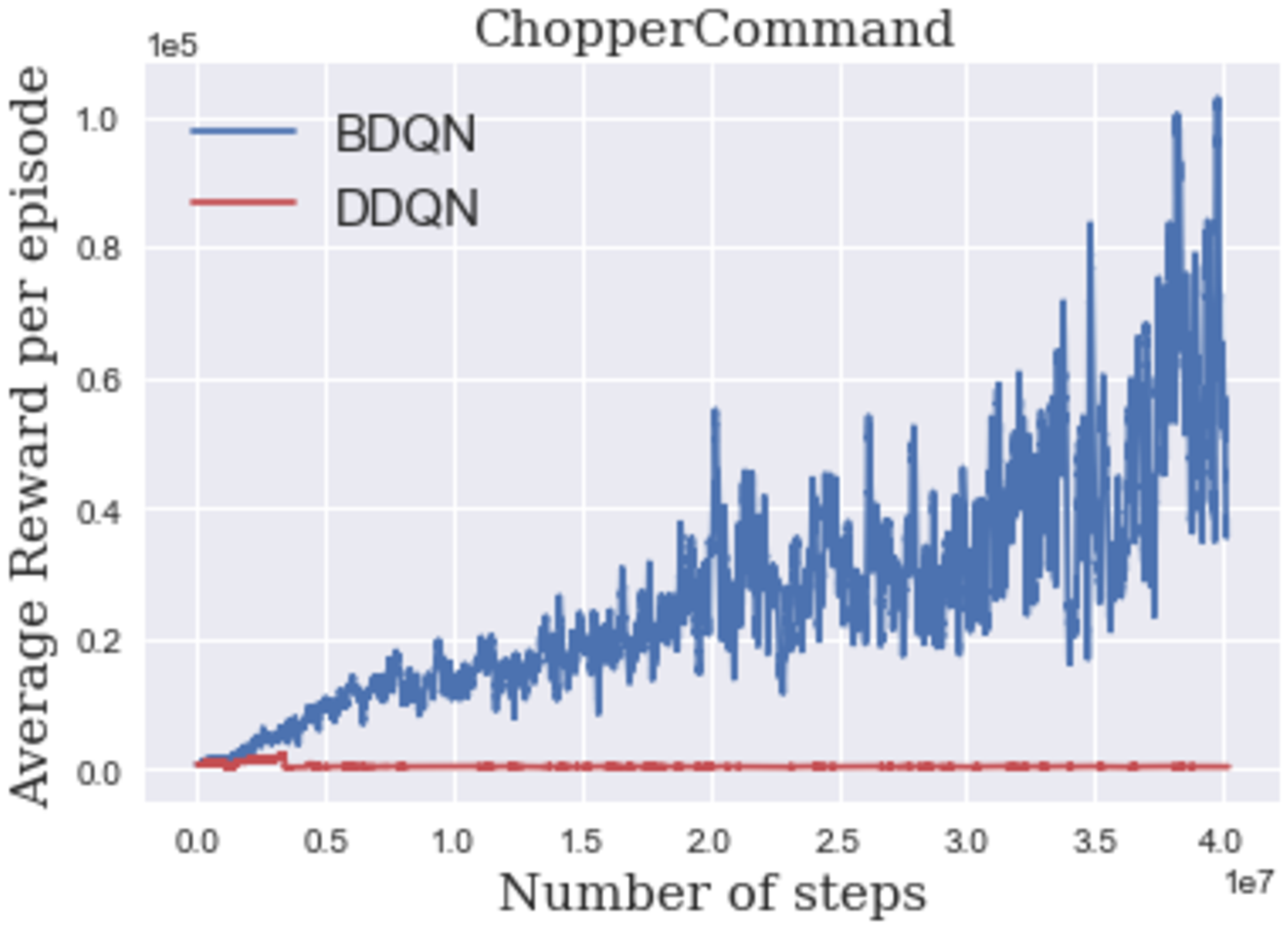}
%      \vspace*{-0.7cm}
%      \caption*{(f)}
  \end{minipage}%\hfill
  \hspace*{\widmi}
  \begin {minipage}{\widmini\textwidth}
     \centering
% \vspace*{-0.3cm}
\includegraphics[width=\widplot\linewidth]{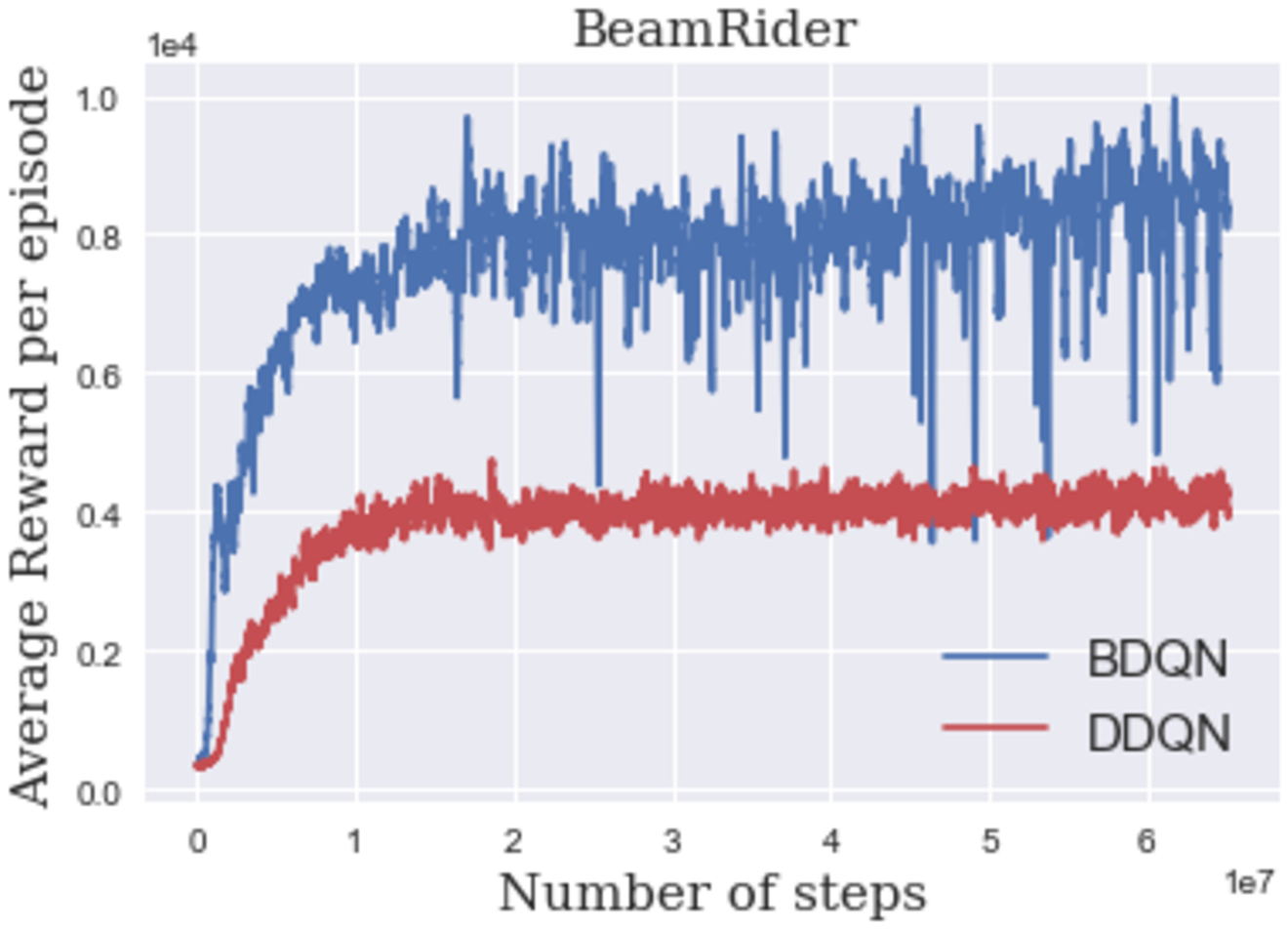}
%      \vspace*{-0.7cm}
%      \caption*{(f)}
  \end{minipage}%\hfill
\caption{The comparison between \DDQN and \bdqn}
   \label{fig:exp}
   \vspace*{-0.4cm}
\end{figure*}

We observe that \bdqn learns significantly better policies due to its efficient explore/exploit in a much shorter period of time. Since \bdqn on game \textit{Atlantis} promise a big jump around time step $20M$, we ran it five more times in order to make sure it was not just a coincidence Fig.~\ref{fig:expatlantis}. For the game Pong, we ran the experiment for a longer period but just plotted the beginning of it in order to observe the difference. Due to cost of deep RL methods, for some games, we run the experiment until a plateau is reached.

\textbf{Interesting point about game \textit{Atlantis}}

For the game \textit{Atlantis}, $\DDQN^+$ reaches the score of $64.67k$ during the evaluation phase, while \bdqn reaches score of $3.24M$ after $20M$ interactions. As it is been shown in Fig.~\ref{fig:exp}, \bdqn saturates for \textit{Atlantis} after $20M$ interactions. We realized that \bdqn reaches the internal \textit{OpenAIGym} limit of $max\_episode$, where relaxing it improves score after $15M$ steps to $62M$.

\begin{table}[h]
  \centering
  \caption{
$1st$ column: score ratio of \bdqn to \DDQN run for same number of time steps. $2nd$ column: score ratio of \bdqn to $\DDQN^+$. $3rd$ column: score ratio of \bdqn to human scores reported at \citet{mnih2015human}. $4th$ column: Area under the performance plot ration (AuPPr) of \bdqn to \DDQN. AuPPr is the integral of area under the performance plot ration. For Pong, since the scores start form $-21$, we shift it up by 21.  $5th$ column: Sample complexity, \textit{SC}: the number of samples the \bdqn requires to beat the human score \citep{mnih2015human}($``-"$ means \bdqn could not beat human score). $6th$ column: $\textit{SC}^+$: the number of samples the \bdqn requires to beat the score of $\DDQN^+$. We run both \bdqn and \DDQN for the same number of times steps, stated in the last column.
}
  \footnotesize
  \begin{tabular}{l|cccccc|c}
    \multicolumn{1}{c}{}&\multicolumn{7}{c}{} \\
    \toprule
Game  & $\dfrac{\bdqn}{\DDQN}$ & $\dfrac{\bdqn}{\DDQN^+}$& $\dfrac{\bdqn}{\textsc{Human}}$& AuPPr &SC & $SC^+$& Steps \\
\midrule
Amidar & 558\% & 788\% & 325\%&280\% &  22.9M &4.4M & 100M\\
Alien & 103\% & 103\%  & 43\%&110\% &  - &36.27M &100M\\ 
Assault &396\% & 176\% & 589\%&290\% & 1.6M & 24.3M &100M\\ 
Asteroids  & 2517\% & 1516\% & 108\%&680\% &58.2M  & 9.7M &  100M\\ 
Asterix  & 531\% & 385\%  & 687\%&590\% &3.6M  & 5.7M &100M\\ 
BeamRider  & 207\% & 114\% &150\%&210\% & 4.0M  & 8.1M  &   70M\\ 
BattleZone  & 281\%  & 253\% & 172\%&180\% &25.1M  & 14.9M  & 50M\\ 
Atlantis  & 80604\% & 49413\% & 11172\%&380\% & 3.3M  & 5.1M  & 40M\\ 
DemonAttack  & 292\% & 114\% & 326\%&310\% & 2.0M  & 19.9M & 40M\\ 
Centipede  & 114\% & 178\% & 61\%&105\% &-  & 4.2M & 40M\\ 
BankHeist  & 211\% & 100\% & 100\%&250\% &2.1M  & 10.1M & 40M\\ 
CrazyClimber  & 148\% & 122\% & 350\%&150\% &0.12M  & 2.1M &  40M\\ 
ChopperCommand & 14500\% & 1576\% & 732\%&270\% & 4.4M  &2.2M  &  40M\\ 
Enduro  & 295\% & 350\% &361\% &300\%&0.82M  & 0.8M& 30M\\ 
Pong  & 112\% & 100\% & 226\%&130\% &1.2M  &2.4M  & 5M\\ 
    \bottomrule
  \end{tabular}
  \label{table:percentage}
\end{table}

After removing the maximum episode length limit for the game Atlantis, \bdqn gets the score of 62M. This episode is long enough to fill half of the replay buffer and make the model perfect for the later part of the game but losing the crafted skill for the beginning of the game. We observe in Fig.~\ref{fig:atlantislimit} that after losing the game in a long episode, the agent forgets a bit of its skill and loses few games but wraps up immediately and gets to score of $30M$. To overcome this issue, one can expand the replay buffer size, stochastically store samples in the reply buffer where the later samples get stored with lowest chance, or train new models for the later parts of the episode. There are many possible cures for this interesting observation and while we are comparing against DDQN, we do not want to advance \bdqn structure-wise.
\begin{figure}[ht]
\centering    \includegraphics[width=0.25\linewidth]{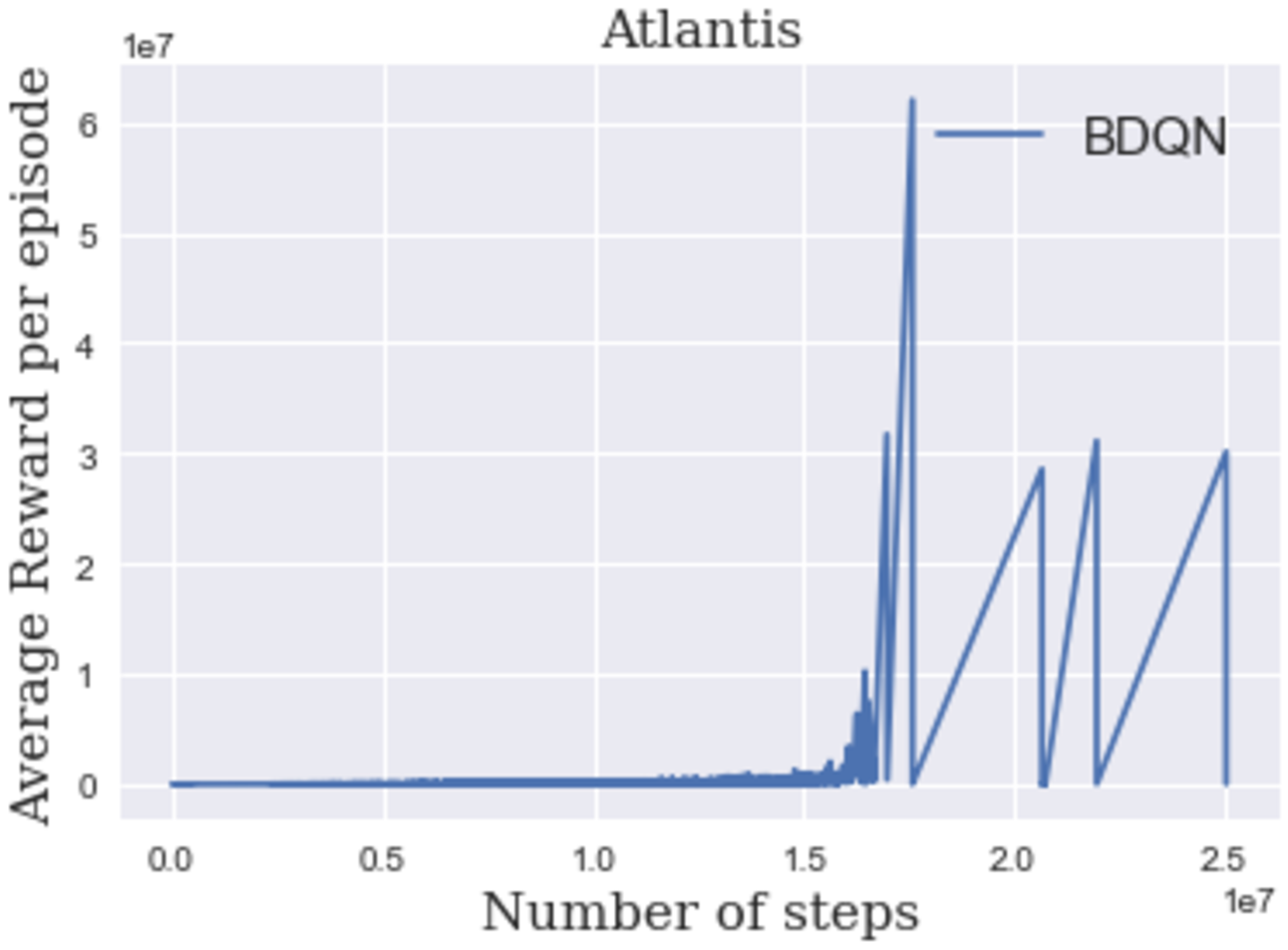}
   \caption{\bdqn on Atlantis after removing the limit on max of episode length hits the score of $62M$ in $16M$ samples.}
   \label{fig:atlantislimit}
\end{figure}

\newcommand{\widplott}{1.1}
\newcommand{\widminii}{0.25}
\begin{figure*}[ht]
   \begin{minipage}{0.28\textwidth}
	 \centering
     \vspace*{-0.2cm}
\includegraphics[width=\widplott\linewidth]{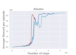}
   \end{minipage}\hfill
   \begin{minipage}{\widminii\textwidth}
     \centering
\includegraphics[width=\widplott\linewidth]{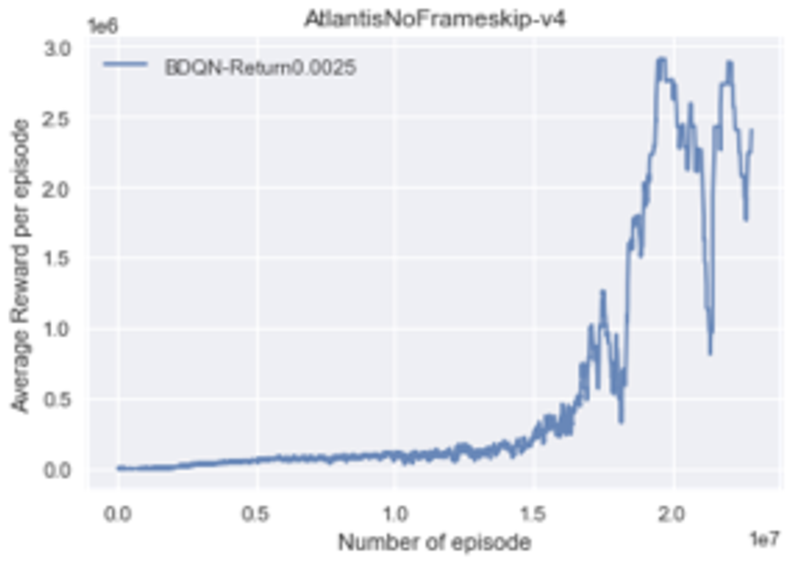}
%      \vspace*{-0.8cm}
%      \caption*{(b)}
   \end{minipage}\hfill
   \begin{minipage}{\widminii\textwidth}
     \centering
\includegraphics[width=\widplott\linewidth]{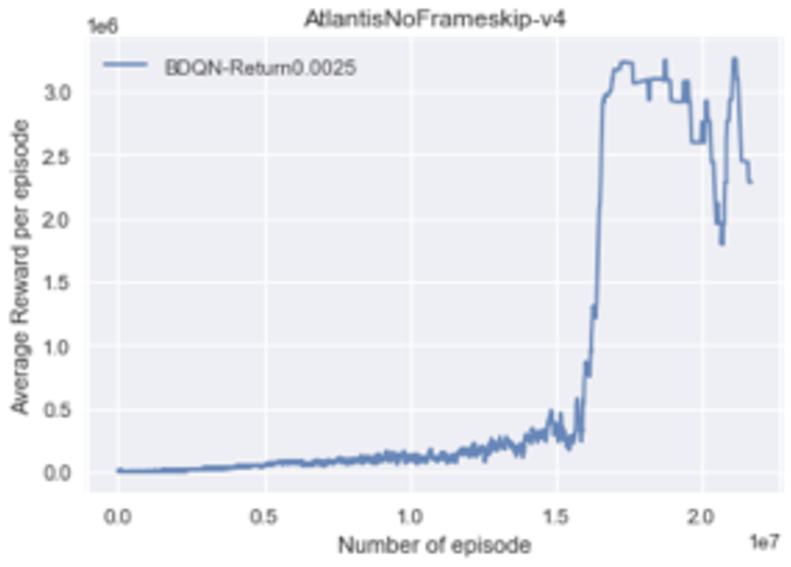}
%      \vspace*{-0.8cm}
%      \caption*{(c)}
   \end{minipage}\hfill
\caption{A couple of more runs of \bdqn where the jump around $15M$ constantly happens}
   \label{fig:expatlantis}
   \vspace*{-0.4cm}
\end{figure*}

\subsection{Dropout as a randomized exploration strategy}\label{sub:dropout}

We would like to acknowledge that we provide this study based on request by our reviewer during the last submission. 

Dropout, as another randomized exploration method, is proposed by \citet{gal2016dropout}, but \citet{osband2016deep} argue about the deficiency of the estimated uncertainty and hardness in driving a suitable exploration and exploitation trade-off from it (Appendix A in \citep{osband2016deep}). They argue that \citet{gal2016dropout} does not address the fundamental issue that for large networks trained to convergence all dropout samples may converge to every single datapoint. As also observed by \citep{dhillon2018stochastic}, dropout might results in a ensemble of many models, but all almost the same (converge to the very same model behavior). We also implemented the dropout version of \DDQN, Dropout-\DDQN, and ran it on four randomly chosen Atari games (among those we ran for less than $50M$ time steps). We observed that the randomization in Dropout-\DDQN is deficient and results in performances worse than DDQN on these four Atari games, Fig.~\ref{fig:dropout}. In Table~\ref{table:dropout} we compare the performance of \bdqn, \DDQN, $\DDQN^+$, and Dropout-\DDQN, as well as the performance of the random policy, borrowed from \citet{mnih2015human}. We observe that the Dropout-\DDQN not only does not outperform the plain $\varepsilon$-greedy \DDQN, it also sometimes underperforms the random policy. For the game Pong, we also ran Dropout-\DDQN for $50M$ time steps but its average performance did not get any better than -17. For the experimental study we used the default dropout rate of $0.5$ to mitigate its collapsing issue.

\setlength{\tabcolsep}{1pt}
% @{\hspace{2\tabcolsep}}
\begin{table*}[h]
% \vspace*{-1cm}
  \centering
  \caption{The comparison of \bdqn, \DDQN, Dropout-\DDQN and random policy. Dropout-\DDQN as another randomization strategy provides a deficient estimation of uncertainty and results in poor exploration/exploitation trade-off. }
  \footnotesize
%   \hspace*{-.2cm}
  \begin{tabular}{l|c|c|c|c|c|c}
%
% \centering
%     \multicolumn{1}{c}{}&\multicolumn{12}{c}\\
\toprule
 Game  & \bdqn & \DDQN & $\DDQN^+$ & Dropout-\DDQN{}& Random Policy&Step  \\
\midrule
CrazyClimber & 124k& 84k & 102k & 19k & 11k & 40M \\ 
Atlantis & 3.24M& 39.7k & 64.76k & 7.7k & 12.85k & 40M \\ 
Enduro & 1.12k& 0.38k & 0.32k & 0.27k & 0 & 30M \\ 
Pong & 21& 18.82 & 21 & -18 & -20.7 & 5M \\
    \bottomrule
  \end{tabular}
  \label{table:dropout}
\end{table*}

\newcommand{\widplottt}{1.1}
\newcommand{\widminitt}{0.25}
\newcommand{\widmitt}{0.1cm}
\begin{figure*}[ht]
\vspace*{-0.0cm}
  \begin{minipage}{\widminitt\textwidth}
	 \centering
\includegraphics[width=\widplottt\linewidth]{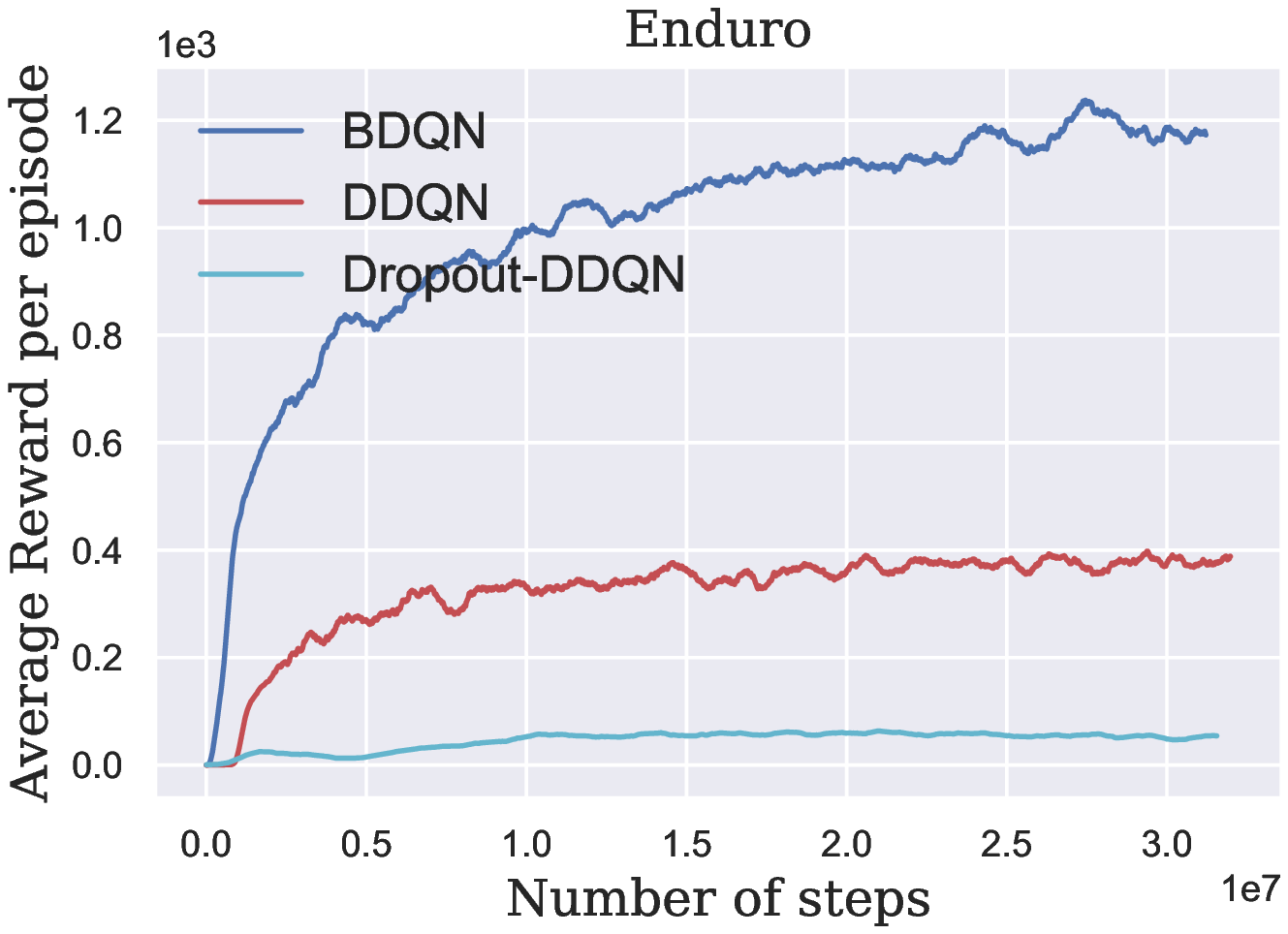}
%      \vspace*{-0.8cm}
%      \caption*{(a)}
  \end{minipage}\hfill
  \begin {minipage}{\widminitt\textwidth}
     \centering
\includegraphics[width=\widplottt\linewidth]{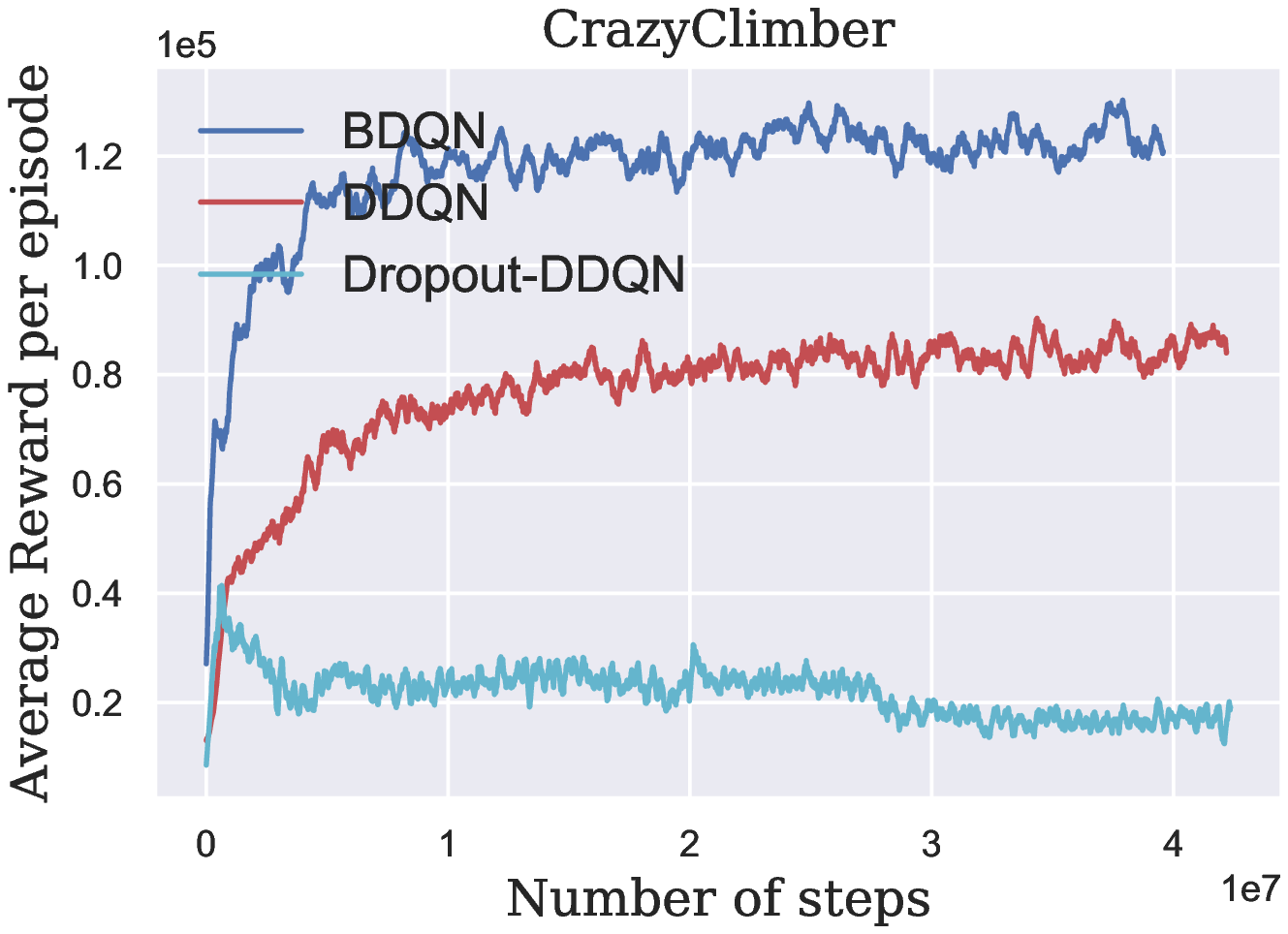}
%      \vspace*{-0.8cm}
%      \caption*{(b)}
  \end{minipage}\hfill
  \begin {minipage}{\widminitt\textwidth}
     \centering
	 \includegraphics[width=\widplottt\linewidth]{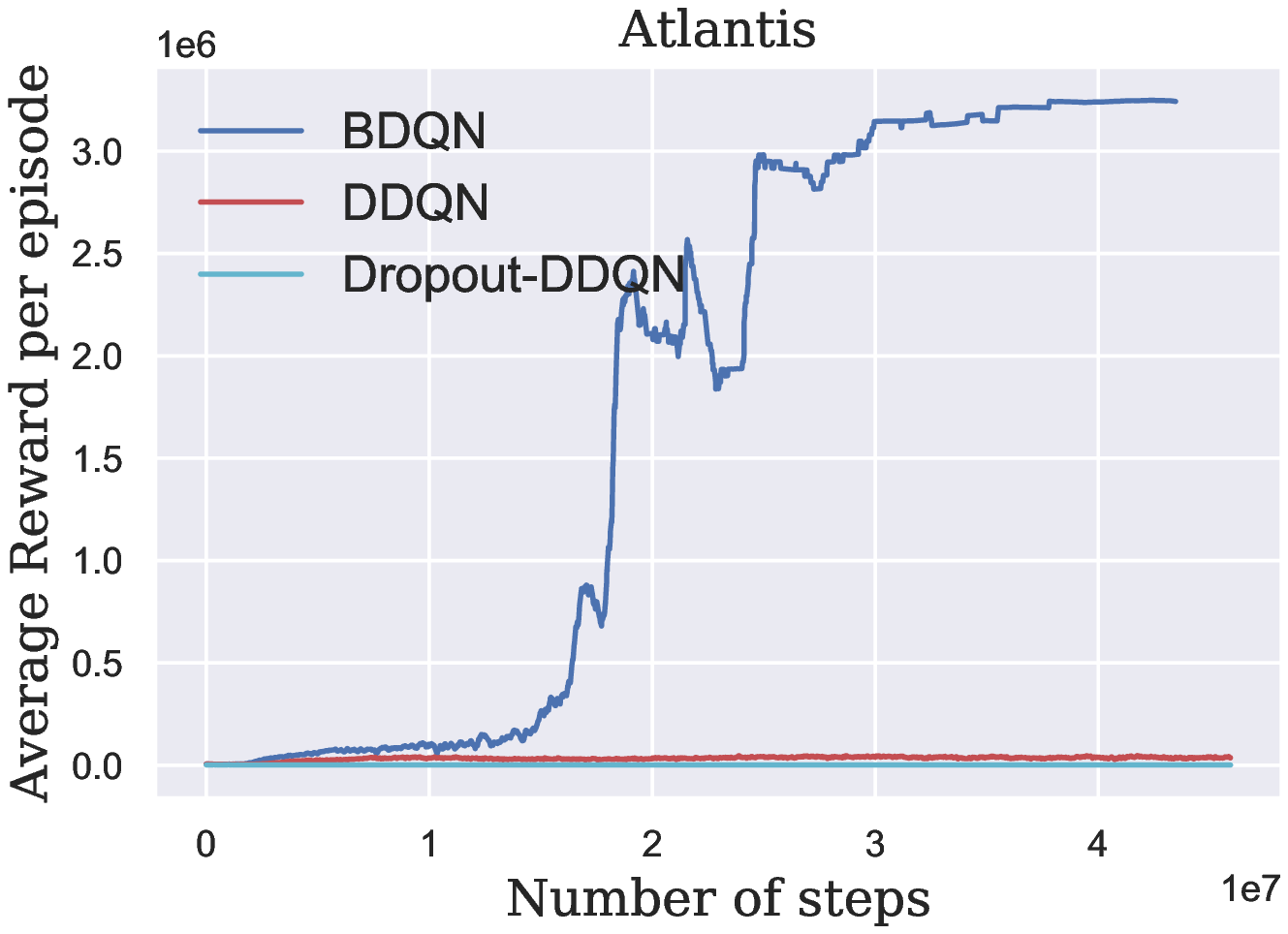}
%      \vspace*{-0.8cm}
%      \caption*{(c)}
  \end{minipage}\hfill
  \begin {minipage}{\widminitt\textwidth}
     \centering
% \vspace*{-0.3cm}	
\includegraphics[width=\widplottt\linewidth]{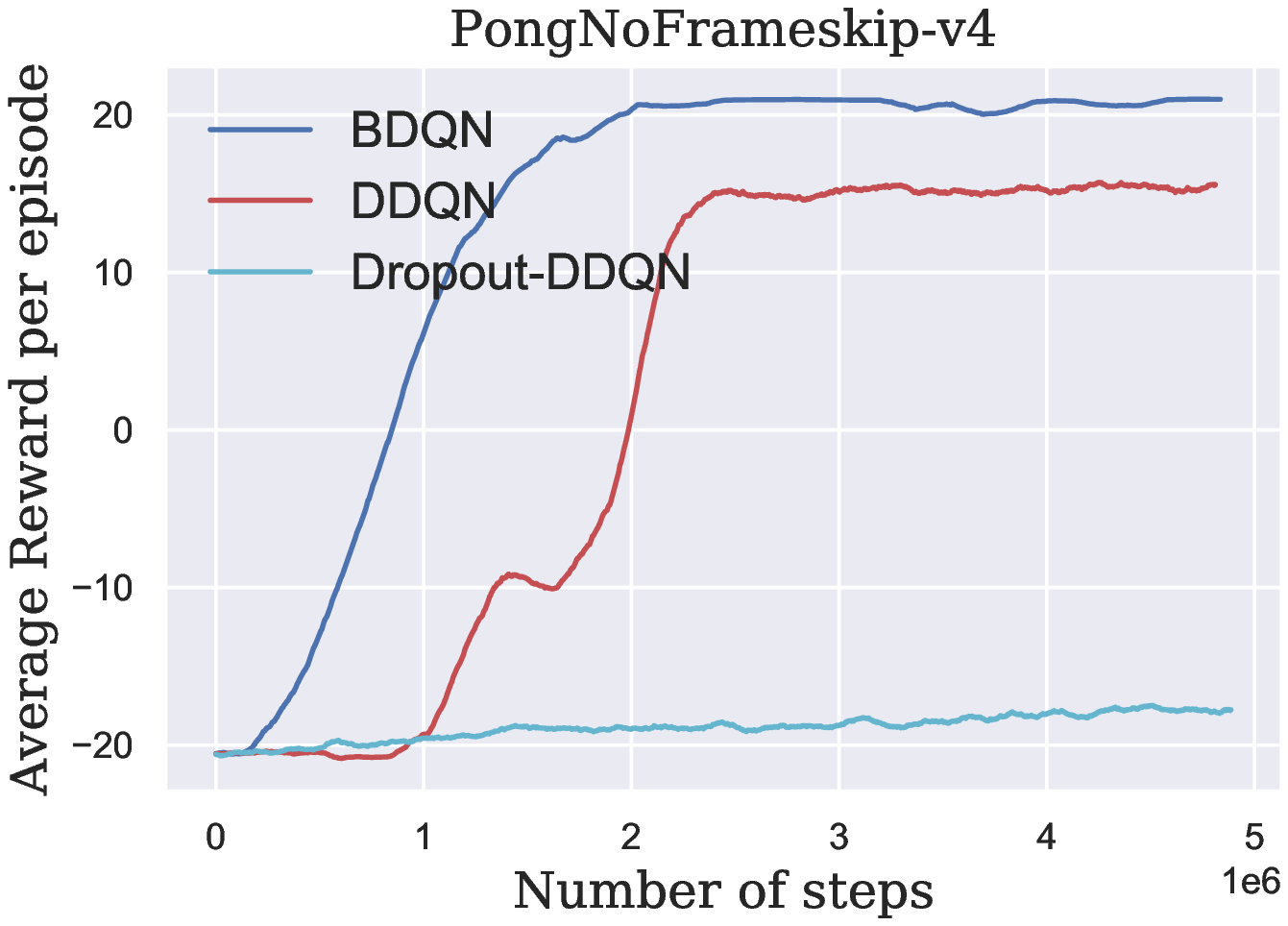}
%      \vspace*{-0.7cm}
%      \caption*{(d)}
  \end{minipage}\hfill
\caption{The comparison between \DDQN, \bdqn and Dropout-\DDQN}
   \label{fig:dropout}
%    \vspace*{-0.4cm}
\end{figure*}

\subsection{Further discussion on Reproducibility}\label{sub:reproducability}
In Table~\ref{table:scores}, we provide the scores of bootstrap \DQN \citep{osband2016deep} and NoisyNet \footnote{This work does not have scores of Noisy-net with \DDQN objective function but it has Noisy-net with \DQN objective which are the scores reported in Table~\ref{table:scores}}, and \cite{fortunato2017noisy} along with \bdqn. These score are directly copied from their original papers and we did not make any change to them. We also  report the scores of count-based method \citep{ostrovski2017count} despite its challenges. \citet{ostrovski2017count} does not provide the scores table.

Table~\ref{table:scores} shows, despite the simplicity of \bdqn, it provides a significant improvement over these baselines. It is important to note that we can not scientifically claim that \bdqn outperforms these baselines by just looking at the scores in Table\ref{table:scores} since we are not aware of their detailed implementation as well as environment details. For example, in this work, we directly implemented \DDQN by following the implementation details mentioned in the original \DDQN paper and the scores of our \DDQN implementation during the evaluation time almost matches the scores of \DDQN reported in the original paper. But the reported scores of implemented \DDQN in \citet{osband2016deep} are significantly different from the reported score in the original \DDQN paper.

%%%%%%%%%%%%%%%%%%%%%%%%%%%%%%%%%%%%%%%%%%%%%%%%%%%%%%%%
%%%%%%%%%%%%%%%%%%%%%%%%%%%%%%%%%%%%%%%%%%%%%%%%%%%%%%%%

\section{Why Thompson Sampling and not \texorpdfstring{$\varepsilon$}{}-greedy or Boltzmann exploration}\label{apx:TSvsE}

In  value approximation RL algorithms,  there are different ways to manage the exploration-exploitation trade-off.  \DQN uses a naive $\varepsilon$-greedy for exploration, where with $\varepsilon$ probability it chooses a random action and with $1-\varepsilon$ probability it chooses the greedy action based on the estimated $Q$ function. Note that there are only point estimates of the $Q$ function in \DQN. In contrast, our proposed Bayesian approach \bdqn maintains uncertainties over the estimated $Q$ function, and employs it to carry out Thompson Sampling based exploration-exploitation. 
Here, we demonstrate  the fundamental benefits of Thompson Sampling over $\varepsilon$-greedy and Boltzmann exploration strategies  using simplified examples. In Table~\ref{table:reasons}, we list the three strategies and their properties. 

$\varepsilon$-greedy is among the simplest exploration-exploitation strategies and it is uniformly random over all the non-greedy actions. Boltzmann exploration is an intermediate strategy since it uses the estimated $Q$ function to sample from action space. However, it does not maintain uncertainties over the $Q$ function estimation. In contrast, Thompson sampling incorporates the $Q$ estimate as well as the uncertainties in the estimation and utilizes the most information for exploration-exploitation strategy.  

Consider the example in Figure~\ref{fig:uncertainty}(a) with our current estimates and uncertainties of the $Q$ function over different actions.   $\varepsilon$-greedy is not compelling since it assigns uniform probability to explore over $5$ and $6$, which are sub-optimal when the uncertainty estimates are available. In this setting, a possible remedy is Boltzmann exploration since it assigns lower probability to actions $5$ and $6$ but randomizes with almost the same probabilities over the remaining actions.

\begin{figure}[h]
  \begin{minipage}[]{0.27\textwidth}
	 \centering
      \hspace*{2.cm}
\includegraphics[height=.9\linewidth]{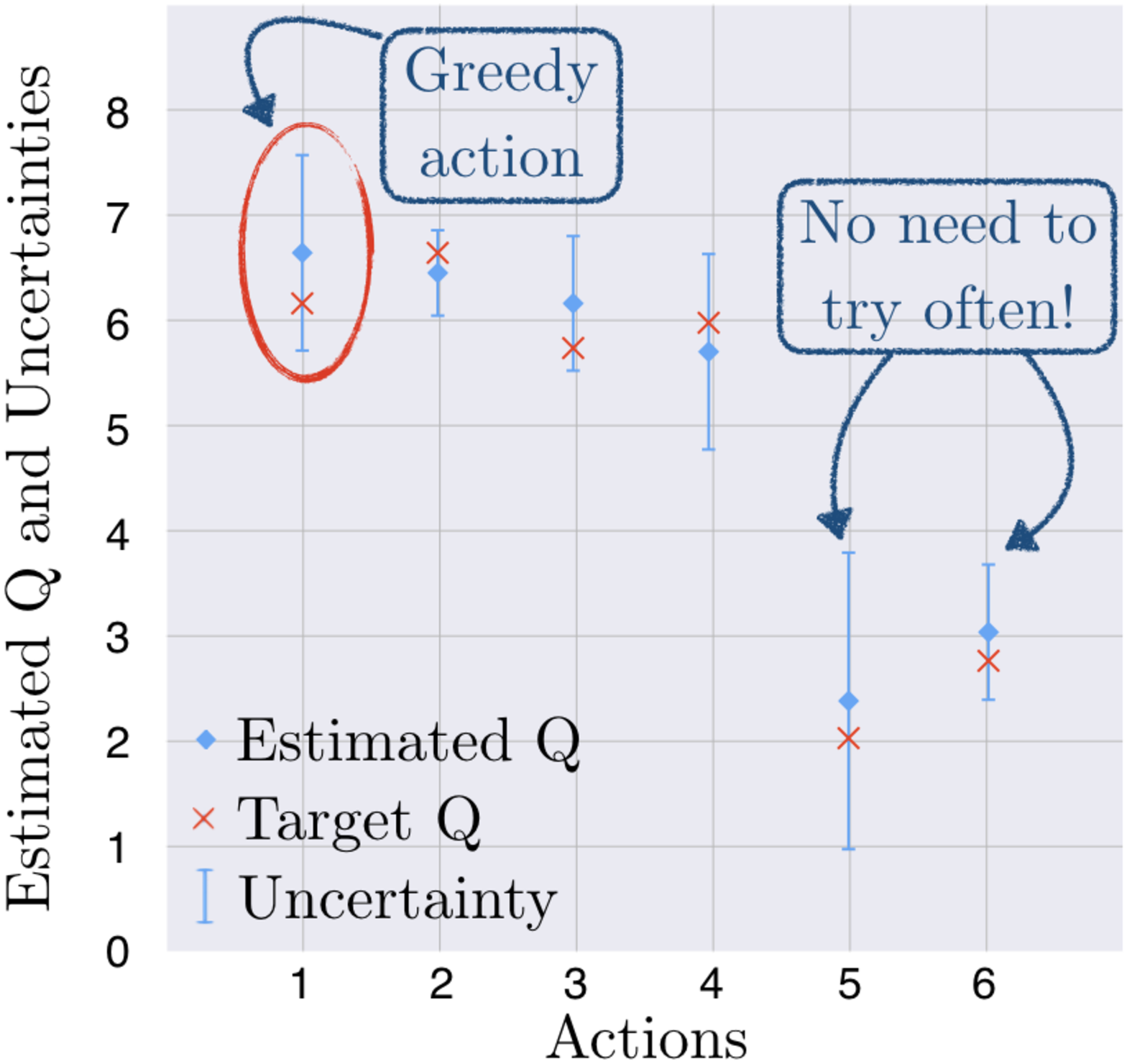}\\
     \hspace*{4.0cm}(a)
  \end{minipage}\hfill
  \begin{minipage}{0.27\textwidth}
     \centering
    %  \hspace*{-0.5cm}
	 \includegraphics[height=.9\linewidth]{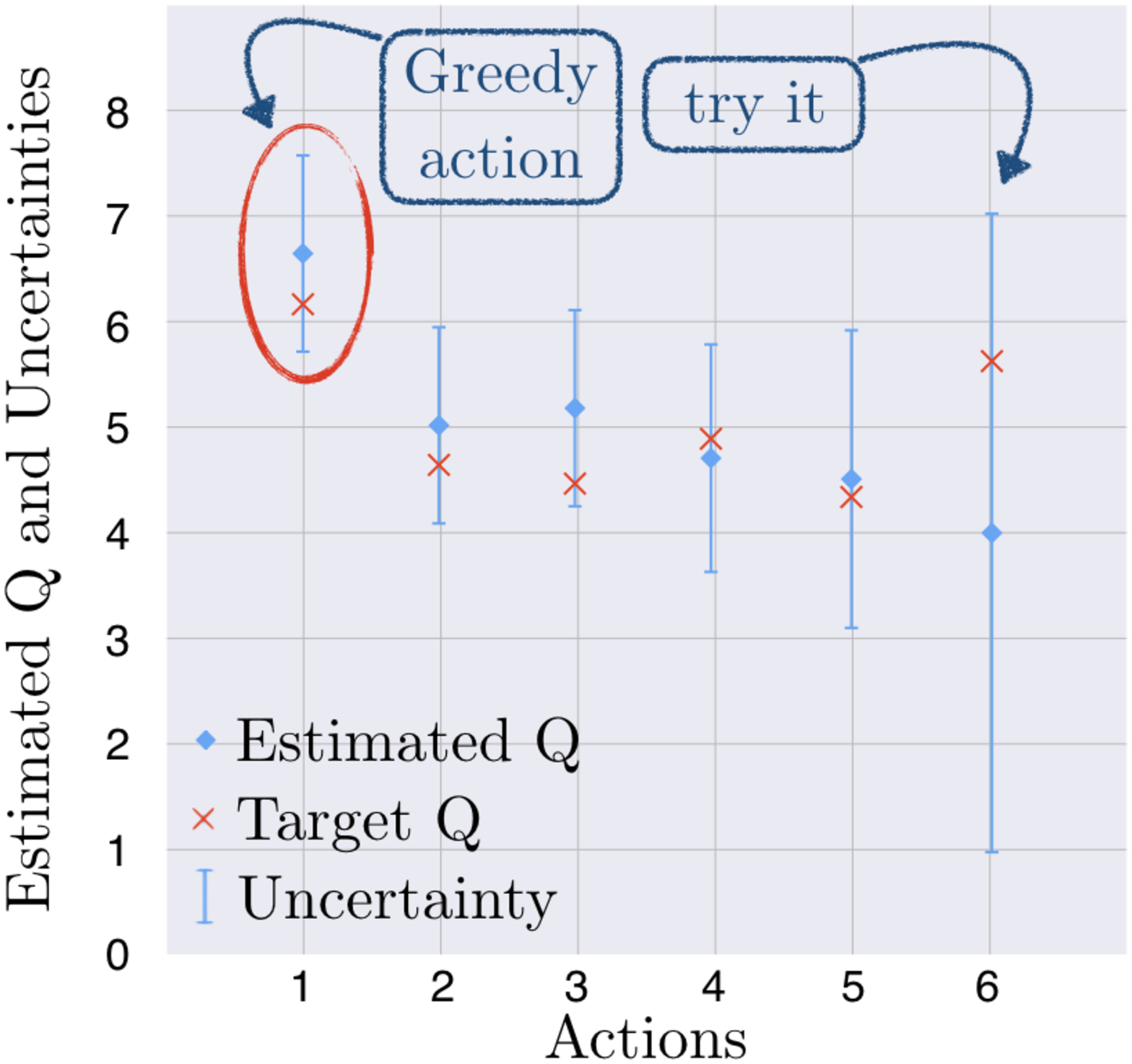}\\
     \hspace*{0.5cm}     
     (b)
  \end{minipage}\hfill
   \caption{Thompson Sampling vs $\varepsilon$-greedy and Boltzmann exploration. $(a)$ $\varepsilon$-greedy is wasteful since it assigns uniform probability to explore over $5$ and $6$, which are obviously sub-optimal when the uncertainty estimates are available. Boltzmann exploration randomizes over actions even if the optimal action is identifies. (b) Boltzmann exploration does not incorporate uncertainties over the estimated action-values and chooses actions 5 and 6 with similar probabilities while action 6 is significantly more uncertain. Thomson Sampling is a simple remedy to all these issues. 
}
\label{fig:uncertainty}
\end{figure}

However, Boltzmann exploration is sub-optimal in settings where there is high uncertainty. For example if the current $Q$ estimate is according to Figure ~\ref{fig:uncertainty}(b), then Boltzmann exploration assigns almost equal probability to actions $5$ and $6$, even though action $6$ has much higher uncertainty and needs to be explored more. 

Thus, both  $\varepsilon$-greedy and Boltzmann exploration strategies are sub-optimal since they do not maintain an uncertainty estimate over the $Q$ estimation. In contrast, Thompson sampling uses both estimated $Q$ function and its uncertainty estimates to carry out a more efficient exploration.

%%%%%%%%%%%%%%%%%%%%%%%%%%%%%%%%%%%%%%%%%%%%%%%%%%%%%%%%
%%%%%%%%%%%%%%%%%%%%%%%%%%%%%%%%%%%%%%%%%%%%%%%%%%%%%%%%

\section{\bdqn Implementation}\label{apx:lr}
In this section, we provide the details of \bdqn algorithm, mentioned in Algorithm~\ref{alg:bdqn}.

As mentioned, \bdqn agent deploys Thompson sampling on approximated posteriors over the weights of the last layer $w_a$'s. At every $T^{S}$ time step, \bdqn samples a new model to balance exploration and exploitation while updating the posterior every $T^{BT}$ time steps using $B$ experience in the replay buffer, chosen uniformly at random.

\bdqn agent samples a new Q function, (i.e., $w_a,~\forall{a}$), every $T^{S}:=\mathcal{O}(episodes~length)$ time steps and act accordingly $a_{\TS}:=\max_a w_a^\top\phi_{\theta}(x)$. As suggested by derivations in Section~\ref{sec:Theory}, $T^{S}$ is chosen to be in order of episode~length for the Atari games. As mentioned before, the feature network is trained similar to \DDQN using $\theta \gets \theta - \alpha \cdot \nabla_{\theta} (y_\tau -  w_{a_\tau}^{\top}\phi_{\theta}(x_{\tau}) )^2$.

Similar to \DDQN, we update the target network every $T^{T}$ steps and set $\theta^{target}$ to $\theta$. We update the posterior distribution using a minibatch of $B$ randomly chosen experiences in the replay buffer every $T^{BT}$, and set the $w_a^{target}=\wb w_a,~\forall a\in\A$, the mean of the posterior distribution  Alg.~\ref{alg:bdqn}.

\subsection{Network architecture:}\label{sub:architecture}
The input to the network part of \bdqn is $4\times 84\times 84$ tensor with a rescaled and averaged over channels of the last four observations. The first convolution layer has $32$ filters of size $8$ with a stride of $4$. The second convolution layer has $64$ filters of size $4$ with stride $2$. The last convolution layer has $64$ filters of size $3$ followed by a fully connected layer with size $512$. We add a BLR layer on top of this.

\subsection{Choice of hyper-parameters:}\label{sub:hyper-choice}
For \bdqn, we set the values of $W^{target}$ to the mean of the posterior distribution over the weights of BLR with covariances $Cov$ and draw $w_a$'s from this posterior. For the fixed set of $w_a$'s and $w_a^{target}$, we randomly initialize the parameters of network part of \bdqn, $\theta$, and train it using RMSProp, with learning rate of $0.0025$, and a momentum of $0.95$, inspired by \citep{mnih2015human} where the discount factor is $\gamma = 0.99$, the number of steps between target updates $T^{T} = 10k$ steps, and weights $W$ are re-sampled from their posterior distribution  every $T^{S}$ steps. We update the network part of \bdqn every $4$ steps by uniformly at random sampling a mini-batch of size $32$ samples from the replay buffer. We update the posterior distribution of the weight set $W$ every $T^{BT}$ using mini-batch of size $B$, with entries sampled uniformly form replay buffer. The experience replay contains the $1M$ most recent transitions. Further hyper-parameters are equivalent to ones in DQN setting.

For the BLR, we have noise variance $\sigma_\epsilon$, variance of prior over weights $\sigma$, sample size $B$, posterior update period $T^{BT}$, and the posterior sampling period $T^{S}$. To optimize for this set of hyper-parameters we set up a very simple, fast, and cheap hyper-parameter tuning procedure which proves the robustness of \bdqn. To find the first three, we set up a simple hyper-parameter search. We used a pretrained \DQN model for the game of \textit{Assault}, and removed the last fully connected layer in order to have access to its already trained feature representation. Then we tried combination of $B=\lbrace T^{T},10\cdot T^{T}\rbrace$, $\sigma =\lbrace 1,0.1,0.001 \rbrace$, and $\sigma_\epsilon =\lbrace 1,10 \rbrace $ and test for $1000$ episode of the game. We set these parameters to their best $B=10\cdot T^{T},\sigma = 0.001,\sigma_\epsilon =1 $. 

The above hyper-parameter tuning is cheap and fast since it requires only a few times the $B$ number of forwarding passes. Moreover, we did not try all the combination, and rather each separately. We observed that $\sigma = 0.1,0.001$ perform similarly with $\sigma = 0.001$ slightly better. But both perform better than $\sigma = 1$. For $B=\lbrace T^{T},10\cdot T^{T}\rbrace$. For the remaining parameters, we ran \bdqn ( with weights randomly initialized) on the same game, \textit{Assault}, for $5M$ time steps, with a set of $T^{BT}=\lbrace T^{T},10\cdot T^{T}\rbrace$ and $T^{S}=\lbrace \frac{T^{T}}{10}, \frac{T^{T}}{100} \rbrace$, where \bdqn performed better with choice of $T^{BT}=10\cdot T^{T}$. For both choices of $T^{S}$, it performs almost equal and we choose the higher one to reduce the computation cost. We started off with the learning rate of $0.0025$ and did not tune for that. Thanks to the efficient  Thompson sampling exploration and closed form BLR, \bdqn can learn a better policy in an even shorter period of time. In contrast, it is well known for \DQN based methods that changing the learning rate causes a major degradation in the performance (Fig.~\ref{fig:lr}). The proposed hyper-parameter search is very simple and an exhaustive hyper-parameter search is likely to provide even better performance.

\subsection{Learning rate:}\label{sub:lr}
It is well known that \DQN and \DDQN are sensitive to the learning rate and change of learning rate can degrade the performance to even worse than random policy. We tried the same learning rate as \bdqn, 0.0025, for \DDQN and observed that its performance drops. Fig.~\ref{fig:lr} shows that the \DDQN with higher learning rates learns as good as \bdqn at the very beginning but it can not maintain the rate of improvement and degrade even worse than the original \DDQN with learning rate of 0.00025.
\begin{figure}[ht]
\centering
\includegraphics[width=0.5\linewidth]{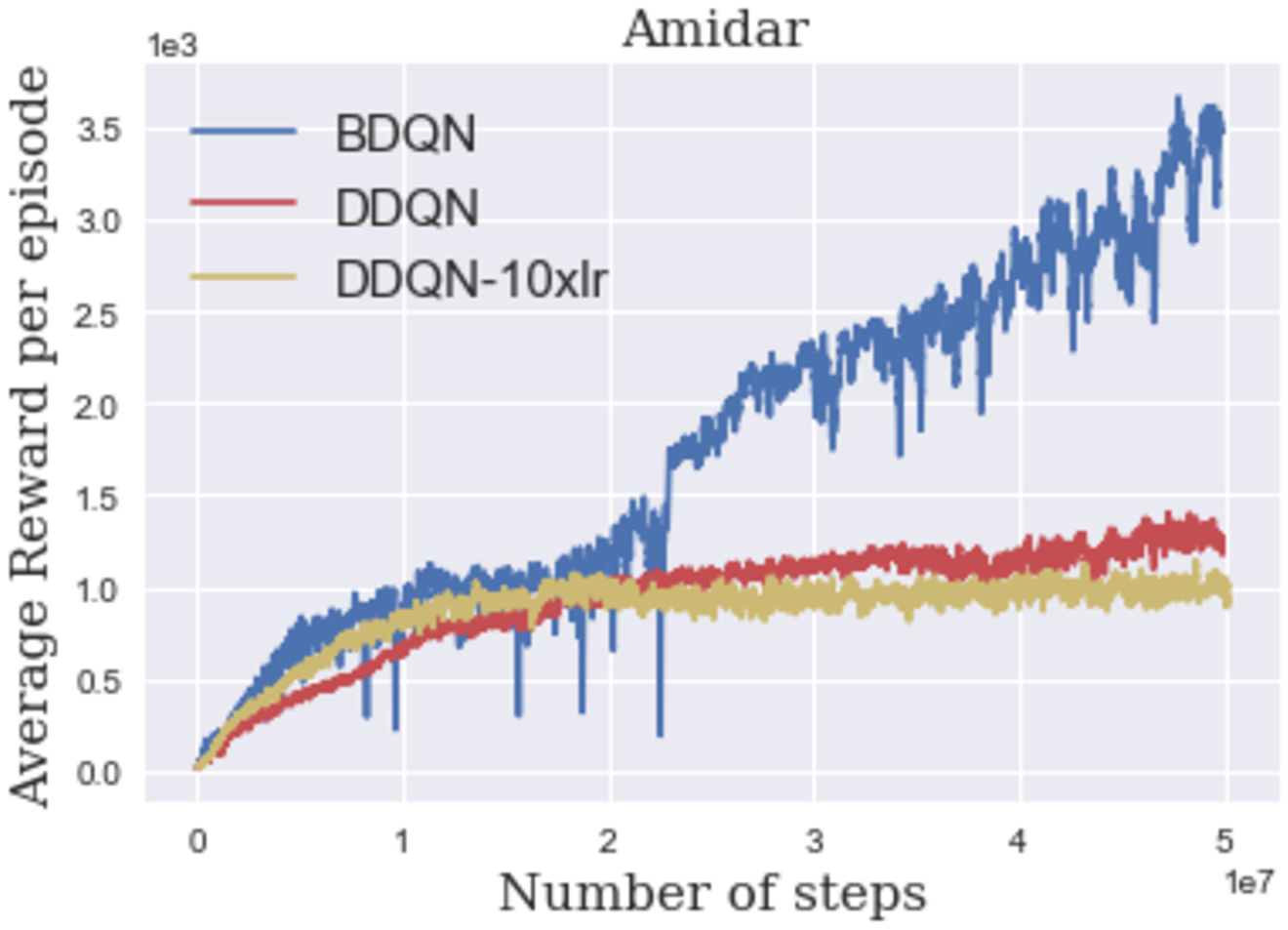}
   \caption{Effect of learning rate on \DDQN }
   \label{fig:lr}
\end{figure}

\subsection{Computational and sample cost comparison: }\label{sub:computation}
For a given period of game time, the number of the backward pass in both \bdqn and \DQN are the same where for \bdqn it is cheaper since it has one layer (the last layer) less than \DQN. In the sense of fairness in sample usage, for example in duration of $10\cdot T^{BT} = 100k$, all the layers of both \bdqn and \DQN, except the last layer, sees the same number of samples, but the last layer of \bdqn sees $16$ times fewer samples compared to the last layer of \DQN. The last layer of \DQN for a duration of $100k$, observes $25k = 100k/4$ (4 is back prob period) mini batches of size $32$, which is $16\cdot 100k$, where the last layer of \bdqn just observes samples size of $B=100k$. As it is mentioned in Alg.~\ref{alg:bdqn}, to update the posterior distribution, \bdqn draws $B$ samples from the replay buffer and needs to compute the feature vector of them. Therefore, during the $100k$ interactions for the learning procedure, \DDQN does $32*25k$ of forward passes and $32*25k$ of backward passes, while \bdqn does same number of backward passes (cheaper since there is no backward pass for the final layer) and $36*25k$ of forward passes.
%This step of \bdqn gives a superiority to \DQN in the sense of speed which is almost $70\%$ faster than \bdqn (\DQN, on average, for the update does full forward and backward passes while \bdqn does not do backward path on the last layer but needs an extra forward pass in order to compute the feature representation).
One can easily relax it by parallelizing this step along the main body of \bdqn or deploying on-line posterior update methods.

\subsection{Thompson sampling frequency:}\label{sub:TS-frequency}
The choice of Thompson sampling update frequency can be crucial from domain to domain. Theoretically, we show that for episodic learning, the choice of sampling at the beginning of each episode, or a bounded number of episodes is desired. If one chooses $T^{S}$ too short, then computed gradient for backpropagation of the feature representation is not going to be useful since the gradient get noisier and the loss function is changing too frequently. On the other hand, the network tries to find a feature representation which is suitable for a wide range of different weights of the last layer, results in improper waste of model capacity. If the Thompson sampling update frequency is too low, then it is far from being Thompson sampling and losses the randomized exploration property. We are interested in a choice of $T^{S}$ which is in the order of upper bound on the average length of each episode of the Atari games.  The current choice of $T^{S}$ is suitable for a variety of Atari games since the length of each episode is in range of $\mathcal{O}(T^{S})$ and is infrequent enough to make the feature representation robust to big changes. 

For the RL problems with shorter a horizon we suggest to introduce two more parameters, $\tilde{T}^{S}$ and each $\tilde{w}_a$ where $\tilde{T}^{S}$, the period that of each $\tilde{w}_a$ is sampled out of posterior, is much smaller than $T^{S}$ and $\tilde{w}_a,\forall a$ are used for Thompson sampling where $w_a~,\forall a$ are used for backpropagation of feature representation. For game Assault, we tried using $\tilde{T}^{S}$ and each $\tilde{w}_a$ but did not observe much a difference, and set them to ${T}^{S}$ and each $w_a$. But for RL setting with a shorter horizon, we suggest using them.

\begin{table*}[ht]
  \centering
  \caption{After a very elementary hyper parameter search using two days of GPU (P2.xlarge) time, we observed that this set of parameters are proper. Of course a typical exhaustive search can provide higher performance, but this effort is not properly aligned with the purpose of this paper.}
  \footnotesize
%   \hspace*{-1.cm}
  \begin{tabular}{c|c|c|c|c|c|c}
    % \multicolumn{1}{c}{}&\multicolumn{2}{c}{} \\
    \toprule
 $T^T=10k~$  & $~T^S=T^T/10~$  & $~T^{BT}=10T^T~$  &  $~B=10T^T~$ & $~lr=0.0025~$ & $~\sigma = 0.001~$ & $\sigma_\epsilon =1$\\
%   \midrule
% $\varepsilon$-greedy &\cmark &\xmark&\xmark \\
% \midrule
% Boltzmann exploration &\cmark&\cmark&\xmark \\
% \midrule
% Thompson Sampling &\cmark&\cmark&\cmark \\
    \bottomrule
  \end{tabular}
  \label{table:hpo}
\end{table*}

%%%%%%%%%%%%%%%%%%%%%%%%%%%%%%%%%%%%%%%%%%%%%%%%%%%%%%%%
%%%%%%%%%%%%%%%%%%%%%%%%%%%%%%%%%%%%%%%%%%%%%%%%%%%%%%%%

\section{A short discussion on safety}
In \bdqn, as mentioned in Eq.~\ref{eq:w}, the prior and likelihood are conjugate of each others. Therefore, we have a closed form posterior distribution of the discounted return, $\sum_{t=0}^N \gamma^tr_t|x_0 = x,a_0=a,\D_a$, approximated as 

\begin{align*}%\label{eq:q-post}
\N\left(\frac{1}{\sigma^2_\epsilon}\phi^{\theta}(x)^\top\Xi_a\Phi_a^\theta\y_a ,\phi^{\theta}(x)^\top\Xi_a\phi^{\theta}(x)\right)
\end{align*}
One can use this distribution and come up with a safe RL criterion for the agent ~\citep{garcia2015comprehensive}. Consider the following example; for two actions with the same mean, if the estimated variance over the return increases, then the action becomes more unsafe Fig.~\ref{fig:safe}. By just looking at the low and high probability events of returns under different actions we can approximate whether an action is safe to take.

\begin{figure*}[ht]
\vspace*{-0.0cm}
%   \begin{minipage}{0.33\textwidth}
	 \centering
\includegraphics[width=1.\linewidth]{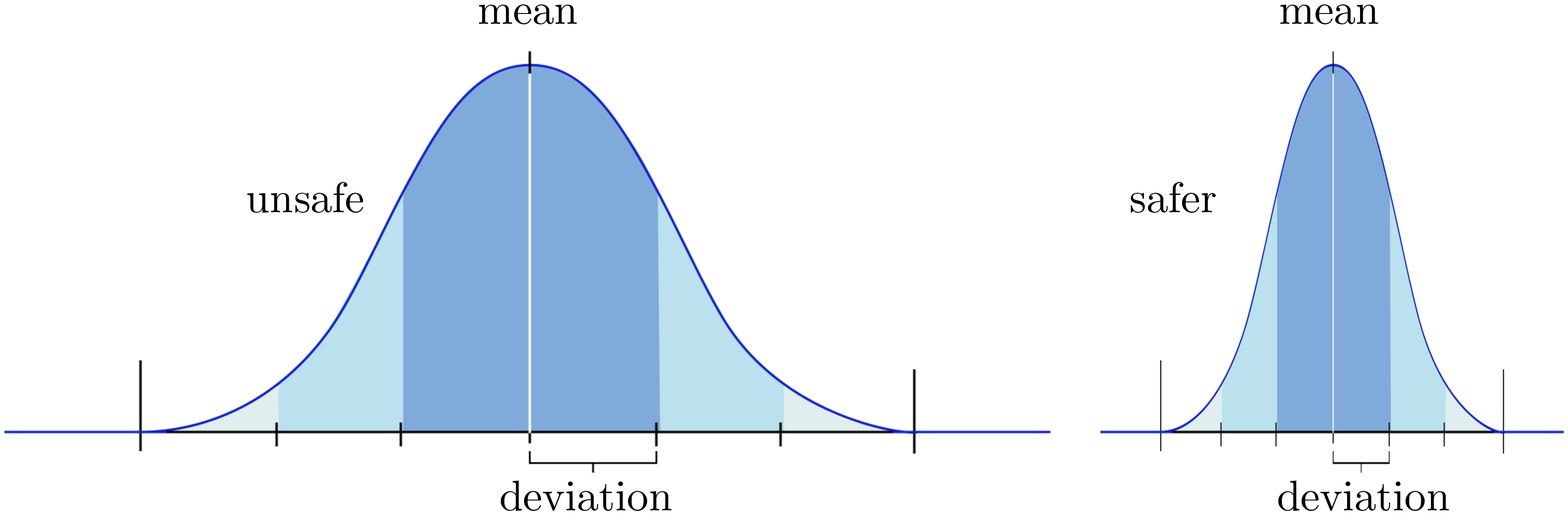}
%      \vspace*{-0.8cm}
%      \caption*{(a)}
%   \end{minipage}\hfill
\caption{Two actions, with the same mean, but the one with higher variance on the return might be less safe than the one with narrower variance on the return.}
   \label{fig:safe}
   \vspace*{-0.4cm}
\end{figure*}

\section{Bayesian and frequentist regrets, Proof of Theorems \ref{Thm:BR} and \ref{Thm:frequentist}}\label{apx:theory}

\subsection{Modeling}

We consider an episodic MDP with episode length of $H$, accompanied with discount factor $0\leq \gamma\leq 1$.
The optimal Q function $Q^*(\cdot,\cdot)\rightarrow \Re$ is the state action conditional expected return under the optimal policy. For any time step $h$, and any state and action pair $x^{h},a^{h}$ we have;

\begin{align*}
Q^*(x^{h},a^{h})=\E\left[\sum_{h'=h}^H \gamma^{h'-h}R_{h'}\Big|X^{h}=x^{h},A^{h}=a^{h},\pi^*\right]
\end{align*}
where $X^{h}$ and $A^{h}$ denote the state and action random variables. Since we assume the environment is an MDP, following the Bellman optimality we have for $h<H$
\begin{align*}
Q^*(x^{h},a^{h})=\E\left[R_{h}+\gamma Q^*(X^{h+1},\pi^*(X^{h+1}))\Big|X^{h}=x^{h},A^{h}=a^{h},\pi^*\right],
\end{align*}
and for $h=H$ 
\begin{align*}
Q^*(x^{H},a^{H})=\E\left[R_{H}\Big|X^{H}=x^{H},A^{H}=a^{H},\pi^*\right]
\end{align*}
where the optimal policy $\pi^*(\cdot)$ denotes a deterministic mapping from states to actions. Following the Bellman optimality, conditioned on $X^{h}=x^{h},A^{h}=a^{h}$, one can rewrite the reward at time step $h$, $R^h$, as follows;

\begin{align*}
R^{h} = Q^*(x^{h},a^{h})-\gamma\E\left[ Q^*(X^{h+1},\pi^*(X^{h+1}))\Big|X^{h}=x^{h},A^{h}=a^{h},\pi^*\right]+R_\nu^h
\end{align*}
Where $R_\nu^h$ is the noise in the reward, and it is a mean zero random variable due to bellman optimality. In other word, condition on $X^{h}=x^{h},A^{h}=a^{h}$, the distribution of one step reward can be described as:
\begin{align*}
R^{h}+\gamma\E\left[ Q^*(X^{h+1},\pi^*(X^{h+1}))\Big|X^{h}=x^{h},A^{h}=a^{h},\pi^*\right]=Q^*(x^{h},a^{h})+R_\nu^h
\end{align*}
where the equality is in distribution. We also could extend the randomness in the noise of the return one step further and include the randomness in the transition. It means, condition on $X^{h}=x^{h},A^{h}=a^{h}$ and following $\pi^*$ at a time step $h+1$, for the distribution of one step return we have:
\begin{align}\label{eq:reward-unbiased}
 R^{h} + \gamma Q^*(X^{h+1},\pi^*(X^{h+1}))=Q^*(x^{h},a^{h}) +\nu^h %\Big|X^{h}=x^{h},A^{h}=a^{h},\pi^*
\end{align}
where here the $\nu^h$ encodes the randomness in the reward as well as the transition kernel, and it is a mean zero random variable. The equality is in distribution. If instead of following $\pi^*$ after time step $h+1$, we follow policies other than $\pi^*$, e.g., $\pi$ then condition on $X^{h}=x^{h},A^{h}=a^{h}$ and following $\pi$ at a time step $h+1$, and $\pi^*$ afterward, for the distribution of one step return we have:
\begin{align}\label{eq:reward-biased}
R^{h}+\gamma Q^*(X^{h+1},\pi(X^{h+1})) = Q^*(x^{h},a^{h})+\nu^h
\end{align}
where then noise process $\nu^h$ is not mean zero anymore (it is biased), except for final time step $H$.  We can quantify this bias as follows:

\begin{align*}%\label{eq:noise_model}
\!\!\!\!\!R^{h}+&\gamma Q^*(X^{h+1},\pi(X^{h+1})) = Q^*(x^{h},a^{h})+\nu^h+\gamma Q^*(X^{h+1},\pi(X^{h+1}))-\gamma Q^*(X^{h+1},\pi^*(X^{h+1}))
\end{align*}
where $\nu^h$ is now an unbiased zero mean random variable due to equality of two reward distributions in Eq.~\ref{eq:reward-unbiased} and Eq.~\ref{eq:reward-biased}. 

We use this intuition and definition of noises in order to later turn the learning of model parameters to unbiased Martingale linear regression.

%%%%%%%%%%%%%%%%%%%%%%%%%%%%%%%%%%%%%%%%%%%%%%%%%%%%%%%%
%%%%%%%%%%%%%%%%%%%%%%%%%%%%%%%%%%%%%%%%%%%%%%%%%%%%%%%%
\subsection{Linear Q-function}

In the following, we consider the case when the optimal Q function is a linear transformation of a feature representation vector $\phi(x^h,a^h)\in \R^d$ for any pair of state and actions $x^h,a^h$ at any time step $h$. In other words, at any time step $h$ we have 
\begin{align*}
Q^*(x^h,a^h)=\phi(x^h,a^h)^\top\opt^h,~x^h\in\X^h,~a^h\in\A^h
\end{align*}
To keep the notation simple, while all the policies, Q functions, and feature presentations are functions of time step $h$, we encode the $h$ into the state $x$. 

As mentioned in the section \ref{sec:Theory}, we consider the feature represent and the weight vectors of $\opt^1,\opt^2,\ldots,\opt^H$ satisfy the following conditions;
\begin{align*}
 \|\phi(x^h,a^h)\phi(x^h,a^h)^\top\|_2^2\leq L~,~\textit{and}~\|\opt^1\|_2,\ldots,\|\opt^H\|_2\leq L_\omega,~\forall h\in[H],~x^h\in\X^h,~a^h\in\A^h
\end{align*}

In the following, we denote 
the agent policy at $h$'th time step of $t$'th episode as $\pi_t^h$ . We show how to estimate $\opt^h~,\forall h\in[H]$ using data collected  under $\pi_t^h$. We denote $\hat{\omega}_t^h$ as the estimation of $\opt^h$ for any $h$ at an episode $t$. We show over time for any $h$ our estimations of$\hat{\omega}_t^h$'s concentrate around their true parameters $\opt^h$. We further show how to deploy this concentrations and construct two algorithms, one based on \PSRL, \LinPSRL, and another based on Optimism \OFU~, \LinUCB,  to guaranteed Bayesian and frequentist regret upper bounds respectively. The main body of the following analyses on concentration of measure are based on the  contextual linear bandit literature~\citep{abbasi-yadkori2011improved,chu2011contextual,li2010contextual,rusmevichientong2010linearly,dani2008stochastic,russo2014learning} and self normalized processes~\citep{de2004self,pena2009self}. 

For the following we consider the case where $\gamma=1$ and then show how to extend the results to any discounted case.

\paragraph{Abstract Notation:} We use subscript, e.g., $t$ to represent an event in $t^{th}$ episode and superscript, e.g., $h$ to represent an event at $h^{th}$ time step. For example, $\phi(X_t^h,A_t^h)$ represents the feature vector observed at $h^{th}$ time step of $t^{th}$ episode, in short $\phi_t^h$. Moreover, we denote $\pi_t$ as the agent policy during episode $t$ as the concatenation of $\phi_t^h,~\forall h\in[H]$. Similarly, for the optimal policy, we have $\pi^*$ as the concatenation of ${\phi^*}^h,~\forall h\in[H]$. In addition, we have $\omega_t$ and $\opt$ as the concatenation of some model parameters in episode $t$ and the true parameters.

We define the \textit{modified target value}, condition on $X_t^h=x_t^h,A_t^h=a_t^h$ as follows:
\begin{align*}
\wt\nu_t^h := \phi(x_t^{h},a_t^{h})^\top \opt^{h}+\nu_t^h=
r_t^h+\phi(x_t^{h+1},\pi^*(x_t^{h+1}))^\top \opt^{h+1}
\end{align*}
where we restate the mean zero noise $\nu_t^h$ as:
\begin{align*}
\nu_t^h = r_t^h+\phi(x_t^{h+1},\pi^*(x_t^{h+1}))^\top \opt^{h+1}-\phi(x_t^{h},a_t^{h})^\top \opt^{h}
\end{align*}

\begin{assumption}[Sub-Gaussian random variable]\label{asm:subgaussian}
At time step $h$ of $t^{th}$ episode, conditioned on the event up to time step $h$ of the episode $t$ $\nu_t^h$ is a sub-Gaussian random variable, i.e. there exists a parameter $\sigma\geq 0$ such that $\forall \alpha\in\R^d$
\begin{align*}
\E\left[\exp\left(\alpha^\top\phi_t^h\nu_t^h/\sigma-(\alpha^\top\phi_t^h)^2/2\right)\Big|\F_{t-1}^{h-1}\right]\leq 1
\end{align*}
where $\nu_t^h$ is $\F_{t-1}^h$-measurable, and $\phi(X_t^h,A_t^h)$ is $\F_{t-1}^{h-1}$-measurable.
\end{assumption}

% The filtration at episode t contains all the events by episode t. Asm.~\ref{asm:subgaussian} also implies that $\E\left[\mathbf{\Phi}_t^\top\wb\nu_t\Big|\F_{t-1}\right]=0$ %which means that after drawing the matrix $\mathbf{\Phi}_t$, the random variable $\wb\nu_t$ is means zero.
A similar assumption on the noise model is considered in the prior works in linear bandits \citep{abbasi-yadkori2011improved}.

\paragraph{Expected Reward and Return:}The expected reward and expected return are in $[0,1]$.

\paragraph{Regret Definition} Let $V^{\omega}_{\pi}$ denote the value of policy $\pi$ under a model parameter $\omega$. The regret definition (pseudo regret) for the frequentist regret is as follows;
\begin{align*}
\textbf{Reg}_T:&=\E\left[\sum_t^T \left[V^{\opt}_{{\pi^*}}-V^{\opt}_{\pi_t}\right]|\opt\right]
\end{align*}
Where  $\pi_t$ is the agent policy during episode $t$.

When there is a prior over the $\omega^*$  the expected Bayesian regret might be the target of the study. The Bayesian regret is as follows;
\begin{align*}
\textbf{BayesReg}_T:&=\E\left[\sum_t^T \left[V^{\opt}_{{\pi^*}}-V^{\opt}_{\pi_t}\right]\right]
%
% &=\sum_t^T \E_{\mathcal{H}_t}\left[\E_{\opt^1}\left[V^{\opt^1}_{{\pi^*}^1}(X_t^1)- V^{\opt^1}_{\pi_t^1}(X_t^1)\big|\mathcal{H}_t\right]\right]\\
%
\end{align*}

%%%%%%%%%%%%%%%%%%%%%%%%%%%
\subsection{Optimism: Regret bound of \LinUCB, Algorithm.~\ref{alg:optimism}}\label{sub:optimism}

In optimism we approximate the desired model parameters $\opt^1,\opt^2,\ldots,\opt^H$ up to their high probability confidence set $\C_t^1(\delta),\C_t^2(\delta),\ldots,\C_t^H(\delta)$ where $\opt^h\in\C_t^h(\delta),~\forall h\in[H]$ with probability at least $1-\delta$. In optimism we choose the most optimistic models from these plausible sets. Let $\wt\omega_t^1,\wt\omega_t^2,\ldots,\wt\omega_t^H$ denote the chosen most optimistic parameters during an episode $t$. The most optimistic models for each $h$ is;
\begin{align*}
\wt\omega_t^h = \arg\max_{\omega\in \C_{t-1}^h(\delta)}\max_{a\in\A^h}\phi(x_t^h,a^h_t)\omega 
\end{align*}
and also the most pessimistic models $\check\omega_t^1,\check\omega_t^2,\ldots,\check\omega_t^H$;
\begin{align*}
\check\omega_t^h = \arg\min_{\omega\in \C_{t-1}^h(\delta)}\max_{a\in\A^h}\phi(x_t^h,a^h_t)\omega
\end{align*}

Since in RL, we do not have access to the true model parameters, we do not observe the modified target value. Therefore for eath $t$ and $h$ we define $\wb\nu_t^h$ as the observable \textit{target value}:
\begin{align*}
\wb\nu_t^h = r_t^h+\phi(x_t^{h+1},\check\pi_t(x_t^{h+1}))^\top \check\omega^h_{t-1}
\end{align*}
where $\check\pi_t$ is the policy corresponding to $\check\omega^h_{t},~\forall h$. For $h=H$ we have $\wb\nu_t^H = \wt\nu_t^H=r_t^H$.

The modified target values and the target value have the following relationship 

\begin{align*}
\wb\nu_t^h &= \wb\nu_t^h - \wt\nu_t^h + \wt\nu_t^h\\
&=
r_t^h+\phi(x_t^{h+1},\check\pi_{t-1}(x_t^{h+1}))^\top \check\omega^h_{t-1}-r_t^h-\phi(x_t^{h+1},\pi^*(x_t^{h+1}))^\top \opt^h+\wt\nu_t^h\\
&=
\phi(x_t^{h+1},\check\pi_t(x_t^{h+1}))^\top \check\omega^h_{t-1}-\phi(x_t^{h+1},\pi^*(x_t^{h+1}))^\top \opt^h+\wt\nu_t^h\\
\end{align*}
These definitions of $\wb\nu_t^h$, $\wt\nu_t^h$, and $\nu_t^h$ and their relationship allows us to efficiently construct a self normalized process to estimate the true model parameters.

Let for each $h\in[H]$, $\mathbf{\Phi}_t^h\in\Re^{t\times d}$ denote the row-wised concatenation of $\lbrace\phi_i^h\rbrace_{i=1}^t$, $\bm{\wb\nu}_t^h\in\Re^t$ a column of target values $\lbrace\wb\nu_i^h\rbrace_{i=1}^t$ (observed), $\bm{\wt\nu}_t^h\in\Re^t$ a column of modified target values $\lbrace\wt\nu_i^h\rbrace_{i=1}^t$ (unobserved), $\bm{\nu}_t^h\in\Re^t$ a column of noise in the modified target values $\lbrace\nu_i^h\rbrace_{i=1}^t$ (unobserved), and $\bm{R}_t^h\in\Re^t$ a column of $\lbrace r_i^h\rbrace_{i=1}^t$. Let us restate the following quantities for the self-normalized processes;
\begin{align*}
S_t^h := \sum_i^t \nu_i^h\phi_i^h={\mathbf{\Phi}_t^h}^\top \bm{\nu}^h_t, ~~~~~ \chi_t^h := \sum_{i=1}^t \phi_i^h{\phi_i^h}^\top= {\mathbf{\Phi}_t^h}^\top {\mathbf{\Phi}_t^h}, ~~~~~ \wo\chi_t^h = \chi_t^h+\wt\chi^h
\end{align*}
where $\wt\chi^h$ is a ridge regularization matrix and is usually set to $\lambda I$. 
% Let's denote $\tau$ as the stopping time.

For the optimal actions and time step $h$, $\rho_\lambda^h$ denotes the spectral bound on;
\begin{align*}
\sqrt{\sum_i^t\|\phi(x_i^{h},\pi^*(x_i^{h}))\|_{{\wo\chi_t^{h}}^{-1}}^2}\leq \rho_\lambda^h,~\forall h,t
\end{align*}
Let $\wb\rho^H_\lambda(\gamma)$ denote the following combination of $\rho_\lambda^h$;
\begin{align*}
\wb\rho^H_\lambda(\gamma) :=  &\gamma^{2(H-1)}\\
&+\gamma^{2(H-2)}\left(1+\gamma \rho_\lambda^H\right)^2\\
&+\gamma^{2(H-3)}\left(1+\gamma \rho_\lambda^{H-1}+\gamma^2 \rho_\lambda^{H-1}\rho_\lambda^{H}\right)^2\\
&+\ldots	\\
&+\gamma^{2(0)}\left(1+\gamma \rho_\lambda^{2}+\ldots+\gamma^{H-1}\rho_\lambda^{2} \ldots\rho_\lambda^{H-1}\rho_\lambda^{H}\right)^2\\
&=\sum_{i=1}^{H}\gamma^{2(H-i)} \left(1+\sum_{j=1}^i\I(j>1) \gamma^{j-1}\prod_{k=1}^{j-1} \rho_\lambda^{H-(i-k)+1}\right)^2
\end{align*}
It is worth noting that a lose upper bound on $\wb\rho^H_\lambda$  when $\lambda=1$ is $\mathcal{O}\left(\left(\max_h\lbrace\rho_\lambda^h\rbrace\right)^H\right)$.

\begin{lemma}[Confidence Intervals]\label{lem:conf}
For all $h\in[H]$, let $\wh\omega_t^h$ denote the estimation of $\opt^h$ given $\mathbf{\Phi}_t^1,\ldots,\mathbf{\Phi}_t^H$ and $\bm{\wb\nu}_t^1,\ldots,\bm{\wb\nu}_t^H$; 
\begin{align*}
\wh\omega_t^h:= \left({\mathbf{\Phi}_t^h}^\top\mathbf{\Phi}_t^h+\lambda I\right)^{-1}{\mathbf{\Phi}_t^h}^\top\bm{\wb\nu}_t^h
\end{align*}
then, with probability at least $1-\delta$
\begin{align*}
\|\wh\omega^h_t-\opt^h\|_{\wo\chi_t^h}\leq\theta_t^h(\delta)&:\sigma\sqrt{2
\log\left(H/\delta\right)
+d\log\left(1+tL^2/\lambda\right)}+\lambda^{1/2}L_\omega+\theta_t^{h+1}(\delta)\rho_\lambda^{h+1}
\end{align*}
and 
\begin{align*}
 \C_t^h(\delta):=\lbrace\omega\in\Re^d:\|\wh\omega_t^h-\omega^h\|_{\wo\chi_t^h}\leq \theta_t^h(\delta)\rbrace
\end{align*}
for all $h\in[H]$. For $h=H$, we set $\theta_t^{H+1}=0,\forall t$ and have
\begin{align*}
\|\wh\omega^H_t-\opt^H\|_{\wo\chi_t^H}\leq\theta_t^H(\delta)&:\sigma\sqrt{2
\log\left(H/\delta\right)
+d\log\left(1+tL^2/\lambda\right)}+\lambda^{1/2}L_\omega
\end{align*}

for all $t$ up to a stopping time.
\end{lemma}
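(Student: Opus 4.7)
The plan is to proceed by backward induction on $h$, starting at the terminal step $h=H$ and working toward $h=1$, with a union bound over $h$ at the end to control all $H$ confidence sets simultaneously at level $\delta$.

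For the base case $h=H$ the target value and the modified target value coincide: $\wb\nu_t^H=\wt\nu_t^H=r_t^H=\phi_t^{H\top}\opt^H+\nu_t^H$, so the ridge estimator $\wh\omega_t^H$ is a genuine regularized least-squares estimator on a martingale difference sequence. The standard identity $\wh\omega_t^H-\opt^H={\wo\chi_t^H}^{-1}(S_t^H-\lambda\opt^H)$, combined with the self-normalized concentration of \citet{abbasi-yadkori2011improved} at level $\delta/H$ to bound $\|S_t^H\|_{{\wo\chi_t^H}^{-1}}$, and with $\lambda\|\opt^H\|_{{\wo\chi_t^H}^{-1}}\leq\lambda^{1/2}L_\omega$, yields the two leading terms of $\theta_t^H(\delta)$ and matches the stated base case.

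For the inductive step $h<H$, I would first write the analogue of the ridge decomposition, now with an extra ``model-mismatch'' contribution:
\begin{equation*}
\wh\omega_t^h-\opt^h={\wo\chi_t^h}^{-1}\Bigl(S_t^h-\lambda\opt^h+{\mathbf{\Phi}_t^h}^{\top}(\bm{\wb\nu}_t^h-\bm{\wt\nu}_t^h)\Bigr).
\end{equation*}
The $S_t^h$ and $\lambda\opt^h$ pieces give the first two summands of $\theta_t^h(\delta)$ exactly as in the base case. For the third piece, each entry $e_i:=\wb\nu_i^h-\wt\nu_i^h$ equals $\phi(x_i^{h+1},\check\pi_{i-1}(x_i^{h+1}))^{\top}\check\omega_{i-1}^{h+1}-\phi(x_i^{h+1},\pi^*(x_i^{h+1}))^{\top}\opt^{h+1}$, and by the definition of the pessimistic parameter together with the inductive hypothesis $\opt^{h+1}\in\C_{i-1}^{h+1}(\delta)$, one obtains $|e_i|\leq 2\,\theta_{i-1}^{h+1}(\delta)\,\|\phi(x_i^{h+1},\pi^*(x_i^{h+1}))\|_{{\wo\chi_{i-1}^{h+1}}^{-1}}$ by Cauchy--Schwarz in the ${\wo\chi_{i-1}^{h+1}}$-norm. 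To convert this per-step error bound into a bound on $\|{\wo\chi_t^h}^{-1}{\mathbf{\Phi}_t^h}^{\top}\bm{e}\|_{\wo\chi_t^h}$, I would use the projection-type inequality $\mathbf{\Phi}_t^h{\wo\chi_t^h}^{-1}{\mathbf{\Phi}_t^h}^{\top}\preceq I$, which gives
\begin{equation*}
\bigl\|{\wo\chi_t^h}^{-1}{\mathbf{\Phi}_t^h}^{\top}\bm{e}\bigr\|_{\wo\chi_t^h}^2=\bm{e}^{\top}\mathbf{\Phi}_t^h{\wo\chi_t^h}^{-1}{\mathbf{\Phi}_t^h}^{\top}\bm{e}\leq\|\bm{e}\|_2^2=\sum_{i=1}^t e_i^2,
\end{equation*}
and then factoring $\theta_{i-1}^{h+1}(\delta)\leq\theta_t^{h+1}(\delta)$ out of the sum and invoking the definition of $\rho_\lambda^{h+1}$ to bound the remaining $\sum_i\|\phi(x_i^{h+1},\pi^*(x_i^{h+1}))\|^2_{{\wo\chi_t^{h+1}}^{-1}}$. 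The final union bound over $h\in[H]$ upgrades each of the $H$ per-step events from probability $1-\delta/H$ to the joint event of probability at least $1-\delta$.

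The main obstacle is the mismatch between the gram matrix ${\wo\chi_{i-1}^{h+1}}$ that naturally appears in the per-step bound on $|e_i|$ (via Cauchy--Schwarz using the inductive confidence set valid at step $i-1$) and the final-time gram matrix ${\wo\chi_t^{h+1}}$ in the assumption defining $\rho_\lambda^{h+1}$. Since ${\wo\chi_{i-1}^{h+1}}\preceq{\wo\chi_t^{h+1}}$ implies ${\wo\chi_{i-1}^{h+1}}^{-1}\succeq{\wo\chi_t^{h+1}}^{-1}$, replacing one norm by the other goes in the wrong direction, so either the assumption on $\rho_\lambda^{h+1}$ must be read as a uniform bound over the intermediate gram matrices, or one needs an additional elliptic-potential style argument (as in \citet{abbasi-yadkori2011improved}) to handle the discrepancy; carefully aligning these two quantities, together with the inductive propagation of the $\theta\cdot\rho$ term through the recursion, is the technical crux of the proof.
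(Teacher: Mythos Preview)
Your proposal is essentially the paper's proof: the same ridge decomposition $\wh\omega_t^h-\opt^h={\wo\chi_t^h}^{-1}\bigl(S_t^h-\lambda\opt^h+{\mathbf{\Phi}_t^h}^{\top}(\bm{\wb\nu}_t^h-\bm{\wt\nu}_t^h)\bigr)$, the same self-normalized bound for $S_t^h$, the same projection inequality $\mathbf{\Phi}_t^h{\wo\chi_t^h}^{-1}{\mathbf{\Phi}_t^h}^{\top}\preceq I$ to reduce the mismatch term to $\|\bm e\|_2$, and backward induction with a union bound over $h\in[H]$. The gram-matrix mismatch you flag is real; the paper sidesteps it by writing the targets with a \emph{fixed} pessimistic model $\check\omega_{t-1}^{h+1}$ (not a per-episode $\check\omega_{i-1}^{h+1}$), which lets it insert the final-time matrix $\wo\chi_t^{h+1}$ once in Cauchy--Schwarz and then assert $\|\opt^{h+1}-\check\omega_{t-1}^{h+1}\|_{\wo\chi_t^{h+1}}\leq\theta_t^{h+1}(\delta)$ without addressing the passage from the $\wo\chi_{t-1}^{h+1}$-norm confidence set to the $\wo\chi_t^{h+1}$-norm---so your instinct that $\rho_\lambda^{h+1}$ is effectively being used as a uniform-in-$t$ bound (or that a constant is being absorbed) is on target.
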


Let $\Theta_T$ denote the event that the confidence bounds in Lemma~\ref{lem:conf} holds at least until $T^{th}$ episode.
\begin{lemma}[Determinant Lemma(Lemma 11 in \citep{abbasi-yadkori2011improved})]\label{lemma:DeterminantLemma}
For a sequence $\phi_t^h$ we have
\begin{align}\label{eq:xx}
\sum_t^T\log\left(1+\|\phi_t^h\|^2_{\wo\chi_{t-1}^{-1}}\right)\leq d\log(\lambda + TL^2/d)
\end{align}
\end{lemma}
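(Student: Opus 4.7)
The plan is to prove this by the standard ``elliptical potential'' argument: convert the sum of logs into a telescoping sum of log determinants of the gram matrices, then upper bound the terminal log determinant by AM--GM applied to eigenvalues.

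First I would apply the matrix determinant lemma (Sherman--Morrison for determinants) to the rank-one update $\wo\chi_t^h = \wo\chi_{t-1}^h + \phi_t^h {\phi_t^h}^\top$, obtaining
\begin{equation*}
\det(\wo\chi_t^h) \;=\; \det(\wo\chi_{t-1}^h)\,\bigl(1+\|\phi_t^h\|^2_{(\wo\chi_{t-1}^h)^{-1}}\bigr).
\end{equation*}
Taking logarithms and summing from $t=1$ to $T$ causes a telescope, yielding
\begin{equation*}
\sum_{t=1}^T \log\bigl(1+\|\phi_t^h\|^2_{(\wo\chi_{t-1}^h)^{-1}}\bigr) \;=\; \log\det(\wo\chi_T^h) - \log\det(\lambda I) \;=\; \log\det(\wo\chi_T^h) - d\log\lambda.
\end{equation*}

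Next I would control $\log\det(\wo\chi_T^h)$ by AM--GM on its eigenvalues $\mu_1,\ldots,\mu_d$: since $\det = \prod \mu_i$ and $\mathrm{tr} = \sum \mu_i$, we get $\log\det(\wo\chi_T^h) \le d\log(\mathrm{tr}(\wo\chi_T^h)/d)$. The trace is easy to bound by linearity: $\mathrm{tr}(\wo\chi_T^h) = d\lambda + \sum_{t=1}^T \|\phi_t^h\|_2^2 \le d\lambda + TL^2$, using the boundedness assumption on $\phi$. Combining these ingredients gives $\sum_t \log(1+\|\phi_t^h\|^2_{\wo\chi_{t-1}^{-1}}) \le d\log\bigl((d\lambda + TL^2)/d\bigr) - d\log\lambda = d\log(1 + TL^2/(d\lambda))$, from which the stated form $d\log(\lambda + TL^2/d)$ follows (the stated bound is in fact slightly looser when $\lambda \ge 1$).

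There is no real obstacle here — the whole argument is three lines of linear algebra. The only subtlety worth noting is making sure that the regularizer $\wt\chi^h = \lambda I$ enters correctly both as the base case of the telescope (providing $\det(\wo\chi_0^h) = \lambda^d$, which is where the $d\log\lambda$ subtraction comes from) and as an additive contribution $d\lambda$ to the trace bound. Since this is precisely Lemma 11 of \citet{abbasi-yadkori2011improved}, I would simply cite their statement rather than reprove it in the body of the paper.
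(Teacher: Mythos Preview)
Your proposal is correct and follows essentially the same route as the paper: a telescoping via the matrix determinant lemma on the rank-one updates, followed by the AM--GM trace bound $\det(\wo\chi_T^h)\le(\mathrm{tr}(\wo\chi_T^h)/d)^d$ (which the paper records separately as Eq.~\eqref{eq:deter}). If anything your version is slightly more careful, since you explicitly carry the $-d\log\lambda$ term from $\det(\lambda I)$ and obtain the tighter $d\log(1+TL^2/(d\lambda))$ before relaxing to the stated bound, whereas the paper's displayed inequality absorbs this step.
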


Lemma~\ref{lem:conf} states that under event $\Theta_T$, $\|\wh\omega^h_t-\opt^h\|_{\wo\chi^h_t}\leq\theta_t^h(\delta)$. Furthermore, we define state and policy dependent optimistic parameter  $\wt\omega_t(\pi)$ as follows;
\begin{align*}
\wt\omega_t^h(\pi):=\arg\max_{\omega\in \C^h_{t-1}(\delta)}\phi(X^h_t,\pi(X_t^h))^\top\omega
\end{align*}
Following \OFU, Algorithm~\ref{alg:optimism}, we set $\pi_t^h=\wt\pi_t^h,~\forall h$, the optimistic policies. By the definition we have
\begin{align*}
    V^{\wt\omega_t^h(\wt\pi_t^h)}_{\wt\pi_t^h}(X_t^h):=\phi(X_t^h,\wt\pi_t^h(X_t^h))^\top  \wt\omega_t^h(\wt\pi_t^h)\geq V^{\wt\omega_t({\pi^*}^h)}_{{\pi^*}^h}(X_t^h)
\end{align*}
We use this inequality to derive an upper bound for the regret;

\begin{align*}
\textbf{Reg}_T:&=\E\left[\sum_t^T\left[\underbrace{V^{\opt}_{{\pi^*}}(X_t^1)- V^{\opt}_{\wt\pi_t}(X_t^1)}_{\Delta_t^{h=1}}\right]|\opt\right]\\
&\leq\E \left[\sum_t^T\left[V^{\wt\omega_t^1(\wt\pi_t^1)}_{\wt\pi_t^1}(X_t^1)-V^{\wt\omega_t^1({\pi^*}^1)}_{{\pi^*}^1}(X_t^1)+V^{\opt}_{{\pi^*}}(X_t^1)- V^{\opt}_{\wt\pi_t}(X_t^1)\right]|\opt\right]\\
&=\E\left[\sum_t^T \left[V^{\wt\omega_t^1(\wt\pi_t^1)}_{\wt\pi_t^1}(X_t^1)-V^{\opt}_{\wt\pi_t}(X_t^1)+\underbrace{V^{\opt}_{{\pi^*}}(X_t^1)-V^{\wt\omega_t^1({\pi^*}^1)}_{{\pi^*}^1}(X_t^1)}_\text{$\leq 0$}\right]|\opt\right]\\
\end{align*}
Resulting in 
\begin{align*}
\textbf{Reg}_T&\leq\E\left[\sum_t^T\left[ V^{\wt\omega_t^1(\wt\pi_t^1)}_{\wt\pi_t^1}(X_t^1)-V^{\opt}_{\wt\pi_t}(X_t^1)\right]|\opt\right]
%=\sum_t^T\phi(x_t^1,\wt\pi_t^h(x_t^1))^\top \left(\wt\omega_t^h-\opt^h\right)
\end{align*}
Let us defined $V^{\opt}_{\pi}(X_t^{h'};h)$ as the value function at $X_t^{h'}$, following policies in $\pi$ for the first $h$ time steps then switching to the optimal policies.
\begin{align*}
\textbf{Reg}_T&\leq\E\left[\sum_t^T \left[V^{\wt\omega^1_t(\wt\pi^1_t)}_{\wt\pi^1_t}(X_t^1)-V^{\opt}_{\wt\pi_t}(X_t^1)\right]|\opt\right]\\
&=\E\left[\sum_t^T \left[V^{\wt\omega_t^1(\wt\pi_t^1)}_{\wt\pi_t^1}(X_t^1)-V^{\opt}_{\wt\pi_t}(X_t^1;1)+V^{\opt}_{\wt\pi_t}(X_t^1;1)-V^{\opt^1}_{\wt\pi_t^1}(X_t^1)\right]|\opt\right]
%=\sum_t^T\phi(x_t^1,\wt\pi^1_t(x_t^1))^\top \left(\wt\omega_t^1-\opt^1\right)
\end{align*}
Given the linear model of the $Q$ function we have;
\begin{align*}
\textbf{Reg}_T&\leq\E\left[\sum_t^T\left[ V^{\wt\omega_t^1(\wt\pi_t^1)}_{\wt\pi_t^1}(X_t^1)-V^{\opt}_{\wt\pi_t}(X_t^1;1)+V^{\opt}_{\wt\pi_t}(X_t^1;1)-V^{\opt}_{\wt\pi_t}(X_t^1)\right]|\opt\right]\\
&=\E\left[\sum_t^T \left[\phi(X_t^1,\wt\pi_t^1(X_t^1))^\top\wt\omega_t^1(\wt\pi_t^1)-\phi(X_t^1,\wt\pi_t^1(X_t^1))^\top\opt^1+V^{\opt}_{\wt\pi_t}(X_t^1;1)-V^{\opt}_{\wt\pi_t}(X_t^1)\right]|\opt\right]\\
&=\E\left[\sum_t^T \left[\phi(X_t^1,\wt\pi_t^1(X_t^1))^\top\left(\wt\omega_t^1(\wt\pi_t)-\opt^1\right)+\underbrace{V^{\opt}_{\wt\pi_t}(X_t^1;1)-V^{\opt}_{\wt\pi_t}(X_t^1)}_{(\Delta_t^{h=2})}\right]|\opt\right]
\end{align*}
% V^{\omega^*}_{\wt\pi_t}(X_t^1;1:\pi^*)-V^{\omega^*}_{\wt\pi_t}(X_t^1)
For $\Delta_t^{h=2}$ we deploy the similar decomposition and upper bound as $\Delta_t^{h=1}$
\begin{align*}
\Delta_t^2:=V^{\opt}_{\wt\pi_t}(X_t^1;1)-V^{\opt}_{\wt\pi_t}(X_t^1)
\end{align*}
Since for both of $V^{\opt}_{\wt\pi_t}(X_t^1;1)$ and $V^{\opt}_{\wt\pi_t}(X_t^1)$ we follow the same policy on the same model for 1 time step, the reward at the first time step has the same distribution, therefore we have;
\begin{align*}
\Delta_t^{h=2}&=E\left[V^{\opt}_{\wt\pi_t}(X_t^2;1)-V^{\opt}_{\wt\pi_t}(X_t^2)|X_t^1,A_t^1=\wt\pi_t(X_t^1),\opt\right]
\end{align*}
resulting in 
\begin{align*}
\Delta_t^{h=2}&=E\left[V^{\opt}_{\wt\pi_t}(X_t^2;1)-V^{\opt}_{\wt\pi_t}(X_t^2)|X_t^1,A_t^1=\wt\pi_t(X_t^1),\opt\right]\\
&\leq\E\left[\phi(X_t^2,\wt\pi_t^2(X_t^2))^\top\left(\wt\omega_t^2(\wt\pi^2_t)-\opt^2\right)+\underbrace{V^{\opt}_{\wt\pi_t}(X_t^2;2)-V^{\opt}_{\wt\pi_t}(X_t^2)}_{(\Delta_t^{h=3})}|X_t^1,A_t^1=\wt\pi_t(X_t^1),\opt\right]
\end{align*}
Similarly we can defined $\Delta_t^{h=3}, \ldots \Delta_t^{h=H}$. Therefore;

\begin{align}\label{eq:onesnorm}
\textbf{Reg}_T&\leq\E\left[\sum_t^T\sum_h^H \phi(X_t^h,\wt\pi_t(X_t^h))^\top\left(\wt\omega^h_t(\wt\pi^h_t)-\opt^h\right)|\opt\right]\nonumber\\
&=\E\left[\sum_t^T\sum_h^H \phi(X_t^h,\wt\pi_t(X_t^h))^\top{\wo\chi_{t-1}^h}^{-1/2}{\wo\chi_{t-1}^h}^{1/2}\left(\wt\omega^h_t(\wt\pi^h_t)-\opt^h\right)|\opt\right]\nonumber\\
&\leq\E\left[\sum_t^T\sum_h^H \|\phi(X_t^h,\wt\pi_t(X_t^h))\|_{{\wo\chi_{t-1}^h}^{-1}}\|\wt\omega_t^h(\wt\pi_t^h)-\opt^h\|_{\wo\chi_{t-1}^h}|\opt\right]\nonumber\\
&\leq\E\left[\sum_t^T\sum_h^H \|\phi(X_t^h,\wt\pi_t(X_t^h))\|_{{\wo\chi_{t-1}^h}^{-1}}2\theta^h_{t-1}(\delta)|\opt\right]\nonumber\\
\end{align}

Since the maximum expected cumulative reward, condition on states of a episode is at most $1$, we have;
\begin{align*}
\textbf{Reg}_T&\leq\E\left[\sum_t^T\sum_h^H \min\lbrace\|\phi(X_t^h,\wt\pi_t(X_t^h))\|_{{\wo\chi_{t-1}^h}^{-1}}2\theta^h_{t-1}(\delta),1\rbrace|\opt\right]\nonumber\\
&\leq\E\left[\sum_t^T\sum_h^H \min\lbrace\|\phi(X_t^h,\wt\pi_t(X_t^h))\|_{{\wo\chi_{t-1}^h}^{-1}}2\theta^h_{T}(\delta),1\rbrace|\opt\right]\nonumber\\
&\leq\E\left[\sum_t^T\sum_h^H2\theta^h_{T}(\delta) \min\lbrace\|\phi(X_t^h,\wt\pi_t(X_t^h))\|_{{\wo\chi_{t-1}^h}^{-1}},1\rbrace|\opt\right]\nonumber\\
\end{align*}
where we exploit the fact that $\theta_t^h(\delta)$ is an increasing function of $t$ and consider the harder case where $\theta^h_{T}(\delta)\geq 1$.

Moreover, at time $T$, we can use Jensen's inequality

\begin{align}\label{eq:OR1}
\textbf{Reg}_T&\leq2\E\left[\sqrt{TH\sum_h^H2\theta^h_{T}(\delta)^2\sum_t^T \min\lbrace\|\phi(X_t^h,\wt\pi_t(X_t^h))\|^2_{{\wo\chi_{t-1}^h}^{-1}},1\rbrace|\opt}\right]\nonumber\\
\end{align}

Now, using the fact that for any scalar $\alpha$ such that $0\leq \alpha\leq 1$, then $\alpha\leq 2\log(1+\alpha)$ we can rewrite the latter part of Eq.~\ref{eq:OR1}
\begin{align*}
\sum_t^T\min\lbrace\|\phi(X_t^h,\wt\pi_t(X_t^h))\|^2_{{\wo\chi_{t-1}^h}^{-1}},1\rbrace\leq 2\sum_t^T\log\left(1+\|\phi(X_t^h,\wt\pi_t(X_t^h))\|^2_{{\wo\chi_{t-1}^h}^{-1}}\right)
\end{align*}
By applying the Lemma~\ref{lemma:DeterminantLemma} and substituting the RHS of Eq.~\ref{eq:xx} into Eq.~\ref{eq:OR1}, we get 
\begin{align}\label{eq:OR2}
\textbf{Reg}_T&\leq2\E\left[\sqrt{TH\sum_h^H2\theta^h_{T}(\delta)^2d\log(\lambda + TL^2/d)|\opt}\right]\nonumber\\
&\leq2\E\left[\sqrt{THd\log(\lambda + TL^2/d)\sum_h^H2\left(\sigma\sqrt{2
\log\left(H/\delta\right)
+d\log\left(1+TL^2/\lambda\right)}+\lambda^{1/2}L_\omega\theta^{h+1}_{T}(\delta){\rho_\lambda^{h+1}}\right)^2|\opt}\right]\nonumber\\
\end{align}
and deriving the upper bounds
\begin{align}\label{eq:regOPT}
\textbf{Reg}_T
&\leq
2\left(\sigma\sqrt{2
\log\left(1/\delta\right)
+d\log\left(1+TL^2/\lambda\right)}+\lambda^{1/2}L_\omega\right)\sqrt{2\wb\rho^H_\lambda(1) TH d\log(\lambda + TL^2/d)}
\end{align}
with probability at least $1-\delta$. If we set $\delta=1/T$ then the probability that the event $\Theta_T$ holds is $1-1/T$ and we get regret of at most the RHS of Eq.~\ref{eq:regOPT}, otherwise with probability at most $1/T$ we get maximum regret of $T$, therefore 
\begin{align*}
\textbf{Reg}_T
&\leq 1+
2\left(\sigma\sqrt{2
\log\left(1/\delta\right)
+d\log\left(1+TL^2/\lambda\right)}+\lambda^{1/2}L_\omega\right)\sqrt{2\wb\rho^H_\lambda(1) T Hd\log(\lambda + TL^2/d)}
\end{align*}
For the case of discounted reward, substituting $\wb\rho^H_\lambda(\gamma)$ instead of $\wb\rho^H_\lambda(1)$ results in the theorem statement.

\subsection{Bayesian Regret of Alg.~\ref{alg:psrl}}\label{sec:proof}
The analysis developed in the previous section, up to some minor modification, e.g., change of strategy to \PSRL,  directly applies to Bayesian regret bound, with a farther expectation over models.

When there is a prior over the $\opt^h,~\forall h$  the expected Bayesian regret might be the target of the study. 
\begin{align*}
\textbf{BayesReg}_T:&=\E\left[\sum_t^T \left[V^{\opt}_{{{\pi^*}}}-V^{\opt}_{\pi_t}\right]\right]\\
&=\E\left[\sum_t^T\left[ V^{\opt}_{{\pi^*}}-V^{\opt}_{\pi_t}\big|\mathcal{H}_t\right]\right]
\end{align*}
Here $\mathcal{H}_t$ is a multivariate random sequence which indicates history at the beginning of episode $t$ and  $\pi_t^h,\forall h$ are the policies following \PSRL. For the remaining $\pi_t^h$ denotes the \PSRL policy at each time step $h$. As it mentioned in the Alg.~\ref{alg:psrl}, at the beginning of an episode, we draw $\omega_t^h~\forall h$ from the posterior and the corresponding policies are;
\begin{align*}
    \pi_t^h(X_t^h):=\arg\max_{a\in\A}\phi(X_t^h,a)^\top\omega_t^h
\end{align*}
Condition on the history $\mathcal{H}_t$, i.e., the experiences by following the agent policies $\pi_{t'}^h$ for each episode $t'\leq t$, we estimate the $\wh\omega_t^h$ as follows; 
\begin{align*}
\wh\omega_t^h:= \left({\mathbf{\Phi}_t^h}^\top\mathbf{\Phi}_t^h+\lambda I\right)^{-1}{\mathbf{\Phi}_t^h}^\top\bm{\wb\nu}_t^h
\end{align*}
Lemma~\ref{lem:conf} states that under event $\Theta_T$, $\|\wh\omega_t^h-\opt^h\|_{\wo\chi_t}\leq\theta^h_t(\delta),~\forall h$. Conditioned on $\mathcal{H}_t$, the $\omega_t^h$ and $\opt^h$ are equally distributed, then we have
\begin{align*}
 \E\left[V^{\wt\omega_t^h({\pi^*}^h)}_{{\pi^*}^h}=\phi(X_t^h,{\pi^*}^h(X_t^h))^\top \wt\omega_t^h({\pi^*}^h)\big|\mathcal{H}_t\right]=\E\left[ V^{\wt\omega_t^h(\pi_t^h)}_{\pi_t^h}=\phi(X_t^h,{\pi_t^h}(X_t^h))^\top \wt\omega_t^h(\pi_t^h)\big|\mathcal{H}_t\right]   
\end{align*}
Therefore, for the regret we have

\begin{align*}
\textbf{BayesReg}_T:&=\sum_t^T\E\left[\underbrace{V^{\opt}_{{\pi^*}}(X_t^1)- V^{\opt}_{\pi_t}(X_t^1)}_{\Delta_t^{h=1}}|\mathcal{H}_t\right]\\
&=\sum_t^T\E\left[V^{\wt\omega_t^1(\pi_t^1)}_{\pi_t^1}(X_t^1)-V^{\wt\omega_t^1({\pi^*}^1)}_{{\pi^*}^1}(X_t^1)+V^{\opt}_{{\pi^*}}(X_t^1)- V^{\opt}_{\pi_t}(X_t^1)|\mathcal{H}_t\right]\\
&=\sum_t^T\E\left[ V^{\wt\omega_t^1(\pi_t^1)}_{\pi_t^1}(X_t^1)-V^{\opt}_{\pi_t}(X_t^1)+\underbrace{V^{\opt}_{{\pi^*}}(X_t^1)-V^{\wt\omega_t^1({\pi^*}^1)}_{{\pi^*}^1}(X_t^1)}_\text{$\leq 0$}|\mathcal{H}_t\right]\\
\end{align*}
Resulting in 
\begin{align*}
\textbf{BayesReg}_T&\leq\sum_t^T\E\left[ V^{\wt\omega_t^1(\pi_t^1)}_{\pi_t^1}(X_t^1)-V^{\opt}_{\pi_t}(X_t^1)|\mathcal{H}_t\right]
%=\sum_t^T\phi(x_t^1,\pi_t^1(x_t^1))^\top \left(\wt\omega_t^1-\opt^1\right)
\end{align*}

Similar to optimism and defining $V^{\omega}_{\pi}(X_t^{h'};h)$ we have;
\begin{align*}
\textbf{BayesReg}_T&\leq\E\left[\sum_t^T \left[V^{\wt\omega_t^1(\pi_t^1)}_{\pi_t^1}(X_t^1)-V^{\opt}_{\pi_t}(X_t^1)|\mathcal{H}_t\right]\right]\\
&=\E\left[\sum_t^T \left[V^{\wt\omega_t^1(\pi_t^1)}_{\pi_t^1}(X_t^1)-V^{\opt}_{\pi_t}(X_t^1;1)+V^{\opt}_{\pi_t}(X_t^1;1)-V^{\opt}_{\pi_t}(X_t^1)|\mathcal{H}_t\right]\right]\\
&\leq\E\left[\sum_t^T\left[ \phi(X_t^1,\pi_t^1(X_t^1))^\top\wt\omega_t^1(\pi_t^1)-\phi(X_t^1,\pi_t^1(X_t^1))^\top\opt^1+V^{\opt}_{\pi_t}(X_t^1;1)-V^{\opt}_{\pi_t}(X_t^1)|\mathcal{H}_t\right]\right]\\
&=\E\left[\sum_t^T \left[\phi(X_t^1,\pi_t^1(X_t^1))^\top\left(\wt\omega_t^1(\pi_t^1)-\opt^1\right)+\underbrace{V^{\opt}_{\pi_t}(X_t^1;1)-V^{\opt}_{\pi_t}(X_t^1)}_{(\Delta_t^{h=2})}|\mathcal{H}_t\right]\right]
\end{align*}

For $\Delta_t^{h=2}$ we deploy the similar decomposition and upper bound as $\Delta_t^{h=1}$
\begin{align*}
\Delta_t^2:=V^{\opt}_{\pi_t}(X_t^1;1)-V^{\opt}_{\pi_t}(X_t^1)
\end{align*}
Since for both of $V^{\opt}_{\pi_t}(X_t^1;1)$ and $V^{\opt}_{\pi_t}(X_t^1)$ we follow the same distribution over policies and models for 1 time step, the reward at the first time step has the same distribution, therefore we have;
\begin{align*}
\Delta_t^{h=2}&=E\left[V^{\opt}_{\pi_t}(X_t^2;1)-V^{\opt}_{\pi_t}(X_t^2)|X_t^1,A_t^1=\pi_t(X_t^1),\mathcal{H}_t\right]
\end{align*}
resulting in 
\begin{align*}
\Delta_t^{h=2}&=E\left[V^{\opt}_{\pi_t}(X_t^2;1)-V^{\opt}_{\pi_t}(X_t^2)|X_t^1,A_t^1=\pi_t(X_t^1),\mathcal{H}_t\right]\\
&\leq\E\left[\phi(X_t^2,\pi_t^2(X_t^2))^\top\left(\wt\omega_t^2(\pi_t^2)-\opt^2\right)+\underbrace{V^{\opt^2}_{\pi_t}(X_t^2;2)-V^{\opt}_{\pi_t}(X_t^2)}_{(\Delta_t^{h=3})}|X_t^1,A_t^1=\pi_t(X_t^1),\mathcal{H}_t\right]
\end{align*}
Similarly we can defined $\Delta_t^{h=3}, \ldots \Delta_t^{h=H}$. The condition on $\mathcal{H}_t$ was required to come up with the mentioned decomposition through $\Delta_t^{h}$ and it is not needed anymore, therefore;

\begin{align}\label{eq:bayesunit}
\textbf{BayesReg}_T&\leq\E\left[\sum_t^T\sum_h^H \phi(X_t^h,\wt\pi_t(X_t^h))^\top\left(\wt\omega^h_t(\wt\pi^h_t)-\opt^h\right)\right]\nonumber\\
&=\E\left[\sum_t^T\sum_h^H \phi(X_t^h,\wt\pi_t(X_t^h))^\top{\wo\chi_{t-1}^h}^{-1/2}{\wo\chi_{t-1}^h}^{1/2}\left(\wt\omega^h_t(\wt\pi^h_t)-\opt^h\right)\right]\nonumber\\
&\leq\E\left[\sum_t^T\sum_h^H \|\phi(X_t^h,\wt\pi_t(X_t^h))\|_{{\wo\chi_{t-1}^h}^{-1}}\|\wt\omega_t^h(\wt\pi_t^h)-\opt^h\|_{\wo\chi_{t-1}^h}\right]\nonumber\\
&\leq\E\left[\sum_t^T\sum_h^H \|\phi(X_t^h,\wt\pi_t(X_t^h))\|_{{\wo\chi_{t-1}^h}^{-1}}2\theta^h_{t-1}(\delta)\right]\nonumber\\
\end{align}
% Where $\delta$ is the probability that we desire the event $\Theta$ hold for a specific $\opt^h$.
Again similar to optimism we have the maximum expected cumulative reward condition on states of a episode is at most $1$, we have;
\begin{align*}
\textbf{BayesReg}_T&\leq\E\left[\sum_t^T\sum_h^H \min\lbrace\|\phi(X_t^h,\wt\pi_t(X_t^h))\|_{{\wo\chi_{t-1}^h}^{-1}}2\theta^h_{t-1}(\delta),1\rbrace\right]\nonumber\\
&\leq\E\left[\sum_t^T\sum_h^H2\theta^h_{t-1}(\delta) \min\lbrace\|\phi(X_t^h,\wt\pi_t(X_t^h))\|_{{\wo\chi_{t-1}^h}^{-1}},1\rbrace\right]\nonumber\\
\end{align*}

Moreover, at time $T$, we can use Jensen's inequality, exploit the fact that $\theta_t(\delta)$ is an increasing function of $t$ and have

\begin{align}\label{eq:regpsrl}
\textbf{BayesReg}_T&\leq2\E\left[\sqrt{TH\sum_h^H2\theta^h_{T}(\delta)^2\sum_t^T \min\lbrace\|\phi(X_t^h,\wt\pi_t(X_t^h))\|^2_{{\wo\chi_{t-1}^h}^{-1}},1\rbrace}\right]\nonumber\\
&\leq2\E\left[\sqrt{TH\sum_h^H2\theta^h_{T}(\delta)^2d\log(\lambda + TL^2/d)}\right]\nonumber\\
&\leq2\E\left[\sqrt{THd\log(\lambda + TL^2/d)\sum_h^H4\left(\sigma\sqrt{2
\log\left(H/\delta\right)
+d\log\left(1+TL^2/\lambda\right)}+\lambda^{1/2}L_\omega\theta^{h+1}_{T}(\delta)^2{\rho_\lambda^{h+1}}\right)^2}\right]\nonumber\\
&\leq
2\left(\sigma\sqrt{2
\log\left(1/\delta\right)
+d\log\left(1+TL^2/\lambda\right)}+\lambda^{1/2}L_\omega\right)\sqrt{2\wb\rho^H_\lambda(1) THd\log(\lambda + TL^2/d)}
\end{align}

Under event $\Theta_T$ which holds with probability at least $1-\delta$. If we set $\delta=1/T$ then the probability that the event $\Theta$ holds is $1-1/T$ and we get regret of at most the RHS of Eq.~\ref{eq:regpsrl}, otherwise with probability at most $1/T$ we get maximum regret of $T$, therefore 
% \begin{align*}
% \textbf{BayesReg}_T1+\leq 
% 2\left(\sqrt{2/P_{\max}}\mathbbm{H}\sigma+\sigma\sqrt{2
% \log\left(T\right)
% +d\log\left(1+TL^2/\lambda\right)}+\lambda^{1/2}L_\omega\right)\sqrt{2TH d\log(\lambda + TL^2/d)}
% \end{align*}
\begin{align*}
\textbf{BayesReg}_T\leq 1+
2\left(\sigma\sqrt{2
\log\left(1/\delta\right)
+d\log\left(1+TL^2/\lambda\right)}+\lambda^{1/2}L_\omega\right)\sqrt{2\wb\rho^H_\lambda(1) TH d\log(\lambda + TL^2/d)}
\end{align*}
For the case of discounted reward, substituting $\wb\rho^H_\lambda(\gamma)$ instead of $\wb\rho^H_\lambda(1)$ results in the theorem statement.

%%%%%%%%%%%%%%%%%%%%%%
%\subsection{Discount factor}\label{sub:discount}
%
%Through the analysis of the theorems~\ref{Thm:BR} and~\ref{Thm:frequentist} we studied the regret upper bounds for undiscounted reward MDP{}s when $\gamma=1$. If we consider the problem expression as follows
%\begin{align*}
%\mathbf{\wb R}_t:=\Psi \mathbf{R}_t~~~\wb\nu_t:=\Psi\nu_t
%\end{align*}
%the parameter $\wb\nu_t$ for a general matrix $\Psi$ represent the stochasticity of the return under any discount factor $0\leq\gamma\leq 1$. Under the general discount factor, the Assumption~\ref{asm:subgaussian} describes the sub-Gaussian characteristics of $\wb\nu_t$. Moreover under discounted return, the value function represent the expectation of the discounted return, resulting in a potentially smaller $L_\omega$. Furthermore, in the decompositions derived in the Eqs.~\ref{eq:onesnorm} and ~\ref{eq:bayesunit} regarding $\Delta_t^h$ we require to aggregate the regret while discounting the per step regrets. It means we replace $\mathbbm{1}$ with
%$[1,\gamma,\gamma^2,\ldots,\gamma^{H-1}]^\top$ in those equations. Consequently, instead of considering the $\|\mathbbm{1}\|_2$ to upper bound the regrets, we are interested in the discounted returns and consider $\|[1,\gamma,\gamma^2,\ldots,\gamma^{H-1}]\|_2$ to come up with the corresponding upper bounds and replace $\sqrt{H}$ in the final statements of the theorems~\ref{Thm:BR} and~\ref{Thm:frequentist} with a discount factor dependent and smaller value $\|[1,\gamma,\gamma^2,\ldots,\gamma^{H-1}]\|_2$.

\subsection{Proof of Lemmas}

\begin{lemma}\label{lem:M_t}
Let $\alpha\in\R^d$ be any vector and for any $t\geq0$ define
\begin{align*}
M_t^h(\alpha) :=\exp\left(\sum_i^t\left[\frac{\alpha^\top{{\phi}^h_i}\nu^h_i}{\sigma}-\frac{1}{2}\|{{\phi^h_i}^\top}\alpha\|^2_2\right]\right),~\forall h
\end{align*}
Then, for a stopping time under the filtration $\{\F_t^h\}_{t=0}^\infty$, $M_\tau^h(\alpha)\leq 1$.
\end{lemma}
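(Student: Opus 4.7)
The statement is the standard ``one step more than Doob'' supermartingale lemma at the heart of self-normalized concentration inequalities (compare Lemma~8 of \citet{abbasi-yadkori2011improved} and the monograph of \citet{pena2009self}). The plan is to verify that $\{M_t^h(\alpha)\}_{t\geq 0}$ is a nonnegative supermartingale with $M_0^h(\alpha)=1$ adapted to the filtration $\{\F_t^h\}$, and then to conclude that $\E[M_\tau^h(\alpha)]\leq 1$ by the optional stopping theorem (I read the lemma's conclusion as an expectation, since this is what the downstream self-normalized tail bound actually needs and a pathwise bound on $M_\tau^h(\alpha)$ cannot hold in general).

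The first step is to compute the one-step increment
\[
\frac{M_t^h(\alpha)}{M_{t-1}^h(\alpha)} \;=\; \exp\!\left(\frac{\alpha^\top \phi_t^h \,\nu_t^h}{\sigma} - \tfrac{1}{2}(\alpha^\top \phi_t^h)^2\right).
\]
By the measurability clause stated right after Assumption~\ref{asm:subgaussian}, the feature $\phi_t^h$ is $\F_{t-1}^{h-1}$-measurable while the noise $\nu_t^h$ has not yet been revealed. Conditioning on the $\sigma$-field that contains the predictable feature but not the fresh noise, the deterministic quadratic term factors out, and applying Assumption~\ref{asm:subgaussian} with the very same vector $\alpha$ yields
\[
\E\!\left[\left.\frac{M_t^h(\alpha)}{M_{t-1}^h(\alpha)} \,\right|\, \F_{t-1}^{h-1}\right] \leq 1,
\]
so that $\E[M_t^h(\alpha)\mid \F_{t-1}^{h-1}] \leq M_{t-1}^h(\alpha)$ almost surely. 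After lifting to the $t$-indexed coarsening of the filtration (under which all quantities through episode $t-1$ are known), this is the supermartingale property.

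Finally I would apply the optional stopping theorem to the bounded stopping time $\tau \wedge n$ to obtain $\E[M_{\tau \wedge n}^h(\alpha)] \leq \E[M_0^h(\alpha)] = 1$ for every $n$. Because $M^h(\alpha)$ is a nonnegative supermartingale, Doob's martingale convergence theorem ensures that $M_\infty^h(\alpha):=\lim_t M_t^h(\alpha)$ exists almost surely, so that $M_\tau^h(\alpha)$ is unambiguously defined even on $\{\tau = \infty\}$. Letting $n \to \infty$ and invoking Fatou's lemma gives $\E[M_\tau^h(\alpha)] \leq \liminf_n \E[M_{\tau \wedge n}^h(\alpha)] \leq 1$, which finishes the argument.

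The main obstacle is bookkeeping on the filtration: the paper's index $\F_t^h$ interleaves two clocks (episode $t$ and intra-episode time step $h$), and Assumption~\ref{asm:subgaussian} is phrased relative to $\F_{t-1}^{h-1}$, so one must pin down a single filtration with respect to which $\tau$ is a stopping time \emph{and} with respect to which the supermartingale inequality above holds. Once the filtration is fixed so that $\phi_t^h$ lies in the conditioning $\sigma$-algebra but $\nu_t^h$ does not, the remainder is mechanical.
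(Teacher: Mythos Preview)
Your proposal is correct and follows essentially the same route as the paper: establish the supermartingale property via the one-step increment and Assumption~\ref{asm:subgaussian}, then pass to the stopped process $M_{\tau\wedge n}^h(\alpha)$ and conclude $\E[M_\tau^h(\alpha)]\leq 1$ by Fatou's lemma. Your reading of the conclusion as an expectation bound is exactly what the paper proves, and your extra care with the filtration indexing and the appeal to Doob's convergence theorem on $\{\tau=\infty\}$ only add rigor to an otherwise identical argument.
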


\begin{proof}~Lemma~\ref{lem:M_t}

We first show that $\{M_t^h(\alpha)\}_{t=0}^\infty$ is a supermartingale sequence. Let
\begin{align*}
D_i^h(\alpha) = \exp\left(\frac{\alpha^\top {{\phi}^h_i}\nu^h_i}{\sigma}-\frac{1}{2}\|{{\phi}^h_i}^\top\alpha\|^2_2\right)
\end{align*}
Therefore, we can rewrite $\E\left[M^h_t(\alpha)\right]$ as follows:
\begin{align*}
\!\!\!\!\!\E\left[M_t^h(\alpha)|\F_{t-1}^h\right] = \E\left[D_1^h(\alpha)\ldots D_{t-1}^h(\alpha) D_{t}^h(\alpha)|\F^h_{t-1}\right] = D_1^h(\alpha)\ldots D_{t-1}^h(\alpha)\E\left[ D_{t}^h(\alpha)|\F^h_{t-1}\right]\leq M_{t-1}^h(\alpha)
\end{align*}
The last inequality follows since $\E\left[ D_{t}^h(\alpha)|\F_{t-1}^h\right]\leq 1$ due to Assumption~\ref{asm:subgaussian}, therefore since for the first time step $\E\left[M_1^h(\alpha)\right]\leq 1$, then $\E\left[M_t^h(\alpha)\right]\leq 1$. For a stopping time $\tau$, Define a variable $\wo{M}_t^\alpha = M_{\min\{t,\tau\}}^h(\alpha)$ and since $\E\left[M_\tau^h(\alpha)\right]= \E\left[\lim\inf_{t\rightarrow\infty}\wo{M}_t^h(\alpha)\right]\leq \lim\inf_{t\rightarrow \infty}\E\left[\wo{M}_t^h(\alpha)\right]\leq 1$, therefore the Lemma~\ref{lem:M_t} holds.
\end{proof}

\begin{lemma} \label{lem:self}[Extension to Self-normalized bound in \cite{abbasi-yadkori2011improved}] For a stopping time $\tau$ and filtration $\{\F_t^h\}_{t=0,h=1}^{\infty,h}$, with probability at least $1-\delta$
\begin{align*}
\|S_\tau^h\|_{{\wo\chi_\tau^h}^{-1}}^2\leq 2\sigma^2\log(\frac{\det\left(\wo\chi_\tau^h\right)^{1/2}\det(\wt\chi^h)^{-1/2}}{\delta})
\end{align*}
\end{lemma}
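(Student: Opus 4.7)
The plan is to follow the Laplace / mixture-of-martingales technique that underlies the Abbasi-Yadkori et al.\ self-normalized bound. Lemma~\ref{lem:M_t} already supplies, for each fixed $\alpha\in\Re^d$, a nonnegative supermartingale $M_t^h(\alpha)$ with $\E[M_\tau^h(\alpha)]\leq 1$. The obstacle is that the direction making $\|S_\tau^h\|_{(\wo\chi_\tau^h)^{-1}}^2$ explicit is $\alpha_\star = (\wo\chi_\tau^h)^{-1}S_\tau^h/\sigma$, a data-dependent choice, so a naive fixed-$\alpha$ Markov bound cannot simply be inverted at $\alpha_\star$. Mixing over $\alpha$ against a fixed prior restores the needed adaptivity.

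Concretely, I would integrate $M_t^h(\alpha)$ against the Gaussian prior $\alpha\sim\mathcal{N}(0,(\wt\chi^h)^{-1})$ and form
\begin{align*}
\overline{M}_t^h \;:=\; \int_{\Re^d} M_t^h(\alpha)\,\frac{\det(\wt\chi^h)^{1/2}}{(2\pi)^{d/2}}\exp\!\bigl(-\tfrac12\alpha^\top\wt\chi^h\alpha\bigr)\,d\alpha.
\end{align*}
Tonelli together with Lemma~\ref{lem:M_t} implies that $\overline{M}_t^h$ is itself a nonnegative supermartingale adapted to $\{\F_t^h\}$ with $\E[\overline{M}_t^h]\leq 1$. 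The combined exponent is $\alpha^\top S_t^h/\sigma - \tfrac12\alpha^\top\wo\chi_t^h\alpha$; completing the square about $\alpha_0 := (\wo\chi_t^h)^{-1}S_t^h/\sigma$ and evaluating the remaining Gaussian integral produce the closed form
\begin{align*}
\overline{M}_t^h \;=\; \frac{\det(\wt\chi^h)^{1/2}}{\det(\wo\chi_t^h)^{1/2}}\,\exp\!\Bigl(\tfrac{1}{2\sigma^2}\,\|S_t^h\|_{(\wo\chi_t^h)^{-1}}^2\Bigr).
\end{align*}

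To pass from deterministic $t$ to the stopping time $\tau$, I would invoke the standard optional-stopping device: set $\overline{N}_t^h := \overline{M}_{t\wedge\tau}^h$, which is a nonnegative supermartingale bounded in expectation by $1$, and apply Fatou to conclude $\E[\overline{M}_\tau^h]\leq 1$. Markov's inequality then gives $\Prob\!\bigl(\overline{M}_\tau^h \geq 1/\delta\bigr)\leq\delta$, and on the complementary event inverting the closed form above and taking logarithms reproduces the claimed bound $\|S_\tau^h\|_{(\wo\chi_\tau^h)^{-1}}^2 \leq 2\sigma^2\log\!\bigl(\det(\wo\chi_\tau^h)^{1/2}\det(\wt\chi^h)^{-1/2}/\delta\bigr)$. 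The only genuinely delicate step is verifying that $\overline{M}_t^h$ inherits the supermartingale property from the pointwise family; the Gaussian tails of the prior together with Assumption~\ref{asm:subgaussian} make the Tonelli swap routine, and the horizon index $h$ enters only through the filtration and noise sequence, so the argument is essentially scalar in $h$.
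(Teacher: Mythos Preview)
Your proposal is correct and follows essentially the same approach as the paper: integrate $M_t^h(\alpha)$ against the Gaussian prior $\mathcal{N}(0,(\wt\chi^h)^{-1})$, compute the resulting closed form via completing the square, pass to the stopping time using Fatou/optional stopping, and finish with Markov's inequality. Your write-up is in fact slightly more careful than the paper's (which has a typo in the prior covariance and leaves the Tonelli/Fatou justifications implicit), but the argument is the same standard mixture-of-supermartingales proof from Abbasi-Yadkori et al.
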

\begin{proof}~of Lemma~\ref{lem:self}.
Given the definition of the parameters of self-normalized process, we can rewrite $M_t^h(\alpha)$ as follows;
\begin{align*}
M_t^h(\alpha) = \exp\left(\frac{\alpha^\top S^h_t}{\sigma}-\frac{1}{2}\|\alpha\|^2_{\chi_t^h}\right)
\end{align*}
Consider $\Omega^h$ as a Gaussian random vector and $f(\Omega^h=\alpha)$ denotes the density with covariance matrix of ${\wo \chi^h}^{-1}$. Define $M_t^h:=\E\left[M_t^h(\Omega^h)|\F_\infty\right]$. Therefore we have $\E\left[M_t^h\right]=\E\left[\E\left[M_t^h(\Omega^h)|\Omega^h\right]\right]\leq 1$. Therefore
\begin{align*}
M_t^h=&\int_{\Re^d}\exp\left(\frac{\alpha^\top S_t^h}{\sigma}-\frac{1}{2}\|\alpha\|^2_{\chi_t^h}\right)f(\alpha)d\alpha\\
=&\int_{\Re^d}\exp\left(\frac{1}{2}\|\alpha-{\chi_t^h}^{-1}S_t/\sigma\|^2_{\chi_t^h}+\frac{1}{2}\|S_t^h/\sigma\|^2_{{\chi_t^h}^{-1}}\right)f(\alpha)d\alpha\\
=&\sqrt{\frac{\det\left(\wt\chi^h\right)}{\left(2\pi\right)^d}}\exp\left(\frac{1}{2}\|S_t^h/\sigma\|_{{\chi_t^h}^{-1}}\right)\int_{\Re^d}\exp\left(\frac{1}{2}\|\alpha-{\chi_t^h}^{-1}S_t^h/\sigma\|^2_{\chi_t^h}+\frac{1}{2}\|\alpha\|^2_{\wt\chi^h}\right)d\alpha
\end{align*}
Since $\chi_t^h$ and $\wt\chi^h$ are positive semi definite and positive definite respectively, we have 
\begin{align*}
\|\alpha-{\chi_t^h}^{-1}S^h_t/\sigma\|^2_{\chi_t^h}+\|\alpha\|^2_{\wt\chi^h}&= \|\alpha-\left(\wt\chi^h+{\chi_t^h}^{-1}\right)S_t^h/\sigma\|^2_{\wt\chi^h+\chi_t^h}+\|{\chi_t^h}^{-1}S^h_t/\sigma\|^2_{\chi^h_t}-\|S^h_t/\sigma\|^2_{\left(\wt\chi^h+\chi_t^h\right)^{-1}}\\
&=\|\alpha-\left(\wt\chi^h+{\chi_t^h}^{-1}\right)S^h_t/\sigma\|^2_{\wt\chi^h+\chi^h_t}+\|\S_t^h/\sigma\|^2_{{\chi_t^h}^{-1}}-\|S_t^h/\sigma\|^2_{\left(\wt\chi^h+\chi_t^h\right)^{-1}}
\end{align*}
Therefore, 
\begin{align*}
M_t^h=&\sqrt{\frac{\det\left(\wt\chi^h\right)}{\left(2\pi\right)^d}}\exp\left(\frac{1}{2}\|S_t^h/\sigma\|_{\left(\wt\chi^h+\chi^h_t\right)^{-1}}\right)\int_{\Re^d}\exp\left(\frac{1}{2}\|\alpha-\left(\wt\chi^h+{\chi_t^h}^{-1}\right)S^h_t/\sigma\|^2_{\wt\chi^h+\chi_t^h}\right)d\alpha\\
=&\left(\frac{\det\left(\wt\chi^h\right)}{\det\left(\wt\chi^h+\chi_t^h\right)}\right)^{1/2}\exp\left(\frac{1}{2}\|S_t/\sigma\|^2_{\left(\wt\chi^h+\chi_t^h\right)^{-1}}\right)
\end{align*}
Since $\E\left[M_\tau^h\right]\leq 1$ we have
\begin{align*}
\Prob\left(\|S_\tau^h\|^2_{\left(\wo\chi_\tau^h\right)^{-1}}\leq 2\sigma\log\left(\frac{\det\left(\wo\chi^h_\tau\right)^{1/2}}{\delta\det\left(\wt\chi^h\right)^{1/2}}\right)\right)
\leq \frac{\E\left[\exp\left(\frac{1}{2}\|S_\tau^h/\sigma\|^2_{\left(\wo\chi_\tau^h\right)^{-1}}\right)\right]}{\frac{\left(\det\left(\wo\chi_\tau^h\right)\right)^{1/2}}{\delta\left(\det\left(\wt\chi^h\right)\right)^{1/2}}}\leq \delta
\end{align*}
Where the Markov inequality is deployed for the final step. 
The stopping is considered to be the time step as the first time in the sequence when the concentration in the Lemma~\ref{lem:self} does not hold. 
\end{proof}

%%%%%%%%
\begin{proof}[Proof of Lemma \ref{lem:conf}]
Consider the following estimator $\wh\omega_t^h$:
\begin{align*}
\wh\omega_t^h&= \left({\mathbf{\Phi}_t^h}^\top\mathbf{\Phi}_t^h+\lambda I\right)^{-1}\left({\mathbf{\Phi}_t^h}^\top\bm{\wb\nu}_t^h\right)\\
%
%&=\left({\mathbf{\Phi}_t^h}^\top\mathbf{\Phi}_t^h+\lambda I\right)^{-1}\left({\mathbf{\Phi}_t^h}^\top\left(\mathbf{\Phi}_t^h\opt^h+ \bm{\wb\nu}_t^h\right)\right)\\
%
&=\left({\mathbf{\Phi}_t^h}^\top\mathbf{\Phi}_t^h+\lambda I\right)^{-1}\left({\mathbf{\Phi}_t^h}^\top\left( \bm{\wb\nu}_t^h-\bm{\wt\nu}_t^h+\bm{\wt\nu}_t^h\right)\right)\\
&= \left({\mathbf{\Phi}_t^h}^\top\mathbf{\Phi}_t^h+\lambda I\right)^{-1}\left({\mathbf{\Phi}_t^h}^\top\mathbf{\Phi}_t^h+\lambda I\right)\opt^h\\
&+\left({\mathbf{\Phi}_t^h}^\top\mathbf{\Phi}_t^h+\lambda I\right)^{-1}\left({\mathbf{\Phi}_t^h}^\top\bm{\nu}^h_t\right)\\
&+\left({\mathbf{\Phi}_t^h}^\top\mathbf{\Phi}_t^h+\lambda I\right)^{-1}\left({\mathbf{\Phi}_t^h}^\top(\bm{\wb\nu}_t^h-\bm{\wt\nu}_t^h)\right)\\
&-\lambda\left({\mathbf{\Phi}_t^h}^\top\mathbf{\Phi}_t^h+\lambda I\right)^{-1}\opt^h\\
\end{align*}
therefore, for any vector ${\zeta^h}\in\Re^d$
\begin{align*}
{\zeta^h}^\top\wh\omega_t^h-{\zeta^h}^\top\opt^h&={\zeta^h}^\top\left({\mathbf{\Phi}_t^h}^\top\mathbf{\Phi}_t^h+\lambda I\right)^{-1}\left({\mathbf{\Phi}_t^h}^\top\bm{\nu}_t^h\right)\\
&+
{\zeta^h}^\top\left({\mathbf{\Phi}_t^h}^\top\mathbf{\Phi}_t^h+\lambda I\right)^{-1}\left({\mathbf{\Phi}_t^h}^\top(\bm{\wb\nu}_t^h-\bm{\wt\nu}_t^h)\right)\\
&-{\zeta^h}^\top\lambda\left({\mathbf{\Phi}_t^h}^\top\mathbf{\Phi}_t^h+\lambda I\right)^{-1}\opt^h
\end{align*}
As a results, 
applying Cauchy-Schwarz inequality and inequalities $\|\opt^h\|_{{\wo\chi_t^h}^{-1}}\leq \frac{1}{\lambda\left(\wo\chi_t^h\right)}\|\opt^h\|_2\leq \frac{1}{\lambda}\|\opt^h\|_2$ we get

\begin{align*}
|{\zeta^h}^\top\wh\omega_t^h-{\zeta^h}^\top\opt^h|&\leq\|{\zeta^h}\|_{{\wo\chi_t^h}^{-1}}\|{\mathbf{\Phi}_t^h}^\top\bm{\nu}^h_t\|_{{\wo\chi_t^h}^{-1}}
+
\|{\zeta^h}\|_{{\wo\chi_t^h}^{-1}}\|{\mathbf{\Phi}_t^h}^\top(\bm{\wb\nu}^h_t
 -\bm{\wt\nu}^h_t)\|_{{\wo\chi_t^h}^{-1}}
+\lambda\|{\zeta^h}\|_{{\wo\chi_t^h}^{-1}}\|\opt^h\|_{{\wo\chi_t^h}^{-1}}\\
&\leq \|{\zeta^h}\|_{{\wo\chi_t^h}^{-1}}\left(\|{\mathbf{\Phi}_t^h}^\top\bm{\nu}_t^h\|_{{\wo\chi_t^h}^{-1}}+\|{\mathbf{\Phi}_t^h}^\top(\bm{\wb\nu}_t^h-\bm{\wt\nu}^h_t)\|_{{\wo\chi_t^h}^{-1}}+\lambda^{1/2}\|\opt^h\|_2\right)
\end{align*}
where applying self normalized Lemma \ref{lem:self}, for all $h$ with probability at least $1-H\delta$
\begin{align*}
|{\zeta^h}^\top\wh\omega_t^h-{\zeta^h}^\top\opt^h|\leq \|{\zeta^h}\|_{{\wo\chi_t^h}^{-1}}\left(2\sigma\log\left(\frac{\det\left(\wo\chi_t^h\right)^{1/2}}{\det\left(\wt\chi\right)^{1/2}}\right)+\lambda^{1/2}L_\omega+\|{\mathbf{\Phi}_t^h}^\top(\bm{\wb\nu}_t^h-\bm{\wt\nu}^h_t)\|_{{\wo\chi_t^h}^{-1}}\right)
\end{align*}
hold for any ${\zeta^h}$. By plugging in ${\zeta^h}=\wo\chi_t^h\left(\wh\omega_t^h-\opt^h\right)$ %we get the statement in the Lemma \ref{lem:conf}.
we get the following;
\begin{align*}
\|\wh\omega_t^h-\opt^h\|_{\wo\chi_t^h}\leq\theta_t^h(\delta)= \sigma\sqrt{2\log\left(\frac{\det(\wo\chi_t^h)^{1/2}\det(\lambda I)^{-1/2}}{\delta}\right)}+\lambda^{1/2}L_\omega+\|{\mathbf{\Phi}_t^h}^\top(\bm{\wb\nu}_t^h-\bm{\wt\nu}^h_t)\|_{{\wo\chi_t^h}^{-1}}
\end{align*}
For $h=H$ the last term in the above equation $\|{\mathbf{\Phi}_t^H}^\top(\bm{\wb\nu}_t^H-\bm{\wt\nu}^H_t)\|_{{\wo\chi_t^H}^{-1}}$ is zero therefore we have;

\begin{align*}
\|\wh\omega_t^H-\opt^H\|_{\wo\chi_t^H}\leq\theta_t^H(\delta)= \sigma\sqrt{2\log\left(\frac{\det(\wo\chi_t^H)^{1/2}\det(\lambda I)^{-1/2}}{\delta}\right)}+\lambda^{1/2}L_\omega
\end{align*}
For $h<H$ we need to account for the bias introduced by $\|{\mathbf{\Phi}_t^h}^\top(\bm{\wb\nu}_t^h-\bm{\wt\nu}^h_t)\|_{{\wo\chi_t^h}^{-1}}$. Due to the pesimism we have
\begin{align*}
\phi(x_t^{h+1},\pi^*(x_t^{h+1}))^\top \opt^{h+1}\geq \phi(x_t^{h+1},\check\pi_t(x_t^{h+1}))^\top \check\omega^{h+1}_{t-1}
\end{align*}
also due to the optimal action of the pesimistic model we have
\begin{align*}
\phi(x_t^{h+1},\pi^*(x_t^{h+1}))^\top \opt^{h+1}-\phi(x_t^{h+1},\check\pi_t(x_t^{h+1}))^\top \check\omega^{h+1}_{t-1}\leq \phi(x_t^{h+1},\pi^*(x_t^{h+1}))^\top \opt^{h+1}-\phi(x_t^{h+1},\pi^*(x_t^{h+1}))^\top \check\omega^{h+1}_{t-1}
\end{align*}
With high probability we have;
\begin{align*}
&\|{\wo\chi_t^{H-1}}^{-1/2}{\mathbf{\Phi}_t^{H-1}}^\top(\bm{\wb\nu}_t^{H-1}-\bm{\wt\nu}^{H-1}_t)\|_2\\
&\leq\|{\wo\chi_t^{H-1}}^{-1/2}{\mathbf{\Phi}_t^{H-1}}^\top\|_2\|(\bm{\wb\nu}_t^{H-1}-\bm{\wt\nu}^{H-1}_t)\|_2\\
&\leq\|(\bm{\wb\nu}_t^{H-1}-\bm{\wt\nu}^{H-1}_t)\|_2\\
&=\left(\sum_i^t\left(\phi(x_i^{h+1},\pi^*(x_i^{h+1}))^\top \opt^{h+1}-\phi(x_i^{h+1},\check\pi_t(x_i^{h+1}))^\top \check\omega^{h+1}_{t-1}\right)^2\right)^{1/2}\\
&\leq\left(\sum_i^t\left(\phi(x_i^{h+1},\pi^*(x_i^{h+1}))^\top \opt^{h+1}-\phi(x_i^{h+1},\pi^*(x_i^{h+1}))^\top \check\omega^{h+1}_{t-1}\right)^2\right)^{1/2}\\
&=\left(\sum_i^t\left(\phi(x_i^{h+1},\pi^*(x_i^{h+1}))^\top \left(\opt^{h+1}-\check\omega^{h+1}_{t-1}\right)\right)^2\right)^{1/2}\\
&=\left(\sum_i^t\left(\phi(x_i^{h+1},\pi^*(x_i^{h+1}))^\top {\wo\chi_t^{h+1}}^{-1/2} {\wo\chi_t^{h+1}}^{1/2}\left(\opt^{h+1}-\check\omega^{h+1}_{t-1}\right)\right)^2\right)^{1/2}\\
&\leq\left(\sum_i^t\left(\|\phi(x_i^{h+1},\pi^*(x_i^{h+1}))\|_{{\wo\chi_t^{h+1}}^{-1}} \theta^{h+1}_t(\delta)\right)^2\right)^{1/2}\\
&=\theta^{h+1}_t(\delta)\left(\sum_i^t\|\phi(x_i^{h+1},\pi^*(x_i^{h+1}))\|_{{\wo\chi_t^{h+1}}^{-1}}^2\right)^{1/2}\\
&\leq\theta^{h+1}_t(\delta)\rho_\lambda^{h+1}
\end{align*}
Therefore
\begin{align*}
\|\wh\omega_t^h-\opt^h\|_{\wo\chi_t^h}\leq\theta_t^h(\delta)= \sigma\sqrt{2\log\left(\frac{\det(\wo\chi_t^h)^{1/2}\det(\lambda I)^{-1/2}}{\delta}\right)}+\lambda^{1/2}L_\omega+\theta^{h+1}_t(\delta)\rho_\lambda^{h+1}
\end{align*}

The $\det(\wo\chi_t^h)$ can be written as $\det(\wo\chi_t^h)=\prod_j^d\alpha_j$ therefore, $trace(\wo\chi_t)=\sum_j^d\alpha_j$. We also know that;
\begin{align*}
    (\prod_j^d\alpha_j)^{1/d}\leq \frac{\sum_j^d\alpha_j}{d}
\end{align*}
A matrix extension to Lemma 10 and 11 in \cite{abbasi-yadkori2011improved} results in  $\det(\wo\chi_t^h)\leq \left(\frac{trace(\wo\chi_t^h)}{d}\right)^d$ while we have $trace(\wo\chi_t^h)=trace(\lambda I)+\sum_i^ttrace\left({\phi^h}_i^\top{\phi^h}_i)\right)\leq d\lambda+tL$, 
\begin{align}\label{eq:deter}
    \det(\wo\chi_t^h)\leq \left(\frac{d\lambda+tL}{d}\right)^d
\end{align}

therefore the main statement of Lemma \ref{lem:conf} goes through.
\end{proof}

% \eb{Wait, I'm confused. I thought we just proved Lemma 3. And the below is not stated as a lemma. Could you clarify please?}

% \begin{lemma}[Determinant inequality]
% For a matrix problem in Eq.~\ref{eq:Bayesian_model}, given a sequence of $\lbrace\mathbf{\Phi}_i,\mathbf{R}_i\rbrace_{i=1}^{t}$, and under $trace{\mathbf{\Phi}(\cdot)\mathbf{\Phi}(\cdot)^\top}\leq L$  we have:
% \begin{align*}
% \det\left(\wo\chi_t\right)\leq (\lambda+tL^2/d)^d\quad,\quad \sum
% \end{align*}
% \end{lemma}

% \textbf{BayesReg}_T:&=\sum_t^T \E_{\mathcal{H}_t}\left[\E_{\omega^*}\left[\phi(x_t^1,\pi_t(x_t^1))^\top \omega(\pi_t)-\phi(x_t^1,\pi_t(x_t^1))^\top \omega^* \big|\mathcal{H}_t\right]\right]\\
% &\sum_t^T \E_{\mathcal{H}_t}\left[\E_{\omega^*}\left[\phi(x_t^1,\pi_t(x_t^1))^\top \left(\wt\omega_t(\pi_t)-\omega^*\right) \big|\mathcal{H}_t\right]\right]
% \end{align*}

\begin{proof}~Lemma~\ref{lemma:DeterminantLemma}
We have the following for the determinant of $\det\left(\wo\chi_T^h\right)$ through first matrix determinant lemma and second Sylvester's determinant identity;

\begin{align*}
\det\left(\wo\chi_T^h\right)&=\det\left(\wo\chi_{T-1}+\phi_T^h{\phi_T^h}^\top\right)\\
&=\det\left(1+{\phi^h_T}^\top{\wo\chi^h_{T-1}}^{-1}\phi^h_T\right)\det\left(\wo\chi^h_{T-1}\right)\\
&=\prod_t^T\det\left(1+\|\phi_t^h\|^2_{{\wo\chi_t^h}^{-1}}\right)\det\left(\wt\chi\right)
\end{align*}
Using the fact that $\log(1+\vartheta)\leq \vartheta$ we have

\begin{align}%\label{eq:xx}
\sum_t^T\log\left(1+\|\phi_t^h\|^2_{{\wo\chi_t^h}^{-1}}\right)\leq \left(\log\left(\det\left(\wo\chi^h_T\right)\right)-\log\left(\det\left(\wt\chi^h\right)\right)\right)\leq  d\log(\lambda + TL^2/d)
\end{align}

and the statement follows.

\end{proof}

\end{document}